%%%%%%%% ICML 2026 EXAMPLE LATEX SUBMISSION FILE %%%%%%%%%%%%%%%%%

\documentclass{article}

% Recommended, but optional, packages for figures and better typesetting:
\usepackage{microtype}
\usepackage{graphicx}
\usepackage{subcaption}
\usepackage{booktabs}       % professional-quality tables
\usepackage{url}            % simple URL typesetting
\usepackage{amsfonts}       % blackboard math symbols
\usepackage{nicefrac}       % compact symbols for 1/2, etc.
\usepackage{natbib}         % for citation commands like \citep and \citet
\usepackage{tikz}
\usetikzlibrary{calc, backgrounds, fit, shapes, positioning, arrows.meta}
\usepackage{xcolor}
\usepackage{lipsum}
\usepackage{titletoc}       % for \startcontents and \printcontents
\usepackage{wrapfig}
\usepackage{tikz}
\usetikzlibrary{arrows.meta} % for Stealth arrow tips

\graphicspath{{./images/}}

% Colors
\usepackage[dvipsnames]{xcolor}
\definecolor{lowcolor}{rgb}{0.43, 0.21, 0.1}
\definecolor{highcolor}{rgb}{0.0, 0.5, 0.0}
\definecolor{softpurple}{RGB}{186, 145, 235}
\definecolor{softgreen}{RGB}{144, 238, 144}

% Math packages
\usepackage{amsmath,amssymb,amsthm}
\usepackage{mathtools}
\usepackage{mathabx}
\usepackage{bbold}
\usepackage{bm}

% Theorems (basic ones used before the main block)
\newtheorem{remark}{Remark}

\DeclareMathOperator*{\argmin}{arg\,min}

% Graphics / TikZ
\usepackage{tikz}
\usetikzlibrary{bayesnet,shapes.geometric,fit,backgrounds}

% TOCs
\usepackage{minitoc}

% Algorithms
% \usepackage{algorithm}
% \usepackage{algpseudocode}
% \algtext*{Until}% Remove "until" text
% \algtext*{EndIf}% Remove "end if" text
\usepackage{algorithmic}

% Tables / lists / multirow
\usepackage{multirow}
\usepackage{enumitem}
\usepackage{booktabs}

% Boxes
\usepackage{tcolorbox}

% References

% Hyperlinks
% \usepackage{hyperref}       % hyperlinks
\usepackage[colorlinks=true, citecolor=darkgreen, linkcolor=black, urlcolor=darkgreen]{hyperref}

% Todonotes is useful during development; simply uncomment the next line
%    and comment out the line below the next line to turn off comments
% \usepackage[disable,textsize=tiny]{todonotes}
\usepackage[textsize=tiny]{todonotes}

% Author comments
% \newcommand*\yf[1]{{\color{purple} [\textbf{Yorgos}: #1]}}
% \newcommand*\td[1]{{\color{blue} [\textbf{Theo}: #1]}}
% \newcommand*\pg[1]{{\color{orange} [\textbf{Paris}: #1]}}

% hyperref makes hyperlinks in the resulting PDF.
% If your build breaks (sometimes temporarily if a hyperlink spans a page)
% please comment out the following usepackage line and replace
% \usepackage{icml2026} with \usepackage[nohyperref]{icml2026} above.
% (hyperref already loaded above with options)

% Attempt to make hyperref and algorithmic work together better:

\newcommand{\AlgComment}[1]{\hfill{\footnotesize\emph{#1}}}

% Use the following line for the initial blind version submitted for review:
 % \usepackage{icml2026}

% For preprint, use
\usepackage[preprint]{icml2026}

% If accepted, instead use the following line for the camera-ready submission:
% \usepackage[accepted]{icml2026}

%%%%%%%%%%%%%%%%%%%%%%%%%%%%%%%%
% THEOREMS
%%%%%%%%%%%%%%%%%%%%%%%%%%%%%%%%
\theoremstyle{plain}
\newtheorem{theorem}{Theorem}[section]
\newtheorem{proposition}[theorem]{Proposition}
\newtheorem{lemma}[theorem]{Lemma}

\theoremstyle{definition}

\newtheorem{defn}[theorem]{Definition}

\theoremstyle{remark}

% The \icmltitle you define below is probably too long as a header.
% Therefore, a short form for the running title is supplied here:
\icmltitlerunning{Distributionally Robust Causal Abstractions}

% Put all your custom macros here

\newcommand{\dom}[1]{\operatorname{dom}[#1]}

\newcommand{\scmsimple}{\mathcal{M}}

\newcommand{\scmbase}{\mathcal{M}^{\ell}}
\newcommand{\scmabst}{\mathcal{M}^{h}}
\newcommand{\mixbase}{\mathbf{L}}
\newcommand{\mixabst}{\mathbf{H}}
\newcommand{\intervsetbase}{\mathcal{I}^{\ell}}
\newcommand{\intervsetabst}{\mathcal{I}^{h}}
\newcommand{\intervsetprod}{\mathcal{I}}

\newcommand{\mixfbase}{\mathbf{g}^\ell}
\newcommand{\mixfabst}{\mathbf{g}^h}

\newcommand{\structfuncset}{\mathcal{F}}
\newcommand{\scmdag}{\mathcal{G}}

\newcommand{\enbase}{\mathcal{X}^{\ell}}
\newcommand{\enabst}{\mathcal{X}^{h}}

\newcommand{\tauu}{\tau_{\mathcal{U}}}
\newcommand{\tauupush}{\tau_{\mathcal{U} \#}}
\newcommand{\taupush}{\tau_{\#}}

\newcommand{\exbase}{\mathcal{U}^{\ell}}
\newcommand{\exabst}{\mathcal{U}^{h}}

\newcommand{\en}{\mathcal{X}}
\newcommand{\ex}{\mathcal{U}}

\newcommand{\stablebase}{\cA^\ell}
\newcommand{\stableabst}{\cA^h}
\newcommand{\stableprod}{\cA}

\newcommand{\tauomega}{(\tau,\omega)}

\newcommand{\rhoiota}{(\rho,\iota)}

%\newcommand{\intervpairs}{\Pi_{\omega}(\intervsetbase)}

% \rst{function to be restricted}{restricted domain}

% \rst{function to be restricted}{restricted domain}

\newcommand{\prob}{\mathbb{P}}\newcommand{\expval}{\mathbb{E}}

\newcommand{\contbase}{\mathbf{u}^L}
\newcommand{\contabst}{\mathbf{u}^H}

\newcommand{\exprobbase}{\rho^\ell}
\newcommand{\exprobabst}{\rho^h}
\newcommand{\exprob}{\boldsymbol{\rho}}

\newcommand{\exprod}{\boldsymbol{\cU}}

\newcommand{\radbase}{\varepsilon_\ell}
\newcommand{\radabst}{\varepsilon_h}
\newcommand{\radprod}{\epsilon}

\newcommand{\lintau}{\operatorname{T}}

% \onevector{length}

% mathcal definitions
\newcommand{\cA}{{\cal A}}

\newcommand{\cD}{{\cal D}}

\newcommand{\cF}{{\cal F}}

\newcommand{\cI}{{\cal I}}

\newcommand{\cN}{{\cal N}}

\newcommand{\cP}{{\cal P}}

\newcommand{\cS}{{\cal S}}

\newcommand{\cU}{{\cal U}}

% mathbb definitions

\newcommand{\bB}{{\mathbb B}}

\newcommand{\bR}{{\mathbb R}}

\newcommand{\mul}{\mathbf{\mu}_\ell}
\newcommand{\muh}{\mathbf{\mu}_h}
\newcommand{\sigl}{\Sigma_\ell}
\newcommand{\sigh}{\Sigma_h}

\newcommand{\absp}{\text{AbsLin}_\texttt{p}}
\newcommand{\absn}{\text{AbsLin}_\texttt{n}}

\begin{document}

\twocolumn[
  \icmltitle{Distributionally Robust Causal Abstractions}

  % It is OKAY to include author information, even for blind submissions: the
  % style file will automatically remove it for you unless you've provided
  % the [accepted] option to the icml2026 package.

  % List of affiliations: The first argument should be a (short) identifier you
  % will use later to specify author affiliations Academic affiliations
  % should list Department, University, City, Region, Country Industry
  % affiliations should list Company, City, Region, Country

  % You can specify symbols, otherwise they are numbered in order. Ideally, you
  % should not use this facility. Affiliations will be numbered in order of
  % appearance and this is the preferred way.
  \icmlsetsymbol{equal}{*}

  \begin{icmlauthorlist}
    \icmlauthor{Yorgos Felekis}{cs}
    \icmlauthor{Theodoros Damoulas}{cs,stat}
    \icmlauthor{Paris Giampouras}{cs}
  \end{icmlauthorlist}

  \icmlaffiliation{cs}{Department of Computer Science, University of Warwick, Coventry, UK}
  \icmlaffiliation{stat}{Department of Statistics, University of Warwick, Coventry, UK}

  \icmlcorrespondingauthor{Yorgos Felekis}{yorgos.felekis@warwick.ac.uk}

  % You may provide any keywords that you find helpful for describing your
  % paper; these are used to populate the "keywords" metadata in the PDF but
  % will not be shown in the document
  \icmlkeywords{Machine Learning, ICML}

  \vskip 0.3in
]

% this must go after the closing bracket ] following \twocolumn[ ...

% This command actually creates the footnote in the first column listing the
% affiliations and the copyright notice. The command takes one argument, which
% is text to display at the start of the footnote. The \icmlEqualContribution
% command is standard text for equal contribution. Remove it (just {}) if you
% do not need this facility.

% Use ONE of the following lines. DO NOT remove the command.
% If you have no special notice, KEEP empty braces:
\printAffiliationsAndNotice{}  % no special notice (required even if empty)
% Or, if applicable, use the standard equal contribution text:
% \printAffiliationsAndNotice{\icmlEqualContribution}

\begin{abstract}
Causal Abstraction (CA) theory provides a principled framework for relating causal models that describe the same system at different levels of granularity while ensuring interventional consistency between them. Recent methods for learning CAs, however, assume fixed and well-specified exogenous distributions, leaving them vulnerable to environmental shifts and model misspecification. In this work, we address these limitations by introducing the first class of distributionally robust CAs and their associated learning algorithms. The latter cast robust causal abstraction learning as a constrained min-max optimization problem with Wasserstein ambiguity sets. We provide theoretical guarantees for both empirical and Gaussian environments, enabling principled selection of ambiguity-set radii and establish quantitative guarantees on worst-case abstraction error. Furthermore, we present empirical evidence across different problems and CA learning methods, demonstrating our framework’s robustness not only to environmental shifts but also to structural and intervention mapping misspecification.
\end{abstract}

\begin{figure*}[t]
\centering
% --- Local Definitions for Colors ---
\definecolor{myBlue}{RGB}{60, 100, 190}
\definecolor{myOrange}{RGB}{240, 100, 20}
\definecolor{myMagenta}{RGB}{200, 0, 100}
\definecolor{myTeal}{RGB}{0, 150, 130}
\definecolor{myYellow}{RGB}{210, 145, 15}
% \definecolor{myYellow}{RGB}{190, 130, 10}

\begin{tikzpicture}[
    scale=0.7, 
    transform shape, 
    font=\sffamily,
    >={Stealth[round]}
]

    % === CONFIGURATION ===
    % Change this value to squeeze or expand vertical space
    \def\HighLevelY{2.9} 

    % ========================================================
    % LEFT PANEL: CAUSAL ABSTRACTION THEORY
    % ========================================================

    \coordinate (HighCenter) at (0, \HighLevelY);
    \coordinate (LowCenter) at (0, 0);

    % --- 1. THE TUBES (Background Fills) ---
    % Drawn first to be at the very back.
    
    % A. Outer Tube (Uniform/Infinity) - Background Fill
    \fill[myYellow!10] 
        (-2.8, \HighLevelY) to[out=260, in=100] (-3.4, 0.2) -- 
        (3.2, 0.4) to[out=80, in=280] (2.8, \HighLevelY) -- cycle;
        
    % B. Inner Tube (Robust/Restricted) - Background Fill
    \fill[myMagenta!15] 
        (-1.4, \HighLevelY) to[out=265, in=95] (-1.8, 0.0) -- 
        (1.8, 0.0) to[out=85, in=275] (1.4, \HighLevelY) -- cycle;

    % --- 2. LOW LEVEL (Concrete SCM) - BLUE ---
    % Drawn next so it's in front of the tube fills but behind the arrows.
    
    % A. P_infinity (Low)
    \fill[myBlue!20, opacity=0.8] 
        plot[smooth cycle, tension=0.7] coordinates {
            ($(LowCenter)+(-3.2, 0.5)$)
            ($(LowCenter)+(-3.5, -0.8)$)
            ($(LowCenter)+(-1.0, -1.5)$)
            ($(LowCenter)+(2.8, -1.2)$)
            ($(LowCenter)+(3.2, 0.8)$)
            ($(LowCenter)+(1.0, 1.4)$)
            ($(LowCenter)+(-1.5, 1.0)$)
        };
    
    % B. P_m (Low)
    \fill[myBlue!30] 
        plot[smooth cycle, tension=0.6] coordinates {
            ($(LowCenter)+(-1.8, 0.1)$)
            ($(LowCenter)+(-0.9, -0.6)$)
            ($(LowCenter)+(1.8, -0.1)$)
            ($(LowCenter)+(0.9, 0.6)$)
        };
    \draw[myBlue, line width=1.2pt] 
        plot[smooth cycle, tension=0.6] coordinates {
            ($(LowCenter)+(-1.8, 0.1)$)
            ($(LowCenter)+(-0.9, -0.6)$)
            ($(LowCenter)+(1.8, -0.1)$)
            ($(LowCenter)+(0.9, 0.6)$)
        };
    
    % --- 3. MAPPING ARROWS ON TUBE EDGES ---
    % Drawn here to be in front of the low level blobs.
    
    % Outer Tube Arrows (Yellow)
    \draw[->, myYellow, line width=1pt] (-3.4, 0.2) to[out=100, in=260] (-2.95, 2.55);
    \draw[->, myYellow, line width=1pt] (3.27, 0.6) to[out=80, in=280] (2.8, \HighLevelY);

    % Inner Tube Arrows (Magenta)
    \draw[->, myMagenta, line width=0.8pt] (-1.86, 0.0) to[out=95, in=265] (-1.4, \HighLevelY);
    \draw[->, myMagenta, line width=0.8pt] (1.86, 0.0) to[out=85, in=275] (1.4, \HighLevelY);

    % --- 4. HIGH LEVEL (Abstract SCM) - ORANGE ---
    % Drawn last among the shapes to be at the front, covering the top of the arrows.
    
    % A. P_infinity (High)
    \fill[myOrange!20, opacity=0.8] 
        plot[smooth cycle, tension=0.7] coordinates {
            ($(HighCenter)+(-2.8, 0.6)$)
            ($(HighCenter)+(-3.0, -0.4)$)
            ($(HighCenter)+(-1.5, -1.0)$)
            ($(HighCenter)+(1.8, -0.6)$) 
            ($(HighCenter)+(2.8, 0.5)$)
            ($(HighCenter)+(0.5, 1.0)$)   
            ($(HighCenter)+(-1.0, 0.8)$)
        };

    % B. P_m (High)
    \fill[myOrange!30] 
        plot[smooth cycle, tension=0.6] coordinates {
            ($(HighCenter)+(-1.4, 0)$)
            ($(HighCenter)+(-0.6, -0.5)$)
            ($(HighCenter)+(1.4, 0)$)
            ($(HighCenter)+(0.6, 0.5)$)
        };
    \draw[myOrange, line width=1.2pt] 
        plot[smooth cycle, tension=0.6] coordinates {
            ($(HighCenter)+(-1.4, 0)$)
            ($(HighCenter)+(-0.6, -0.5)$)
            ($(HighCenter)+(1.4, 0)$)
            ($(HighCenter)+(0.6, 0.5)$)
        };

    % --- 5. POINTS AND LABELS ---
    % These should be on top of everything else.
    
    % High Level Labels and Point
    \node[myOrange!100, font=\bfseries\footnotesize, anchor=south] at (1.9, 3.15) {$\mathcal{P}_{m}(\mathcal{U}^h)$};
    \node[circle, fill=myOrange, inner sep=2pt] (HighDot) at (HighCenter) {};
    \node[myOrange, font=\bfseries\footnotesize] at (0.7, 3.1) {$\mathcal{P}_0(\mathcal{U}^h)$};
    
    % Low Level Labels and Point
    \node[myBlue!100, font=\bfseries\footnotesize, anchor=north] at (2.3, 0.0) {$\mathcal{P}_{m}(\mathcal{U}^\ell)$};
    \node[circle, fill=myBlue, inner sep=2pt] (LowDot) at (LowCenter) {};
    \node[myBlue, font=\bfseries\footnotesize, anchor=north] at (0.75, 0.25) {$\mathcal{P}_0(\mathcal{U}^\ell)$};

    % --- 6. MAPPINGS & CITATIONS (Left Side) ---
    
    % Exact Connection Arrow
    \draw[->, myTeal, line width=1.5pt] (LowDot) -- (HighDot);
    
    % Citations
    % 1. Uniform (Top)
    \node[myYellow!100, anchor=east, align=right, font=\small] (citUniform) at (-3.8, \HighLevelY - 0.2) {
        \textbf{Uniform} $(\tau, \omega)$\\
        \citet{beckers2018abstracting}
    };
    \draw[->, dashed, myYellow, opacity=0.7, line width=1pt] (citUniform.east) to[bend left=10] (-3.15, 1.9);

    % 2. Robust (Middle)
    \node[myMagenta, anchor=east, align=right, font=\small] (citRobust) at (-4, \HighLevelY/2) {
        \textbf{Robust} $(\rho, \iota)$\\
        \textbf{This work}
    };
    \draw[->, dashed, myMagenta!80, opacity=0.7, line width=1pt] (citRobust.east) -- (-1.7, \HighLevelY/2);

    % 3. Exact (Bottom)
    \node[myTeal, anchor=east, align=right, font=\small] (citExact) at (-4, 0.2) {
        \textbf{Exact} $(\tau, \omega)$ / $(R,a,\alpha)$\\
        \citet{rubenstein2017causal}\\
        \citet{rischel2020category}
    };
    \draw[->, dashed, myTeal, opacity=0.7, line width=1pt] (citExact.east) to[bend right=10] (-0.05, \HighLevelY/3);

    % Headers
    % \node[font=\bfseries \normalsize] at (0, \HighLevelY + 1.3) {Causal Abstractions (CAs)};
    \node[font=\bfseries, myBlue!60] at (3.75, -1) {$\mathcal{P}_{\infty}(\mathcal{U}^\ell)$};
    \node[font=\bfseries, myOrange!60] at (3.5, 3.5) {$\mathcal{P}_{\infty}(\mathcal{U}^h)$};

    % Divider
    \draw[dashed, line width=0.8pt, black!50] (4.5, -1.45) -- (4.5, \HighLevelY + 1.0);

    % Environmental Strength Arrow
    \node (toparr) at (-7.5, \HighLevelY) {};
    \node (botarr) at (-7.5,-0.3) {};
    \draw[->, line width=1pt] (botarr) -- (toparr);
    \node[rotate=90] at (-7.9, \HighLevelY/2) {\footnotesize Environmental Strength};

    \node at (-9.5, \HighLevelY/2) {\bfseries \normalsize Causal};
    \node at (-9.5, \HighLevelY/2 - 0.5) {\bfseries \normalsize Abstractions};
    \node at (-9.5, \HighLevelY/2 - 1.0) {\bfseries \normalsize (CAs)};

    \node at (12.8, \HighLevelY/2) {\bfseries \normalsize CA};
    \node at (12.8, \HighLevelY/2- 0.5) {\bfseries \normalsize Learning};
    \node at (12.8, \HighLevelY/2- 1.0) {\bfseries \normalsize (CAL)};

% ========================================================
    % RIGHT PANEL: CA Learning (CAL)
    % ========================================================
    % Recalculate center based on new height to keep aligned
    % Moved X from 9 to 8.2 to bring it closer to the dashed divider
    \coordinate (RightCenter) at (8.2, \HighLevelY/2 + 0.5);

    % Header - Aligned with left header (shifted x)
    % \node[font=\bfseries \normalsize] at (8.2, \HighLevelY + 1.3) {CA Learning (CAL)};

    % Background Box - Shifted left to (5.0 to 11.4)
    \draw[myYellow!5, fill=myYellow!10, rounded corners=8pt] (5.0, -1.5) rectangle (11.4, \HighLevelY + 1.0);
    \node[myYellow, anchor=north east, font=\large] at (11.4, \HighLevelY + 0.9) {$\mathcal{P}(\mathcal{U}^\ell \times \mathcal{U}^h)$};

    % 1. The Empirical Center (Teal Dot)
    \node[circle, fill=myTeal, inner sep=2.5pt] (Empirical) at (RightCenter) {};
    \node[myTeal, font=\footnotesize, anchor=north] at ($(RightCenter)+(0,-0.3)$) {$\widehat{\mathcal{P}_0(\mathcal{U})}$};
    
    % NEW: The True Center (Magenta Dot)
    % Added a physical dot node before the text label
    \coordinate (TrueLoc) at ($(RightCenter)+(-0.5, 0.5)$);
    \node[star, fill=myMagenta!90, inner sep=2.5pt] (TrueDot) at (TrueLoc) {};
    \node[magenta!90, font=\footnotesize, anchor=east] at ($(TrueLoc)+(0.6, 0.4)$) {$\mathcal{P}_0(\mathcal{U})$};
      
    % 2. The Wasserstein Ball
    \draw[myMagenta, thick, dashed] (RightCenter) circle (1.5cm); 
    \fill[myMagenta, opacity=0.15] (RightCenter) circle (1.5cm);
    \node[myMagenta!80!black, font=\bfseries\Large] at ($(RightCenter)+(135:1.9)$) {$\mathbb{B}_\epsilon$};
    \draw[->, black!70] (RightCenter) -- node[above, font=\large] {$\epsilon$} ++(20:1.5cm);

    % Citations Right
    % Shifted x position to align with new center
    \node[align=left, font=\scriptsize, anchor=west] (calStandard) at (8.3, -0.5) {
        \citet{zennaro2023jointly}\\
        \citet{felekis2024causal}\\
        \citet{kekić2023targeted}\\
        \citet{xia2024neural}\\
        \citet{dacunto2025causalabstractionlearningbased}
    };
    \draw[->, myTeal, line width=1pt] (calStandard.north) to[bend right=20] ($(RightCenter)+(0.2,-0.2)$);

    \node[myMagenta, align=left, font=\footnotesize, anchor=west] (calRobust) at (5.4, -0.4) {
        \textbf{This work}
    };
    
    % Updated coordinate to point deeper INSIDE the ball
    \draw[->, myMagenta, line width=1pt] (calRobust.north) to[bend left=30] ($(RightCenter)+(-0.8,-0.3)$);
    
    \node[myTeal, font=\large] at ($(RightCenter)+(2.2, -1.0)$) {$\epsilon=0$};
    \node[myMagenta, font=\large] at ($(RightCenter)+(-2.4, -1.5)$) {$\epsilon \geq 0$};

\end{tikzpicture}
\caption{\textbf{Left:} Hierarchy of CAs based on the \textit{environmental assumptions} required for the abstraction to be valid. {\color{myTeal} Exact CAs} assume consistency in a single joint environment,  {\color{myYellow} Uniform CAs} require consistency across all possible joint environments while our framework of {\color{myMagenta}Robust CAs} requires the mapping to hold over a constrained subset of relevant joint environments. \textbf{Right:} CAL methods positioned relative to this hierarchy on the joint environment {\color{myYellow}$\cP(\exbase \times \exabst)$}. Our framework models environmental uncertainty over a subset of this using a Wasserstein ball {\color{myMagenta}$\mathbb{B}_{\epsilon}$} with {\color{myMagenta}$\epsilon \geq 0$}, around an empirical joint environment {\color{myTeal}$\widehat{\mathcal{P}_0(\mathcal{U})}$}. Prior approaches correspond to {\color{myTeal}$\epsilon=0$}, implicitly assuming a fixed environment to learn an approximate CA.}
\label{fig:casoverview}
\end{figure*}

\section{Introduction}\label{sec:main_intro}
Causal reasoning provides the foundation for understanding and influencing complex systems: it enables us to move beyond correlation to estimate the effects of interventions, simulate counterfactual scenarios, and uncover the underlying mechanisms that drive a system’s behavior. These capabilities are central to out-of-distribution generalization and to building interpretable, fair, and socially aligned machine learning systems. Motivated by this, Causal Representation Learning \citep{scholkopf2021toward} seeks latent representations that both compress data and preserve the causal structure needed for robust reasoning. This challenge is particularly apparent in systems that naturally operate at multiple levels of abstraction, such as cells versus organs, neurons versus brain regions, or individuals versus populations, where causal relationships may differ across scales. While fine-grained models are often more expressive, they tend to be opaque and computationally expensive; more abstract models are interpretable and efficient, but only faithful if they preserve the causal semantics of the systems they simplify. This trade-off motivates the need for principled causal reasoning across abstraction levels, allowing reliable reasoning at both detailed and aggregate views of a system.

The theory of Causal Abstractions (CAs) addresses this challenge by formalizing causally consistent models at different granularities while also characterizing the maps that relates them. Two main frameworks have shaped this space: $\tau\text{-}\omega$ abstractions \citep{rubenstein2017causal, beckers2018abstracting}, which define a single mapping between model domains, and $(R, a, \alpha)$ abstractions \citep{rischel2020category}, which separate mappings between variables and their domains. Their relation was recently analyzed in \citep{schooltink2025aligninggraphicalfunctionalcausal}. Building on this, Causal Abstraction Learning (CAL) \citep{zennaro2022computingoptimalabstractionstructural} aims to learn abstraction maps directly from data, enabling cross-scale representation learning, evidence integration from heterogeneous sources, and more efficient modeling.

Early CAL methods established distinct computational frameworks: joint differentiable programming for $(R, a, \alpha)$ abstractions \citep{zennaro2023jointly} and multi-marginal Optimal Transport for the $\tau\text{-}\omega$ framework \citep{felekis2024causal}, followed by several approaches, see Fig. \ref{fig:casoverview}, for learning approximate abstractions \citep{dacunto2025causalabstractionlearningbased, kekić2023targeted, xia2024neural}. Parallel efforts have applied CAs to diverse domains, ranging from climate modeling \citep{chalupka2017causal} and neural network interpretability \citep{geiger2021causal} to agent-based modeling \citep{dyer2023interventionally}, bandits \citep{zennarobandits}, and causal discovery \citep{massidda2024learningcausalabstractionslinear}. A key challenge in CAL is understanding how the environmental conditions, i.e. the exogenous distributions under which an abstraction is learned, influence its generalization and causal consistency properties. While exact abstractions \citep{rubenstein2017causal, rischel2020category} assume a fixed environment without explicitly addressing how the abstraction depends on it, \citet{beckers2018abstracting} propose a notion of abstraction uniformity that requires causal consistency to hold across \emph{all} infinitely many environments; see Fig. \ref{fig:casoverview}. The latter corresponds to a mapping that captures a more fundamental relationship between models: that of their respective deterministic parts; i.e., causal bases (see Definition~\ref{def:scm}). While theoretically stronger, learning such an abstraction is computationally infeasible requiring data from infinite, potentially unattainable environments due to physical or practical constraints.

In this work, we address this challenge by introducing a framework of \emph{robust causal abstractions} that require consistency across a constrained set of relevant environments, providing a notion theoretically stronger than exact abstractions (fixed environment) yet more flexible and tractable than uniform ones (all environments). In the finite sample case, our approach links the level of modeled uncertainty to the strength of the learned abstraction: the broader the ambiguity set, the more powerful the abstraction it approximates. Overall, we make the following contributions: \textbf{(a)} We introduce the first class of \emph{robust causal abstractions}, called \(\rhoiota\)\text{-abstractions}, that requires consistency across a constrained set of environments and \textbf{(b)} propose \emph{Distributionally Robust Causal Abstractions} (\textsc{DiRoCA}), the first framework for learning robust to environmental shifts and misspecifications CAs in Additive Noise SCMs via distributionally robust optimization over Wasserstein ambiguity sets; \textbf{(c)} we derive theoretical concentration bounds to guide ambiguity radius selection and provide provable robustness guarantees; \textbf{(d)} we demonstrate the effectiveness of \textsc{DiRoCA} across diverse problems and settings, outperforming existing methods and baselines under environmental shifts, and different types of misspecifications.

\section{Background}\label{sec:main_back}
We work with \citet{pearl2009causality}'s Structural Causal Models framework, where each variable is a function of its direct causes and the exogenous noise.
\begin{defn}[Structural Causal Model]\label{def:scm}
    A $d$-dimensional \emph{Structural Causal Model (SCM)} is a pair $\scmsimple^d \coloneq (\cS^d, \rho^d)$, where $\cS^d = \langle \en, \ex, \structfuncset \rangle$ defines the deterministic \emph{causal basis}, consisting of a set of $d$ \emph{endogenous variables} $\en$, a set of $d$ \emph{exogenous variables} $\ex$ and a set of $d$ \emph{structural functions} $\structfuncset$, each defining the value of an endogenous variable as $X_i=f_i(\operatorname{PA}(X_i), U_i) \quad \forall \; i \in [d]$, where $\operatorname{PA}(X_i) \subseteq \en \setminus {X_i}$ denotes the direct causes (parents) of $X_i$. The \emph{environment} $\rho^d$ is a joint probability distribution over $\ex$.
\end{defn}

\vspace{-0.5em}
\textbf{Causal Assumptions.} We focus on \emph{Markovian} SCMs, which entail two key properties: (1) \emph{Acyclicity}, meaning $\scmsimple^d$ entails a directed acyclic graph (DAG) $\scmdag_{\scmsimple^d}$ whose nodes are the endogenous variables $\en$ and and whose edges follow the signatures of the functions $f_i$ in $\structfuncset$; and (2) \emph{Joint Independence} of the exogenous variables, i.e., $\rho^d = \prod_{i=1}^d \mathbb{P}(U_i)$. This independence implies \emph{causal sufficiency}, ensuring that there are no unobserved confounders interacting with the system. Furthermore, we assume \emph{faithfulness}, meaning that independencies in the data are captured in the graphical model. Acyclicity allows us to recursively compose the structural functions into a single deterministic map $\mathbf{g}: \dom{\ex} \to \dom{\en}$, referred to as the \emph{mixing function}. This defines the SCM’s \emph{reduced form} $\en = \mathbf{g}(\ex)$, where endogenous variables are expressed purely in terms of exogenous noise. Consequently, the induced distribution is the pushforward $\mathbb{P}_{\scmsimple^d}(\en) = \mathbf{g}_{\#}(\rho^d)$, making explicit the generative process by which exogenous uncertainty propagates through the model. We specifically consider \emph{Additive Noise Models (ANMs)}, where structural assignments take the form $X_i = f_i(\mathrm{PA}(X_i)) + U_i$ for $\forall i \in [d]$. This yields the general decomposition $\en = \cD + \cU$, where $\cD=(f_i(\mathrm{PA}(X_i)))_{i=1}^d$ is the deterministic part and $\cU=(U_i)_{i=1}^d$ is the stochastic part (see Appendix~\ref{sec:anmapp}). A special case is the \emph{Linear ANMs (LANs)}, where $\en = \mathbf{B}^\top \en + \ex$. Due to acyclicity, $\mathbf{B}$ is a weighted adjacency matrix (permutable to strictly upper triangular), and the reduced form becomes $\en = \mathbf{M} \ex$, with mixing matrix $\mathbf{M} \coloneq (I - \mathbf{B}^\top)^{-1}$.
\newline
\textbf{Interventions.} SCMs facilitate reasoning about \emph{interventions}. An exact intervention $\iota = \text{do}(\mathcal{A} = \textbf{a})$ fixes variables $\mathcal{A} \subseteq \en$ to values $\textbf{a}$, while allowing the rest of the system to evolve as usual. Graphically, this \emph{mutilates} $\scmdag_{\scmsimple^d}$ by removing incoming edges to $\mathcal{A}$. This yields a post-interventional SCM $\scmsimple^d_{\iota}$ with joint distribution $\prob_{\scmsimple^d_\iota}(\en)$.
\newline
\textbf{Causal Abstractions.} Causal Abstraction (CA) theory formalizes the relation between SCMs defined at different levels of granularity by requiring \emph{interventional consistency} between them. This enables flexible shifts in the level of representation used for causal reasoning, depending on the specific inquiry or the nature of the available data.

\begin{defn}[\citep{rubenstein2017causal}]
    Let SCMs $\scmbase$ and $\scmabst$ with fixed respective environments $\exprobbase$, $\exprobabst$ and intervention sets $\intervsetbase$, $\intervsetabst$ and a surjective and order-preserving map $\omega: \intervsetbase \to \intervsetabst$. An abstraction $\tau: \dom{\enbase} \to \dom{\enabst}$, with $\ell \geq h$, is called an \emph{exact transformation} if:
\begin{align}
    \tau_{\#}(\prob_{\scmbase_\iota}(\enbase)) = \prob_{\scmabst_{\omega(\iota)}}(\enabst),~~ \forall~\iota \in \intervsetbase
\end{align}
\end{defn}
This is illustrated in Fig.~\ref{fig:tau_approx} by comparing the {\color{teal}intervene $\rightarrow$ abstract} and {\color{violet}abstract $\rightarrow$ intervene} paths; their mismatch defines what is known as the \emph{abstraction error}.

\section{Distributionally Robust Causal Abstractions}\label{sec:main_diroca}
In this section, we introduce a new class of CAs that extends exact transformations by ensuring interventional consistency of the two models across a set of environments. \emph{This framework fills the gap between exact and uniform abstractions by introducing a stronger notion of abstraction that is environment-aware, enabling robustness to environmental misspecification and shifts during learning}. We explicitly model the uncertainty inherent in each SCM through its corresponding environment: $\exprobbase$ for the low-level model $\scmbase$ and $\exprobabst$ for the high-level model $\scmabst$. Assuming independence between them, we define the \emph{joint environment} $\exprob = \exprobbase \otimes \exprobabst$, a product measure that captures the uncertainty across abstraction levels. Formally, $\exprob \in \mathcal{P}(\exprod)$, where $\exprod = \dom{\exbase} \times \dom{\exabst}$ denotes the product measure space of the exogenous domains. In practice,  the realization of certain environments is infeasible or unrealistic. Hence, we introduce the notion of \emph{relevant environments}, denoted $\stablebase \subseteq \cP(\exbase)$ and $\stableabst\subseteq \cP(\exabst)$ for the low- and high-level SCMs, respectively. Accordingly, we define the \emph{relevant joint environment space} as $\stableprod= \stablebase \otimes \stableabst$, which represents all joint distributions formed by pairing any $\exprobbase \in \stablebase$ with any $\exprobabst \in \stableabst$. This mirrors the concept of \emph{relevant interventions} $\cI$ that is a partially ordered set that restricts attention to interventions that are semantically meaningful or practically implementable. Furthermore, to align interventions across abstraction levels, we assume the existence of a surjective and order-preserving map $\omega:\intervsetbase \to \intervsetabst$. To ensure consistent alignment of joint interventions across levels, we define $\intervsetprod =  \{(\iota, \omega(\iota)) \mid \iota \in \intervsetbase\}$, pairing each low-level intervention with its high-level counterpart. Finally, we define the \emph{abstraction context} as the pair $(\stableprod, \intervsetprod)$, which specifies the set of interventions and environments over which an abstraction is evaluated. Building on this setup, we introduce a refined notion of abstraction that operates on the reduced form of the SCM, making explicit the dependence of the abstraction on both the environment spaces and the intervention sets associated with the low- and high-level models.

\begin{defn}[$\rhoiota$-Abstraction]\label{def:rhoiota}
Let $(\stableprod, \intervsetprod)$ be an abstraction context, where $\scmbase = (\cS^\ell, \exprobbase)$, $\scmabst = (\cS^h, \exprobabst)$ the low-level and high-level SCMs with $\ell \ge h$, and let $\mixfbase$ and $\mixfabst$ denote the respective mixing functions of their reduced forms. Then a map $\tau: \dom{\enbase} \to \dom{\enabst}$ is called a \emph{$\rhoiota$-abstraction} if, for all $\exprob=\exprobbase\otimes\exprobabst \in \stableprod$ and for all $\boldsymbol{\iota} = (\iota,\omega(\iota)) \in \intervsetprod$:
\begin{equation}\label{eq:tau_rho_def}
    \tau_{\#} \left(\mixfbase_{\iota \#}(\exprobbase)\right) = \mixfabst_{\omega(\iota)\#}(\exprobabst)
\end{equation}
\end{defn}
A $\rhoiota$-abstraction ensures commutativity between interventions and abstractions under a given joint environment $\exprob = \textcolor{blue}{\exprobbase} \otimes \textcolor{orange}{\exprobabst}$: {\color{teal} intervening via $\iota$ and then abstracting via $\tau$} yields the same result as {\color{violet} abstracting first and then intervening via $\omega(\iota)$} $\forall~ \iota \in \intervsetbase$. In general, we say that $\scmabst$ and $\scmbase$ are \emph{$\rhoiota$-interventionally consistent} if there exists an abstraction context $(\stableprod, \intervsetprod)$ such that Eq.~\ref{eq:tau_rho_def} holds. Furthermore, while the underlying SCMs may be non-linear, we focus on the case of \emph{linear abstractions} \citep{massidda2024learningcausalabstractionslinear} between them, an important and well-behaved class of abstractions according to which the abstraction map can be represented through a matrix $\lintau \in \bR^{h\times \ell}$ and thus the map $\tau$ can be represented as a matrix-vector multiplication $\tau(x) = \lintau x~\in X^h,~x \in X^\ell$.
\begin{figure}[htp]
    \centering
    \resizebox{\columnwidth}{!}{%
    \begin{tikzpicture}[->, thick]

        % ========================================================
        % START: NEW BACKGROUND LAYERS
        % ========================================================
        % Top Region: Low Level (Concrete)
        % Coordinates cover the top half of nodes (A, B, D, E)
        \fill[blue!5] (-15, -1.5) rectangle (10, -5.2);
        \node[blue, anchor=east, font=\bfseries\huge] 
            at (-15, -3.75) {\huge  $\scmbase$};

        % Bottom Region: High Level (Abstract)
        % Coordinates cover the bottom half of nodes (C, F, G)
        \fill[orange!5] (-15, -5.2) rectangle (10, -8.5);
        \node[orange, anchor=east, font=\bfseries\huge] 
            at (-15, -6.85) {\huge $\scmabst$};

        % % Dashed separation line
        % \draw[gray!30, dashed, line width=2pt] (-14.5, -5.2) -- (9.5, -5.2);
        % ========================================================
        % END: NEW BACKGROUND LAYERS
        % ========================================================

        \node[blue, opacity=0.8] (A) at (-4.2, -3) { \huge $\exprobbase$};
        \node[orange, opacity=0.8] (B) at (-3.2, -3.7) { \huge $\exprobabst$};
        \node[blue] (D) at (-12.2, -4.5) { \Huge $\prob_{\scmbase}$};
        \node[blue] (E) at (4, -4.5) {\Huge $\prob_{\scmbase_\iota}$};
        \node[orange] (C) at (-8.2, -7) {\Huge $\prob_{\scmabst}$};
        \node[orange] (F) at (2, -7) {\Huge $\prob_{\scmabst_{\omega(\iota)}}$};
        \node[orange] (G) at (7.5, -7) {\Huge $\taupush(\prob_{\scmbase_{\iota}})$};

        % joint environment symbol
        \node[opacity=0.9] at (-3.8, -3.5) {\huge $\otimes$};

        \draw[blue, opacity=0.5, bend right=10] (A) to (D);
            \node[blue, opacity=0.6] at (-10.5, -3.4) {\huge $\mixfbase_{\#}$};
        \draw[blue, opacity=0.5, bend left=10] (A) to (E);
            \node[blue, opacity=0.6] at (2.4, -3.4) {\huge $\mixfbase_{\iota\#}$};
        \draw[orange, opacity=0.5, bend right=20] (B) to (C);
            \node[orange, opacity=0.6] at (-7.65, -5.5) {\huge $\mixfabst_{\#}$};
        \draw[orange, opacity=0.5, bend left=20] (B) to (F); 
            \node[orange, opacity=0.6] at (1.9, -5.5) {\huge $\mixfabst_{\omega(\iota )\#}$};      

        \node[violet] at (-10.7, -5.9) {\huge $\mathbf{\taupush}$};
        \node[teal]   at (5.9, -5.5)  {\huge $\mathbf{\taupush}$};
        
        \draw[teal,  line width=2.0pt]   (E) to (G);
        \draw[violet,  line width=2.0pt] (D) to (C);

        \node[teal] (i) at (-7.4, -4.15) {\Huge $\iota$};
        \draw[teal,  line width=2.0pt] (D) to (E);
        
        \draw[violet,  line width=2.0pt] (C) to node[above]{\Huge $\omega(\iota)$} (F);
        \draw[dotted,|-|,  line width=2.0pt] (G) to node[midway, above]{\huge $ e_{\tau}^{\exprob, \iota}$} (F);

    \end{tikzpicture}%
    }
    \caption{\emph{Computation of the environment–intervention error.} The joint environment $\exprob = {\color{blue}\exprobbase} \otimes {\color{orange}\exprobabst}$ captures the combined uncertainty from the {\color{blue}low}- and {\color{orange}high}-level SCMs. By pushing forward these components through the reduced forms $\mixfbase$ and $\mixfabst$ of the respective SCMs, we evaluate two interventional pathways: {\color{teal}(a) apply an intervention $\iota$ to $\scmbase$, then map the resulting distribution to the high-level space via $\taupush$~}: {\color{orange}$\taupush(\prob_{\scmbase_{\iota}})$}; and {\color{violet}(b) first map $\scmbase$ to $\scmabst$ via $\taupush$, then apply the corresponding intervention $\omega(\iota)$}: {\color{orange}$\prob_{\scmabst_{\omega(\iota)}}$}. The divergence $\mathcal{D}_{\enabst}$ between the resulting interventional distributions computes $e_\tau^{\exprob, \iota}$. Aggregating $e_\tau^{\exprob, \iota}$ over an $(\stableprod, \intervsetprod)$ abstraction context, recovers the $(\stableprod, \intervsetprod)$–abstraction error (Eq.~\ref{eq:totabst_error}). If zero, the diagram commutes and $\tau$ defines a $\tau$–$0$–approximate abstraction.}
    \label{fig:tau_approx}
\end{figure}
Our $\rhoiota$-framework generalizes and bridges prior notions of causal abstraction. Unlike exact transformations~\citep{rubenstein2017causal}, which assume consistency under some fixed (but unspecified) environment, and uniform transformations~\citep{beckers2018abstracting}, which require consistency across all environments, $\rhoiota$ allows for a range of finite, plausible environments. This enables principled abstraction under realistic data constraints. Further discussion on this is in Appendix~\ref{sec:caapp}.
\newline
\textbf{The $(\stableprod, \intervsetprod)$--Abstraction Error. } Def.~\ref{def:rhoiota} suggests a perfect form of abstraction where interventional consistency holds exactly, implying that under a given abstraction context $(\stableprod, \intervsetprod)$ using the high-level model entails no loss in predictive accuracy. However, in reality, such types of abstractions are rare. This stems in part from the very nature of abstraction, which reduces the size of the representation by disregarding minor differences, often leading to some degree of information loss. More importantly, in CAL, where the goal is to learn abstractions from data, all inferences are inherently approximate. Thus we need a notion of \emph{abstraction error} \citep{beckers2020approximate} tailored to our framework. The core idea is to measure the discrepancy between the two sides of Eq.~\ref{eq:tau_rho_def} while accounting for the given context $(\stableprod, \intervsetprod)$. To this end, we assume access to a metric over high-level interventional distributions, which induces a discrepancy between SCMs via  $\tau$ enabling approximate abstractions. Specifically, we introduce a novel notion of abstraction error by aggregating over environments and interventions.

\begin{defn}\label{def:absterror} Let $\tau: \dom{\enbase} \to \dom{\enabst}$ be a map between the SCMs $\scmbase$ and $\scmabst$, defined wrt context $(\stableprod, \intervsetprod)$. Let $D_{\enabst}$ also be a discrepancy measure over high-level distributions, and $q$ be a distribution over interventions in $\intervsetbase$. We define the \emph{$(\stableprod, \intervsetprod)$-Abstraction Error} as:
\begin{equation}\label{eq:totabst_error}
    e_{\tau}(\scmbase, \scmabst) =
    \underset{\exprob \in \stableprod}{\mathfrak{g}}~
    \underset{\iota \sim q}{\mathfrak{h}}~
    \left[  e_{\tau}^{\exprob, \iota}(\scmbase, \scmabst) \right],
\end{equation}
where we define the \emph{environment–intervention} approximation error induced by $\tau$ for a fixed  $\exprob \in \stableprod$ and intervention $\iota \in \intervsetbase$ as $e_{\tau}^{\exprob, \iota}(\scmbase, \scmabst) = \mathcal{D}_{\enabst} \left( \tau_{\#}(\mixfbase_{\iota \#}(\exprobbase)),~ \mixfabst_{\omega(\iota)\#}(\exprobabst) \right)$.\end{defn}
The operators $\mathfrak{g}$ and $\mathfrak{h}$ aggregate the error over environments and interventions, respectively. Prior work typically fixes the environment, aggregating only over interventions via expectation \citep{felekis2024causal, dyer2023interventionally, kekić2023targeted} or supremum \citep{zennaro2023jointly}. \citet{beckers2020approximate} proposed a similar notion of abstraction error, though their framework is defined under uniform abstractions and does not consider restrictions to subsets of the environment space, as we do. We call $\scmabst$ an \emph{approximate $\rhoiota$-abstraction} if $e_{\tau}(\scmbase, \scmabst) > 0$ (see Figure~\ref{fig:tau_approx}). The connection between Def.~\ref{def:absterror} and $\rhoiota$-abstractions now becomes apparent. In particular, when the total abstraction error vanishes for a given $\tau$, the pushforward of the low-level interventional distributions $\mixfbase_{\iota \#}(\exprobbase)$ through $\taupush$ matches the corresponding high-level ones $\mixfabst_{\omega(\iota)\#}$, $q$-almost surely. The statement and proof of this are presented in Proposition~\ref{prop:consistency_metric_abstraction}, Appendix~\ref{sec:thmsproofsapp}.

\section{Learning Distributionally Robust Causal Abstractions}\label{sec:main_diroca_learning}
We learn \emph{robust causal abstractions} when data is available from \emph{a single environment per SCM}. Consider a pair $(\scmbase, \scmabst)$ of SCMs and their joint environment $\exprob$ as before, representing the unknown data-generating process. We learn an abstraction map $\tau$ that reliably transforms low-level distributions into high-level ones, even in the presence of distributional shifts or test-time noise. We build on Wasserstein \emph{Distributionally Robust Optimization (DRO)}~\citep{kuhn2019wassersteindistributionallyrobustoptimization}, which provides robustness against distributional misspecification. The standard objective is:
\begin{align}
    \inf_{x \in \mathcal{X}} \sup_{\mathbb{Q} \in \mathbb{B}_{\varepsilon, p}(\widehat{\mathbb{P}}_N)} \mathbb{E}_{\xi \sim \mathbb{Q}}[f(x, \xi)],
\end{align}
The goal is to minimize the worst-case expected loss \(f(x, \xi): \mathbb{R}^n \times \Xi \rightarrow \mathbb{R}\), where \(x \in \mathbb{R}^n\) denotes a decision variable, \(\xi \in \Xi = \mathbb{R}^m\) a random vector representing uncertain data, over a set of plausible distributions, called the \emph{ambiguity set}. In our setting, the loss is the \emph{environment-intervention error} $e_{\tau}^{\exprob, \iota}(\scmbase, \scmabst)$, the decision variable is $\tau$, and we minimize the worst-case abstraction error over a \emph{2-Wasserstein product ambiguity set} centered at the empirical joint environment $\widehat{\exprob}$ with radius $\radprod$. More details on DRO are provided in Appendix \ref{sec:droapp}.
\newline
\textbf{Environmental Robustness via $\rhoiota$-Abstractions. }We introduce distributional robustness by setting the aggregation function $\mathfrak{g}$ in Eq. \ref{eq:totabst_error} over the environments to a supremum, and for computational practicality, the aggregation function $\mathfrak{h}$ over the interventions to an expectation. Following the DRO paradigm, and since we assume access to a single environment from each SCM, we substitute $\cP(\stableprod)$ with a \emph{joint ambiguity set} over the low- and high-level environments to incorporate distributional uncertainty in a principled way using a 2-Wasserstein product ball centered at the empirical joint environment $\widehat{\exprob}$. We denote this ambiguity set by $\bB_{\radprod, 2}(\widehat{\exprob})$, which contains all the environments that lie within a radius $\radprod$, determined by the individual radii $\radbase$ and $\radabst$, from the nominal empirical components. Formally, it is given by $\bB_{\radprod, 2}(\widehat{\exprob}):= \mathbb{B}_{\radbase, 2}(\widehat{\exprobbase}) \times \mathbb{B}_{\radabst, 2}(\widehat{\exprobabst})$,
where $\mathbb{B}_{\varepsilon_d, 2}(\widehat{\rho^d}) = \left\{ \rho \in \mathcal{P}(\ex^d) : W_2(\rho, \widehat{\rho^d}) \leq \varepsilon_d \right\}$ for $d \in \{\ell, h\}$, is a $2$-Wasserstein ball centered at $\widehat{\rho^d}$. The radius $\varepsilon_d$ controls the robustness level: larger values provide protection against broader shifts but may yield more conservative solutions. Each marginal Wasserstein ball bounds the allowable deviation from the empirical environment for each SCM. As we have sample access to the endogenous variables, we estimate the exogenous environments via abduction by inverting the reduced form of the SCM, denoted as $(\mathbf{g}^d_{\#})^{-1}$. In cases where structural functions are known, this inversion is exact; else, they must be inferred, and the abduction step recovers an approximate exogenous distribution. The role of the joint ambiguity set $ \bB_{\radprod, 2}(\widehat{\exprob})$ is to identify the worst-case joint environment $\exprob^\star = \rho^{\ell \star} \otimes \rho^{h\star}$ within the product ball that maximizes the abstraction error. This captures the most adversarial shift consistent with the estimation error. The full construction process is illustrated in Fig.~\ref{fig:ambsetconstructBOX} of Appendix \ref{sec:ambsetapp} for the case of independently sampled environments.
\newline
\textbf{The \textsc{DiRoCA} Objective. }Our objective is to learn a \emph{linear abstraction map} $\tau$, that remains reliable across all shifts inside $\bB_{\radprod, p}(\widehat{\exprob})$. Assuming access to an abstraction context $(\stableprod, \intervsetprod)$ and a divergence measure $\mathcal{D}_{\enabst}$ over high-level interventional distributions, we define the \emph{distributionally robust causal abstraction learning objective} as follows:
\begin{align}\label{eq:final_objective}
 \min_{\lintau\in \bR^{h\times \ell}}\sup_{\exprob \in \bB_{\radprod, 2}(\widehat{\exprob})}~~\underset{\iota \sim q}{\expval}\left[e_{\tau}^{\exprob, \iota}(\scmbase, \scmabst)\right]
\end{align}    
where $\lintau\in \bR^{h\times \ell}$ denotes the class of linear abstraction maps $\tau$, and $e_{\tau}^{\exprob, \iota}(\cdot, \cdot)$ is the environment–intervention error. This objective seeks the linear abstraction map that minimizes the worst-case expected abstraction error over the joint ambiguity set, ensuring robustness to distributional shifts. %The radius $\radprod$ controls the level of robustness by serving as an upper bound on the permissible deviation between the true (unknown) environment $\exprob$ and its empirical estimate $\widehat{\exprob}$.
% \begin{remark}\label{rmk:epsilonandcas}
% The robustness radius $\radprod$ provides a natural interpolation between prior causal abstraction frameworks. As $\radprod \rightarrow 0$, the \textsc{DiRoCA} objective recovers a $\tauomega$-transformation, optimized solely for the empirical environment $\widehat{\exprob}$. Conversely, as $\radprod \rightarrow \infty$, \textsc{DiRoCA} seeks robustness over all possible environments, resulting in a stronger abstraction and approximates a uniform transformation. Thus, both $\tauomega$ and uniform transformations arise as special cases of our framework, which flexibly interpolates between these extremes depending on the desired level of robustness.
% \end{remark}
\begin{remark}\label{rmk:epsilonandcas}
Both exact and uniform abstractions arise as special cases of our framework. Specifically, the robustness radius $\radprod$ flexibly interpolates between these extremes depending on the desired level of robustness; i.e. $\radprod\!\to\!0$ approximates a $\tauomega$-transformation, while $\radprod\!\to\!\infty$ approximates a uniform abstraction.
\end{remark}

\vspace{-0.5em}
\textbf{Concentration guarantees for the joint environment.} 
Our theoretical results below show that the true joint environment $\exprob$ lies within a 2-Wasserstein product ball centered at $\widehat{\exprob}$ with high probability for an appropriate radius $\radprod$. \emph{This result offers a principled way to set the robustness parameter $\radprod$ to ensure the ambiguity set reliably covers the true environment}, thus justifying our distributionally robust CA objective for independent environments.
\newline
\textbf{Assumption 1.} For $d \in \{\ell, h\}$, $\rho^d$ is a light-tailed environment; i.e. there exist constants $\alpha > 0$ and $A > 0$ such that $\mathbb{E}^{\rho^d}\left[\exp\left(\|\xi\|_2^\alpha\right)\right] \leq A$, where $\xi \sim \rho^d$. 

\begin{theorem}[Empirical $\exprob$-Concentration]\label{thm:ca_product_concentration_emp}
Let $\widehat{\exprobbase}$ and $\widehat{\exprobabst}$ be empirical distributions, under Assumption 1, from $N_\ell$ and $N_h$ i.i.d. samples, with $\exprob \coloneq \exprobbase \otimes \exprobabst$ and $\widehat{\exprob} \coloneq \widehat{\exprobbase} \otimes \widehat{\exprobabst}$. For $d\in \{\ell, h\}$, there exist constants $c_{d,1}, c_{d,2} > 0$ depending only on the $d$-environment and confidence levels $\eta_d$. For any $\delta \in (0,1]$ with $\delta = 1-(1-\eta_\ell)(1-\eta_h)$, let $N_d(c,\eta) = \log(c_{d,1}/\eta)/c_{d,2}$. If we define:
\begin{equation*}
    \Tilde{\varepsilon}_d = 
    \begin{cases} 
        \left( \frac{\log(c_{d,1}/\eta)}{c_{d,2} N_d} \right)^{\min\{1/d, 1/2\}} & \text{if } N_d \geq N_d(c,\eta), \\
        \left( \frac{\log(c_{d,1}/\eta)}{c_{d,2} N_d} \right)^{1/\alpha_d} & \text{otherwise,}
    \end{cases}
\end{equation*}
then $\forall~\radprod \ge \sqrt{\Tilde{\varepsilon}_\ell^2 + \Tilde{\varepsilon}_h^2} \implies \mathbb{P}\left[ \mathcal{W}_2(\exprob, \widehat{\exprob}) \le \radprod \right] \ge 1 - \delta$.
\end{theorem}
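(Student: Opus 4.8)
The plan is to reduce the joint concentration statement to two independent per-environment statements by exploiting the product structure of both $\exprob$ and $\widehat{\exprob}$, and then to invert a Fournier--Guillin-type measure-concentration tail bound on each marginal. This mirrors the scaffold of Theorem~\ref{thm:ca_product_concentration_ell}, with the Gaussian-specific estimate replaced by the general light-tailed bound of Assumption~1. Concretely, the two ingredients are: (i) subadditivity (tensorization) of the squared $2$-Wasserstein distance across product measures, and (ii) a non-asymptotic concentration inequality for the empirical measure in Wasserstein distance under an exponential-moment condition.

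First I would establish the tensorization bound
$$\mathcal{W}_2^2(\exprobbase \otimes \exprobabst,\ \widehat{\exprobbase} \otimes \widehat{\exprobabst}) \le W_2^2(\exprobbase, \widehat{\exprobbase}) + W_2^2(\exprobabst, \widehat{\exprobabst}).$$
This follows by taking optimal couplings $\pi_\ell$ of $(\exprobbase, \widehat{\exprobbase})$ and $\pi_h$ of $(\exprobabst, \widehat{\exprobabst})$, forming the product coupling $\pi_\ell \otimes \pi_h$ (a valid coupling of the two product measures), and observing that under the squared Euclidean cost on the product space $\exprod$ the transport cost splits additively across the two coordinate blocks; taking the infimum over couplings on the left yields the inequality. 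The structural fact being used is that both the ground cost and the ambiguity ball factorize across the low- and high-level exogenous domains, which is precisely what makes the product-ball formulation compatible with the $2$-Wasserstein metric.

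Next I would control each marginal separately. Under Assumption~1 each $\rho^d$ is light-tailed, so the empirical measure $\widehat{\rho^d}$ concentrates around $\rho^d$: a Fournier--Guillin/Esfahani--Kuhn measure-concentration inequality gives a tail of the form $\mathbb{P}[W_2(\rho^d,\widehat{\rho^d}) \ge \varepsilon] \le c_{d,1}\exp(-c_{d,2} N_d\, \varepsilon^{r})$, whose deviation exponent $r$ switches between a dimension-dependent value and the light-tail exponent $\alpha_d$ according to whether $\varepsilon$ sits below or above the unit scale. Setting the right-hand side equal to the per-environment confidence $\eta$ and solving for $\varepsilon$ produces exactly the two-regime expressions for $\Tilde{\radbase}$ and $\Tilde{\radabst}$ stated in the theorem, with the phase transition occurring at the sample size $N_d(c,\eta)=\log(c_{d,1}/\eta)/c_{d,2}$ that makes the inverted radius cross $1$. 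Hence, with probability at least $1-\eta_d$, we obtain $W_2(\rho^d,\widehat{\rho^d}) \le \Tilde{\varepsilon_d}$.

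Finally I would combine the two marginal events. Because $\widehat{\exprobbase}$ and $\widehat{\exprobabst}$ are built from independent i.i.d. samples, the two concentration events are independent, so both hold simultaneously with probability at least $(1-\eta_\ell)(1-\eta_h)=1-\delta$; on this event, chaining the tensorization bound with the two marginal radii gives $\mathcal{W}_2(\exprob,\widehat{\exprob}) \le \sqrt{\Tilde{\radbase}^2 + \Tilde{\radabst}^2}$, which establishes the claim for every $\radprod$ at least this large. The main obstacle is step (ii): pinning down the precise Wasserstein concentration tail under the light-tail assumption — in particular the correct deviation exponent and the dependence of $c_{d,1},c_{d,2}$ on the underlying environment — and verifying that the order of the Wasserstein metric used in that bound is consistent with the $2$-Wasserstein tensorization of step (i). I would also be careful to invoke independence of the two sampling processes explicitly, since it is this (rather than a cruder union bound) that yields the exact product form $\delta = 1-(1-\eta_\ell)(1-\eta_h)$.
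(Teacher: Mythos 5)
Your proposal is correct and follows essentially the same route as the paper's proof: tensorization of the squared $2$-Wasserstein distance over the product structure, inversion of the light-tailed empirical concentration inequality (Theorem 18 of \citet{kuhn2019wassersteindistributionallyrobustoptimization}, which is the Fournier--Guillin-type bound you describe) to obtain the two-regime radii, and independence of the two sampling processes to get the exact product form $1-\delta=(1-\eta_\ell)(1-\eta_h)$. The only cosmetic difference is that you derive the tensorization step as an inequality via a product coupling, while the paper invokes it as an identity for product measures; either suffices for the argument.
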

This concentration bound for CA shows how the robustness radius $\radprod$ contracts with sample size to ensure high probability coverage of the true environment. In particular, \(\radprod = \mathcal{O}\left(N^{-1/d}\right)\) for dimension \(d > 2\), where \(N = \min(N_\ell, N_h)\) and \(d = \max\left(\ell, h \right)\). Under finite samples, setting $\epsilon > 0$ yields a stronger abstraction, valid over the entire Wasserstein ball. An equivalent result for the case of Elliptical/Gaussian measures, alongside the proofs, can be found in Appendix~\ref{sec:thmsproofsapp}.
% \begin{remark}
% These results assume known coefficients for exact abduction of $\widehat{\exprob}$. When these are estimated, the nominal environment may deviate from the true; nevertheless, as the sample size increases, the approximation improves, and the concentration bounds remain asymptotically valid.
% \end{remark}
\begin{remark}
When coefficients are estimated rather than known, the nominal environment $\widehat{\exprob}$ may be misspecified; however, as the sample size grows, this error vanishes, and the concentration bounds remain asymptotically valid.
\end{remark}
% \begin{remark}
% When coefficients are estimated rather than known, the nominal environment $\widehat{\exprob}$ may be misspecified; however, as the sample size grows, this error vanishes, and the concentration bounds remain asymptotically valid. In the finite-sample regime, the ambiguity radius $\epsilon$ can be selected to encompass this additional abduction uncertainty.
% \end{remark}

\vspace{-0.5em}
\textbf{Optimization.}
We now demonstrate how to solve the robust CA problem for ANMs as a min–max optimization. Given samples $X^{d, \iota} \in \mathbb{R}^{N \times d}$ for $\iota \in \cI^d$, we extract environmental observational samples $U^d \in \mathbb{R}^{N \times d}$ via abduction. During training, reusing these observational residuals for interventional generation yields a consistent pairing of batch rows across interventions. This is not an alignment assumption but a computational device enabling Frobenius-based discrepancy evaluation and avoiding repeated optimal transport computations (see Appendix~\ref{sec:thmsproofsapp}). Under the ANM decomposition (Section~\ref{sec:main_back}), the endogenous samples for an intervention $\iota \in \intervsetbase$ and level $d \in \{\ell,h\}$ are $X^{d,(\iota)} = D^{d, (\iota)} + U^d$. We define the empirical marginal environments as $\widehat{\rho^d}= 1/N\sum_{i=1}^{N} \delta_{\widehat{u}^d_i}$ for $d\in\{\ell,h\}$, and the empirical joint environment as $\widehat{\exprob}= \widehat{\exprobbase} \otimes \widehat{\exprobabst}$. By the finite-dimensional reduction of \citet{kuhn2019wassersteindistributionallyrobustoptimization}, any distribution in a 2-Wasserstein ball around an empirical measure can be represented as a perturbed empirical distribution. We thus perturb the noise samples as $U^d \mapsto U^d + \Theta_d$, where $\Theta_d \in \mathbb{R}^{N \times d}$, which induces perturbed marginals $\widehat{\rho^d}(\Theta_d)
:= 1/N\sum_{i=1}^{N} \delta_{\widehat{u}^d_i + \theta^d_i}$. The ambiguity radii $\varepsilon_\ell,\varepsilon_h$ correspond to Frobenius budgets $r_d = \varepsilon_d \sqrt{N}$, such that the condition $\|\Theta_d\|_F \le r_d, d \in \{\ell,h\}$ is equivalent to restricting the joint environment to $\bB_{\radprod,2}(\widehat{\exprob})$. To formulate the robust objective, we define the \emph{nominal misalignment} for an abstraction $\lintau \in \mathbb{R}^{h \times \ell}$ as $Z_{\lintau}^{\iota}(\mathbf{0}) \coloneq \lintau(D_\ell^{(\iota)} + U^\ell)^\top - (D_h^{(\omega(\iota))} + U^h)$. Introducing perturbations $\mathbf{\Theta} \coloneq (\Theta_\ell, \Theta_h)$, the \emph{perturbed misalignment} shifts to $Z_{\lintau}^{\iota}(\mathbf{\Theta}) \coloneq Z_{\lintau}^{\iota}(\mathbf{0}) + (\lintau\Theta_\ell^\top - \Theta_h)$. \emph{The DiRoCA objective minimizes the expected perturbed misalignment under the worst-case observational shift within the ambiguity set}:
\begin{align}\label{eq:minmax_general}
    \min_{\lintau} \sup_{\substack{\|\Theta_\ell\|_F \le r_\ell,~ \|\Theta_h\|_F \le r_h}}
    \mathbb{E}_{\iota\sim q}\!\left[
    \|Z_{\lintau}^{\iota}(\mathbf{\Theta})\|_F^2
    \right].
\end{align}
This formulation highlights that the adversary attempts to align the perturbation shift $(\lintau\Theta_\ell^\top - \Theta_h)$ with the nominal direction $Z_{\lintau}^{\iota}(\mathbf{0})$ to maximize the error. We solve~\eqref{eq:minmax_general} via alternating projected gradient descent–ascent (Alg.~\ref{alg:general_diroca}). This constitutes our primary method, applicable to \emph{any} ANM. When the underlying SCMs are Linear (LANs), the mixing functions of the reduced forms are linear operators $\mixbase_{\iota}\in\bR^{\ell \times \ell}, \mixabst_{\omega(\iota)} \in \bR^{h \times h}, \forall \iota \in \intervsetbase$. We distinguish two variations based on the available environmental information: \textbf{(a)} When only samples are available (\emph{Empirical}), we utilize the linear mixing matrices explicitly. The misalignment becomes a direct matrix operation: $Z_{\lintau}^{\iota}(\mathbf{0}) = \lintau \mixbase_{\iota} U^{\ell \top} - \mixabst_{\omega(\iota)} U^h$.The objective remains the Frobenius norm minimization from Eq.~\eqref{eq:minmax_general}, but the explicit linearity allows for faster gradient computations as the deterministic part $D$ is folded into the matrix multiplication. The optimization follows Algorithm~\ref{alg:general_diroca}, projecting perturbations $\Theta$ onto Frobenius balls. \textbf{(b)} If the environments are known to be \emph{Gaussian}, i.e., $\rho^d \sim \mathcal{N}(\mu^d, \Sigma^d)$, we exploit the property that Gaussianity is preserved under the linear transformations $\mixbase_{\iota}$ and $\mixabst_{\omega(\iota)}$ and $\lintau$. Here, instead of explicit sample perturbation, we define the ambiguity set directly over the Gaussian parameters $(\mu, \Sigma)$ using the Gelbrich distance (closed-form $\mathcal{W}_2$ for Gaussians see Appendix~\ref{sec:wassapp}). The robust objective simplifies to minimizing the worst-case Gelbrich distance misalignment $Z_{\lintau}^{\iota}(\mathbf{0})$ between $\mathcal{N}(\lintau \mixbase_\iota \mu^\ell, \lintau \mixbase_\iota \Sigma^\ell \mixbase_\iota^\top \lintau^\top)$ and $ \mathcal{N}(\mixabst_{\omega(\iota)} \mu^h, \mixabst_{\omega(\iota)} \Sigma^h \mixabst_{\omega(\iota)}^\top)$. We optimize via a proximal gradient ascent–descent scheme on the moments $(\mu, \Sigma)$, relaxing the non-smooth trace term via a variational upper bound. Illustrations of the resulting worst-case Gaussian environments appear in Appendix~\ref{sec:optapp}. All DiRoCA variants and their optimization details appear in Appendix~\ref{sec:optapp}. 
\begin{algorithm}[t]
\caption{General \textsc{DiRoCA} $~~~~~~~~~~~~~~~~~~~~~~~~[d \in \{\ell, h\}]$}\label{alg:general_diroca}
\begin{algorithmic}[1]

\REQUIRE $\omega, \mathcal{I}^d, \scmdag_{\scmsimple^d},
\{\mathbf{X}_\iota^d \in \mathbb{R}^{N \times d}\}_{\iota \in \mathcal{I}^d}, \varepsilon_d$ (Thm.~\ref{thm:ca_product_concentration_emp}).

\STATE \textit{// Infer noise and initialize empirical environments}
\STATE $\mathbf{U}^d \leftarrow \text{Abduct}(\mathbf{X}^d, \mathcal{M}^d)$;~~~$\widehat{\rho^d} \leftarrow \frac{1}{N}\sum_{i=1}^N \delta_{\widehat{u}^d_i}$

\STATE \textbf{Initialize:} $\lintau^{(0)}$, $\mathbf{\Theta}^{(0)}$,
$\rho^{d(0)} \leftarrow \widehat{\rho^d}$, $r_d \leftarrow \varepsilon_d \sqrt{N}$

\STATE $J(\lintau, \mathbf{\Theta}) \coloneq \mathbb{E}_{\iota}[\|Z_{\lintau}^{\iota}(\mathbf{\Theta})\|_F^2]$
\AlgComment{\#Objective}

\STATE \textbf{repeat until convergence:}

\STATE \hspace{0.5em}\textit{// Adversarial Projected Gradient Ascent}
\STATE \hspace{0.5em}$\Theta_{d}^{(t+1)} \leftarrow
\text{Proj}_{\|\cdot\|_F \le r_d}\!\left[\Theta_{d}^{(t)} + \eta_\theta \nabla_{\Theta_d}
J(\lintau^{(t)}, \mathbf{\Theta}^{(t)})\right]$

\STATE \hspace{0.5em}$\rho^{d(t+1)} \leftarrow \frac{1}{N}\sum_{i=1}^N
\delta_{\widehat{u}^d_i + \theta^{d(t+1)}_i}$
\AlgComment{\#Update worst-case}

\STATE \hspace{0.5em}\textit{// Abstraction Gradient Descent}
\STATE \hspace{0.5em}$\lintau^{(t+1)} \leftarrow \lintau^{(t)} -
\eta_\tau \nabla_{\lintau} J(\lintau^{(t)}, \mathbf{\Theta}^{(t+1)})$ \AlgComment{\#Update $\lintau$}

\textbf{Return:} $\lintau^\star \in \bR^{h \times \ell}$, $\exprob^\star = \rho^{\ell\star} \otimes \rho^{h\star}$

\end{algorithmic}
\end{algorithm}
Furthermore, we establish worst-case guarantees for the learned $\lintau$. While certified defenses in robust prediction typically bound a classification radius \citep{cohen2019certifiedadversarialrobustnessrandomized} or outlier influence \citep{robustGPs}, our framework certifies the abstraction error itself. Theorem~\ref{thm:provable-robustness-main} establishes that the solution $\lintau^\star$ of Eq.~\ref{eq:final_objective} is provably robust, offering a closed form bound on the abstraction error for all perturbations subject to the Frobenius constraints.
% \begin{theorem}[Provable Robustness]\label{thm:provable-robustness-main}
% For an abstraction $\lintau\in\mathbb{R}^{h\times \ell}$, let $\zeta(\lintau)
% \coloneq \sup_{\|\Theta_\ell\|_F \le r_\ell,\;\|\Theta_h\|_F \le r_h}
% \mathbb{E}_{\iota\sim q}\!\left[
% \|Z_{\lintau}^{\iota}(\mathbf{\Theta})\|_F^2
% \right]$ be the worst-case expected loss. Then:
% \begin{align}
% \zeta(\lintau)
% \leq
% \mathbb{E}_{\iota\sim q}
% \Big[
% \big( r_h + \|Z_{\lintau}^{\iota}(\mathbf{0})\|_F + r_\ell\|\lintau\|_2 \big)^2
% \Big].
% \end{align}
% Consequently, the worst-case abstraction error of any minimizer $\lintau^\star$ of~\eqref{eq:minmax_general} is upper bounded by the RHS at $\lintau^\star$.
% \end{theorem}
\begin{theorem}[Provable Robustness]\label{thm:provable-robustness-main}
Let $\lintau\in\mathbb{R}^{h\times \ell}$ be an abstraction matrix and define its worst-case expected loss as: $\zeta(\lintau)
\coloneq \sup_{\|\Theta_\ell\|_F \le r_\ell,\;\|\Theta_h\|_F \le r_h}
\mathbb{E}_{\iota\sim q}\!\left[
\|Z_{\lintau}^{\iota}(\mathbf{\Theta})\|_F^2
\right]$.
\begin{align}
\implies \zeta(\lintau)
\leq
\mathbb{E}_{\iota\sim q}
\Big[
\big( r_h + \|Z_{\lintau}^{\iota}(\mathbf{0})\|_F + r_\ell\|\lintau\|_2 \big)^2
\Big].
\end{align}
Consequently, the worst-case abstraction error of any minimizer $\lintau^\star$ of~\eqref{eq:minmax_general} is upper bounded by the RHS at $\lintau^\star$.
\end{theorem}

\section{Experimental Results}\label{sec:main_exps}
We examine our method's robustness to environmental shifts and misspecification versus prior art and baselines across four settings: \textbf{(a)} the \texttt{SLC} dataset, a three-variable LAN SCM whose abstraction removes a mediator; \textbf{(b)} \texttt{LUCAS} \href{https://www.causality.inf.ethz.ch/data/LUCAS.html}: a lung cancer diagnosis simulation ($\ell=6, h=3$) in linear (\texttt{LiLUCAS}) and nonlinear (\texttt{nLUCAS}) variants; \textbf{(c)} the real-world \textit{Electric Battery Manufacturing} (\texttt{EBM}) \citep{zennaro2023jointly} dataset; and \textbf{(d)} the semi-synthetic \textit{Colored MNIST} (\texttt{cMNIST}) \citep{xia2024neural} dataset, where we abstract from $32 \times 32$ RGB images to a $64$-dimensional disentangled latent space defined by the pre-trained encoder of the original work, which serves as the ground truth. We assume known DAGs, intervention sets, and map $\omega$; structural functions are known for synthetic tasks but estimated for \texttt{EBM} and \texttt{cMNIST}. Complete results, evaluation details, a roadmap of experiments, and datasets' analysis is in Appendices~\ref{sec:all_results_settings} and \ref{sec:examples_app}. We compare \textsc{DiRoCA} against the following methods: 
\textbf{(a)} \textsc{Bary}$_{\tauomega}$, a variation of the \textsc{Bary}$_{\text{OT}}$ of \citep{felekis2024causal}, which learns an abstraction between the barycenters of the interventional distributions of both abstraction levels; \textbf{(b)} \textsc{Grad}$_{\tauomega}$ that ignores environmental uncertainty and approximates a $\tauomega$ transformation via gradient descent; corresponding to \textsc{DiRoCA} with $(\radbase, \radabst)=(0,0)$ as discussed in Remark~\ref{rmk:epsilonandcas}, and for the empirical settings, we also compare against; \textbf{(c)} \textsc{Abs-LiNGAM} \citep{massidda2024learningcausalabstractionslinear}: an observational least-squares method between low- and high-level samples baseline, evaluating both perfect ($\absp$) and noisy ($\absn$) CAL variants proposed in the original work.
\newline
\textbf{Evaluation.} We report \textsc{DiRoCA}$_{\widehat{\epsilon}}$ (radii via Thm.~\ref{thm:ca_product_concentration_emp}) and \textsc{DiRoCA}$_{\epsilon^\star}$ (best-performing in each setting). Results for additional robustness radii are deferred to Appendix~\ref{sec:all_results_settings}. For nonlinear SCMs, we make no parametric assumptions on the environment. For linear SCMs, we evaluate both empirical and Gaussian environments, always using the empirical approximation of the $(\stableprod,\intervsetprod)$-abstraction error in Eq.~\eqref{eq:totabst_error}. We instantiate $\mathfrak{g}$ and $\mathfrak{h}$ as $\expval$ and report mean $\pm$ std abstraction error, with paired $t$-tests ($p < 0.05$). We use $5$-fold cross-validation for all experiments, except for \texttt{EBM} where a $90/10$ train–test split is used due to limited data. Structural functions are known for synthetic datasets and estimated for \texttt{EBM} and \texttt{cMNIST} (details in Appendix~\ref{sec:examples_app}). For \texttt{cMNIST}, robustness is enforced only in the low-level/pixel space, reflecting the sensory nature of the shifts; i.e. $\radabst=0$. Finally, at test time, abstractions are always evaluated on independently generated samples and not on the shared observational noise across interventions as in training. 

We perform a robustness analysis using a unified evaluation framework based on a Huber contamination model. This is applied to the held-out test set for each of the $k$ cross-validation folds, controlled by two key parameters: the contamination fraction $\alpha \in [0, 1]$, which controls the \textit{prevalence} of shifted samples, and the contamination strength $\tilde{\sigma}$, which controls the magnitude of their shift. Each test set consists of a collection of paired, clean endogenous datasets corresponding to each intervention $\{ (X^\ell_{ \iota}, X^h_{\omega(\iota)} )\}_{\iota \in \mathcal{I}_\ell} \in \mathbb{R}^{N^\ell_{\text{test}} \times \ell} \times \mathbb{R}^{N^h_{\text{test}} \times h}$. For each intervention $\iota$, we first generate a noisy version of the data, $\tilde{X}^d_{\iota}$, by applying an additive stochastic shift to the clean data. This is achieved by creating a noise matrix $\mathbf{N} \in \mathbb{R}^{N^d_{\text{test}} \times d}$, where each row $\mathbf{n}_i \sim \mathcal{N}(0, \tilde{\sigma}^2 \cdot \mathbf{I})$ is an independent sample from a zero-mean Gaussian distribution with a scaled covariance and thus, $\tilde{X}^d_{\iota} = X^d_{\iota} + \mathbf{N}$. While we focus on Gaussian shifts in the main text for clarity, we present analogous results for Student-t and Exponential noise distributions in Appendix~\ref{sec:all_results_settings}. The final contaminated test sets, $\bar{X}^d_{\iota}$ for every $\iota \in \cI^d$, are then formed as a convex combination:
$\bar{X}^d_{\iota} = (1-\alpha)X^d_{\iota} + \alpha \tilde{X}^d_{\iota}$. Although DiRoCA targets environmental shifts, we contaminate the endogenous samples directly, which is a valid proxy as additive exogenous shifts in ANMs propagate additively to endogenous variables. Performance is measured by the squared Frobenius norm for empirical settings and the Wasserstein distance between fitted Gaussians $\mathcal{N}(\hat{\mu}, \hat{\Sigma})$ for linear settings, averaged across all folds, samples, and interventions.
\newline
\textbf{Analysis.}
We analyze robustness by varying the outlier fraction $\alpha$ at fixed noise intensity $\tilde{\sigma}$ (Fig.~\ref{fig:alpha_main_curves}). Consistent trends are observed when varying noise intensity $\tilde{\sigma}$ at fixed $\alpha$. In linear synthetic settings (\texttt{SLC}, \texttt{LiLUCAS}), where the noise is Gaussian (explicitly or implicitly), we observe a clear crossover: at low contamination ($\alpha \approx 0$), the non-robust \textsc{Grad}$_{\tauomega}$ marginally outperforms \textsc{DiRoCA}, reflecting the \emph{cost of robustness}: \textsc{DiRoCA}'s adversarial training is conservative, prioritizing worst-case safety over best-case precision. However, as $\alpha$ increases, the error of non-robust baselines grows rapidly, whereas \textsc{DiRoCA} maintains a lower error profile under increasing shifts. In \texttt{nLUCAS} setting, this trade-off collapses: \textsc{DiRoCA} outperforms \textsc{Grad}$_{\tauomega}$ even at $\alpha=0$. This occurs because nonlinear mechanisms induce non-Gaussian data geometries, causing \textsc{Grad}$_{\tauomega}$ to overfit the nominal environment and fail to generalize to new samples, whereas \textsc{DiRoCA}'s environment-level optimization regularizes against this effect (Note in Appendix~\ref{sec:all_results_settings}). For the real-world datasets (\texttt{EBM}, \texttt{cMNIST}), we report results for $\alpha=1$ in the main text (Table~\ref{tab:battery_cmnist_unified_alpha1}), with full robustness curves deferred in Appendix~\ref{sec:all_results_settings}. Here, the structural functions estimation introduces model misspecification in addition to stochastic noise. However, \textsc{DiRoCA} treats these structural residuals as a form of environmental perturbation: the same robustness mechanism that guards against distributional shift also absorbs deviations from the assumed functional form, a perspective aligned with recent approaches that model perturbations as sparse mechanism shifts \citep{schneider2025generativeinterventionmodelscausal} and explains why it maintains stable performance even under severe shifts ($\alpha=1$). Also, \textsc{Bary}$_{\tauomega}$ shows limited robustness via aggregation but lacks the adversarial training needed to withstand worst-case shifts, while \textsc{AbsLin} fails due to observational reliance, confirming that nominal-environment tailored optimization fails to generalize under contamination.
\begin{figure*}[t]
    \centering
    \includegraphics[width=.8\textwidth]{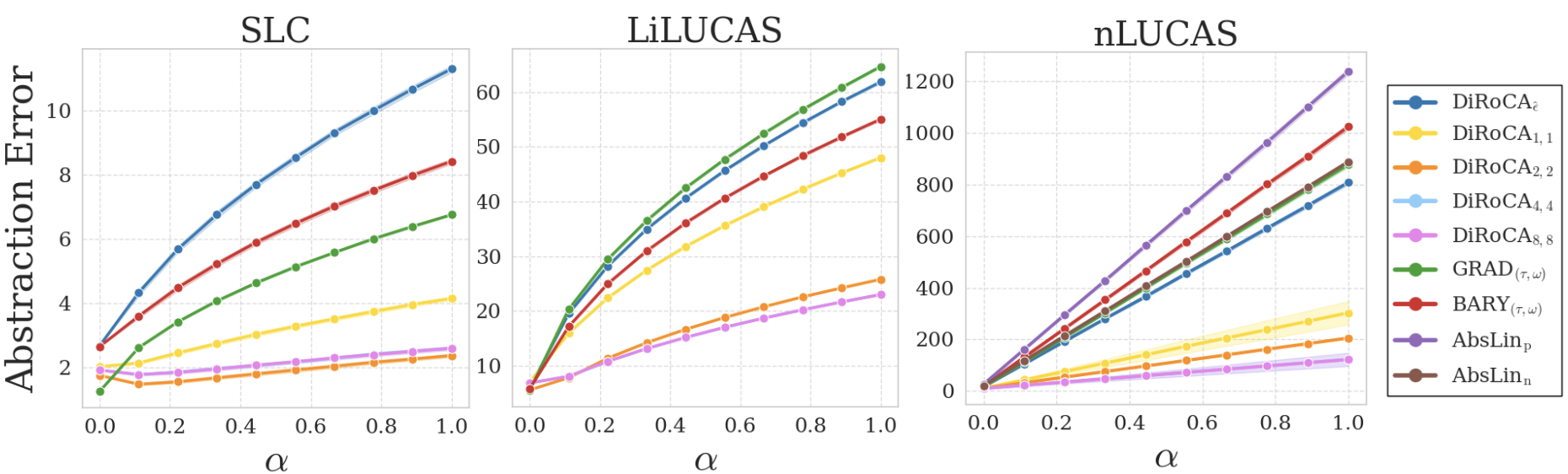}
    \caption{Robustness to outlier fraction ($\alpha$) on the \texttt{SLC} (Gaussian) and \texttt{LiLUCAS} (Gaussian) and \texttt{nLUCAS} experiments, evaluated at fixed Gaussian noise intensity ($\tilde{\sigma}=5.0$ for \texttt{SLC} and  $\tilde{\sigma}=10.0$ for \texttt{LiLUCAS} and \texttt{nLUCAS}). DiRoCA, especially with tuned ambiguity radius, achieves consistently lower error as the proportion of outlier environments increases, while non-robust methods degrade. $(\hat{\radbase},\hat{\radabst})=(0.11,0.11)$ for \texttt{SLC} and \texttt{LiLUCAS}, and $(0.47,0.45)$ for \texttt{nLUCAS}}
    \label{fig:alpha_main_curves}
\end{figure*}
\begin{table}[htp]
\caption{Average abstraction error under distribution shift, full corruption ($\alpha=1$) for \texttt{EBM} and \texttt{cMNIST}. $(\hat{\radbase},\hat{\radabst})=(0.55,0.41)$ for \texttt{EBM} and $(61.69,0)$ for \texttt{cMNIST}, while $\epsilon^\star$ corresponds to $(1,1)$ for \texttt{EBM} and $(20,0)$ for \texttt{cMNIST}.}
\label{tab:battery_cmnist_unified_alpha1}
\centering
\begin{tabular}{lcc}
\toprule
Method 
& \texttt{EBM} 
& \texttt{cMNIST} \\
\midrule
$\absp$ 
& 3034.8 $\pm$ 2129.3 
& 2951.3 $\pm$ 51.7 \\
$\absn$ 
& 3330.9 $\pm$ 3238.0 
& 470.7 $\pm$ 8.5 \\
\textsc{Bary}$_{\tauomega}$ 
& 357.4 $\pm$ 144.9 
& 15348.5 $\pm$ 629.4 \\
\textsc{Grad}$_{\tauomega}$ 
& 320.2 $\pm$ 148.2 
& 22200.7 $\pm$ 1492.7 \\
\midrule
\textsc{DiRoCA}$_{\widehat{\epsilon}}$  
& \textbf{297.2 $\pm$ 167.8} 
& 215.0 $\pm$ 8.5 \\
\textsc{DiRoCA}$_{\epsilon^\star}$   
& \textbf{268.4 $\pm$ 141.8} 
& \textbf{48.2 $\pm$ 0.6} \\
\bottomrule
\end{tabular}
\end{table}

\textbf{Beyond Environmental Robustness.} We also test robustness to violations of modeling assumptions, including: (i) Structural ($\mathcal{F}$), where we evaluate our linearly-trained models on new test data generated from SCMs with non-linear structural equations. These misspecified SCMs share the same causal graph and environments as those used during training, but the structural equation for each variable is non-linear, controlled by a strength parameter $k \in \mathbb{R}$; (ii) Intervention mapping ($\omega$), where we contaminate the ground-truth $\omega$ map by reassigning a subset of low-level interventions to different high-level ones of the same complexity, thereby preserving intervention dose while introducing realistic mapping errors; and (iii) Semantic ($\mathcal{S}$), where we utilize the \texttt{cMNIST} dataset to simulate unmodeled camera effects such as geometric rotation and photometric lighting shifts (see Figure~\ref{fig:cmnist_camera_shifts}). This setup introduces a severe structural misspecification: we task the models with learning a \textit{linear} abstraction map from high-dimensional pixels, despite the underlying generative process being inherently non-linear. An explanation of these processes can be found in Appendix~\ref{sec:all_results_settings}. Table~\ref{tab:omega_f_misspec_lilucas_nlucas} shows results for $\omega$- and $\mathcal{F}$-misspecification ($k=1$ with a sinusoidal (\texttt{sin}) function) for the empirical \texttt{LiLUCAS} and \texttt{nLUCAS}. Full robustness curves illustrating the abstraction error as a function of the non-linearity strength $k$ (both \texttt{sin} and \texttt{tanh}) are provided in Appendix~\ref{sec:all_results_settings}. For the $\cS$-misspecification task, Table~\ref{tab:cmnist_rotation_only} reports the Relative Squared Error under rotation (lighting shift results in Appendix~\ref{sec:all_results_settings}). We predict latents as $\hat{Z} = X_{\text{pix}}\lintau^\top + b$, where $b = \mathbb{E}[Z - X_{\text{pix}}\lintau^\top]$ is a test-time per-dimension mean-shift correction. This removes arbitrary intercept mismatch while leaving the learned linear map unchanged. Baselines' performance indicates a catastrophic failure to identify the causal signal whereas \textsc{DiRoCA} maintains a relatively low error. Despite the inherent misspecification of linear abstractions in this high-dimensional non-linear setting, \textsc{DiRoCA}'s remarkable stability suggests it recovers a meaningful linear approximation of the underlying geometry, ignoring high-frequency features that are unstable under rotation. Overall, by acting as an implicit regularizer, \textsc{DiRoCA}'s min–max objective extends robustness beyond environmental shifts, consistently outperforming baselines across all misspecification settings.
\begin{figure}[htp]
    \centering
    \includegraphics[width=.62\linewidth]{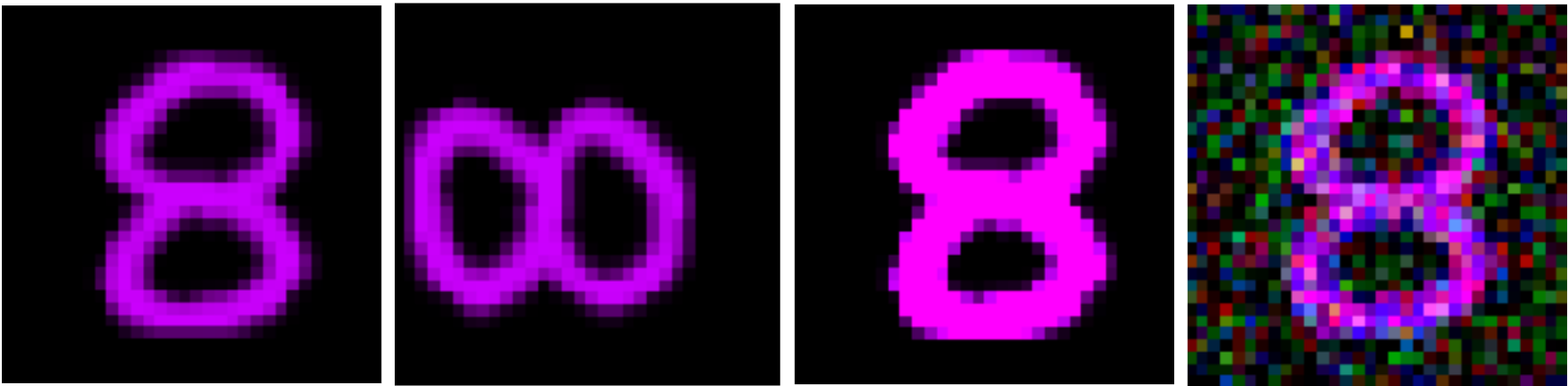}
    \caption{\texttt{cMNIST} camera shifts from left to right: Original, rotation, lighting and environmental ($\tilde{\sigma}=0.5$).}
     \label{fig:cmnist_camera_shifts}
\end{figure}
\section{Conclusion}\label{sec:main_conc}
We introduced $\rhoiota$-abstractions, a framework that bridges the gap between brittle exact abstractions and intractable uniform ones by enforcing consistency across a relevant set of environments. To learn these, we proposed \textsc{DiRoCA}, which casts CAL as a DRO problem. Our theoretical results extend DRO concentration bounds to the CA setting for the true joint environment to guide radius selection and establish provable robustness guarantees, offering a closed-form analytical bound on the worst-case abstraction error. Experiments across different problems and prior art demonstrated a consistently lower abstraction error under both distributional shifts and structural misspecification. Notably, we observe that robustness to environmental shifts often induces resilience to broader sources of misspecification, such as imperfect structural assumptions or intervention mappings. The current framework is limited to linear abstractions and assumes access to both a known intervention map and the true causal DAGs or an accurate estimate via causal discovery. These present challenges that we aim to address in future work.
\begin{table}[htp]
\caption{Average abstraction error under misspecification. 
For \texttt{LiLUCAS}, we report both $\omega$- and $\mathcal{F}$-misspecification under the empirical setting. 
\textsc{DiRoCA}$_{\widehat{\epsilon}}$ uses theoretically prescribed radii: $(\hat{\radbase},\hat{\radabst})=(0.11,0.11)$ for \texttt{LiLUCAS} and $(0.47,0.45)$ for \texttt{nLUCAS}. \textsc{DiRoCA}$_{\epsilon^\star}$ denotes the best-performing configuration, corresponding to $(4,4)$ across all settings.}
\label{tab:omega_f_misspec_lilucas_nlucas}
\centering
\begin{tabular}{lccc}
\toprule
\multirow{2}{*}{Method} 
& \multicolumn{2}{c}{\texttt{LiLUCAS}} 
& \texttt{nLUCAS} \\
\cmidrule(lr){2-3} \cmidrule(lr){4-4}
& $\omega$ & $\mathcal{F}$ & $\omega$ \\
\midrule
\textsc{Bary}$_{\tauomega}$                         
& 548.6 $\pm$ 2.5 
& 578.8 $\pm$ 1.3 
& 21.4 $\pm$ 1.0 \\
\textsc{Grad}$_{\tauomega}$                         
& 305.5 $\pm$ 1.3 
& 338.7 $\pm$ 1.0 
& 18.7 $\pm$ 0.9 \\
$\absp$                                             
& 414.6 $\pm$ 1.1 
& 430.3 $\pm$ 1.2 
& 29.3 $\pm$ 1.2 \\
$\absn$                                             
& 359.5 $\pm$ 1.4 
& 381.3 $\pm$ 1.9 
& 22.7 $\pm$ 0.9 \\
\textsc{DiRoCA}$_{\widehat{\epsilon}}$  
& 459.1 $\pm$ 2.8 
& 500.1 $\pm$ 1.4 
& 18.2 $\pm$ 0.9 \\
\textsc{DiRoCA}$_{\epsilon^\star}$                               
& \textbf{304.1 $\pm$ 7.8} 
& \textbf{327.4 $\pm$ 9.2} 
& \textbf{9.1 $\pm$ 0.3} \\
\bottomrule
\end{tabular}
\end{table}
\begin{table}[htp]
\caption{Robustness to geometric rotation on \texttt{cMNIST} (Relative L2 \%). \textsc{DiRoCA}$_{\widehat{\epsilon}}$ uses theoretically prescribed radii $(\hat{\radbase},\hat{\radabst})=(61.69,0)$, while \textsc{DiRoCA}$_{\epsilon^\star}$ denotes the best-performing configuration, corresponding to $(20,0)$ for both rotation levels.}
\label{tab:cmnist_rotation_only}
\centering
\begin{tabular}{lcc}
\toprule
Method & Low ($30^\circ$) & High ($90^\circ$) \\
\midrule
\textsc{Bary}$_{\tauomega}$             
& 2291.0 $\pm$ 196.6 
& 4135.0 $\pm$ 355.8 \\
\textsc{Grad}$_{\tauomega}$             
& 1780.9 $\pm$ 237.7 
& 3585.9 $\pm$ 643.3 \\
$\absp$                                 
& 541.3 $\pm$ 12.9  
& 604.8 $\pm$ 24.6 \\
$\absn$                                 
& 276.7 $\pm$ 6.5  
& 287.9 $\pm$ 9.3 \\
\midrule
\textsc{DiRoCA}$_{\widehat{\epsilon}}$   
& 101.7 $\pm$ 6.2 
& 106.4 $\pm$ 3.7 \\
\textsc{DiRoCA}$_{\epsilon^\star}$               
& \textbf{28.0 $\pm$ 0.4} 
& \textbf{30.7 $\pm$ 0.7} \\
\bottomrule
\end{tabular}
\end{table}

% Acknowledgements should only appear in the accepted version.
\section*{Acknowledgements}
\textbf{YF} acknowledges support by the Onassis Foundation -
Scholarship ID: F ZR 063-1/2021-2022. \textbf{TD} acknowledges support from a UKRI Turing AI acceleration Fellowship [EP/V02678X/1].

\bibliography{example_paper}
\bibliographystyle{icml2026}

%%%%%%%%%%%%%%%%%%%%%%%%%%%%%%%%%%%%%%%%%%%%%%%%%%%%%%%%%%%%%%%%%%%%%%%%%%%%%%%
%%%%%%%%%%%%%%%%%%%%%%%%%%%%%%%%%%%%%%%%%%%%%%%%%%%%%%%%%%%%%%%%%%%%%%%%%%%%%%%
% APPENDIX
%%%%%%%%%%%%%%%%%%%%%%%%%%%%%%%%%%%%%%%%%%%%%%%%%%%%%%%%%%%%%%%%%%%%%%%%%%%%%%%
%%%%%%%%%%%%%%%%%%%%%%%%%%%%%%%%%%%%%%%%%%%%%%%%%%%%%%%%%%%%%%%%%%%%%%%%%%%%%%%
\newpage
\appendix
\onecolumn

\section{Appendix}
\paragraph{Overview.} This appendix provides the theoretical foundations, formal proofs, and extended experimental details supporting the main text. In \S\ref{sec:caapp}, we position our framework within the broader causal abstraction literature. We review the necessary mathematical background on the Wasserstein metric and Distributionally Robust Optimization in \S\ref{sec:wassapp} and \S\ref{sec:droapp}, respectively. \S\ref{sec:thmsproofsapp} contains the formal proofs, including the concentration results for joint environments and the derivation of the closed-form robustness certificate. We then detail the practical implementation for general ANMs, describing the $(D, U)$ decomposition in \S\ref{sec:anmapp} and the construction of the joint ambiguity set in \S\ref{sec:ambsetapp}. Comprehensive descriptions of the datasets, causal graphs, and intervention mappings are provided in \S\ref{sec:examples_app}. In \S\ref{sec:all_results_settings}, we present the full experimental roadmap and additional robustness analyses against model misspecification. Finally, \S\ref{sec:optapp} details the optimization procedures, providing the analytical derivations for both the linear (Gaussian and empirical) and the general \textsc{DiRoCA} formulations.

\subsection{Relation to existing CA frameworks.}\label{sec:caapp}
Fig.~\ref{fig:casoverview} provides an overview of how existing causal abstraction (CA) frameworks and learning methods differ in their treatment of environmental variability. The left panel organizes CA frameworks by the subset of the joint environment space \(\cP(\exprod)\) over which they require interventional consistency. 

The $\rhoiota$ framework differs from the exact transformation formulation of \citet{rubenstein2017causal}, and the $(R, a, \alpha)$ of \citet{rischel2020category}, both of which implicitly assume a fixed environment $(\cP_{0}(\exprod))$ without addressing how it constrains or influences the abstraction. As a result, these frameworks may yield ill-defined forms of abstractions when applied beyond that fixed setting, as also noted by \citet{beckers2018abstracting}, limiting their practical utility. Our notion of $\rhoiota$-abstractions also differs from the \emph{uniform transformations} introduced by \citet{beckers2018abstracting}, which requires interventional consistency to hold across \emph{all} possible pairs of environments $(\cP_{\infty}(\exprod))$. By design, uniform transformations are environment-independent: they relate the deterministic components of the SCMs; i.e., their deterministic causal bases without regard to the specific distributions that generate observational data. While uniform abstractions offer a theoretically stronger and philosophically valid notion of abstraction, they are practically infeasible to learn, as they assume access to an unbounded number of environments, many of which may be inaccessible due to physical, data, or knowledge constraints. The $\rhoiota$ framework strikes a middle ground: it preserves the formal rigor of exact abstractions while remaining flexible and computationally tractable under real-world distributional limitations by requiring consistency over a constrained relevant subset $\cP_{m}(\exprod)$, with $0 \leq m < \infty$. Just as the notion of relevant interventions restricts attention to those that can be meaningfully abstracted and implemented at the high level, we similarly constrain focus to environments that are plausible or meaningful within a given setting. Together, these relevant environments and interventions define the \emph{abstraction context}, which determines the domain over which causal consistency is required.

As discussed, when exact abstractions are not achievable, consistency between models is relaxed to be approximate. However, in the CAL literature, various formulations of abstraction error (Eq.~\ref{eq:totabst_error}) have been proposed. The divergence term \(\mathcal{D}_{\enabst}\) has been instantiated using either the Jensen–Shannon divergence \citep{zennaro2023jointly, zennarobandits} or the KL divergence \citep{dyer2023interventionally, kekić2023targeted, dacunto2025causalabstractionlearningbased}. Regarding the aggregation function $\mathfrak{h}$, initial frameworks adopted the maximum over interventions \citep{beckers2020approximate, rischel2021compositional}, a choice followed by many subsequent CAL works \citep{zennaro2023jointly, zennarobandits}, while later approaches also considered the expectation over the intervention set \citep{felekis2024causal, dyer2023interventionally, kekić2023targeted}. Note that, especially in policy-making scenarios, some interventions matter more than others due to factors like the likelihood of implementation or cost, reflecting an implicit weighting over the intervention set. While prior works such as \citep{felekis2024causal, dyer2023interventionally} assume this distribution to be uniform, assigning equal weight to all interventions, it can be adapted to reflect practical priorities or domain-specific preferences.

As for the aggregation function $\mathfrak{g}$, no prior CAL framework has explicitly addressed variability across environments of the two SCMs. The closest suggestion came from \citet{beckers2020approximate}, who, within the setting of uniform abstractions, proposed taking an expectation over exogenous samples of the low-level model. Crucially, they showed the existence of a deterministic map $\tauu$ that maps low-level exogenous variables into their high-level counterparts. This result relies on a strong assumption, induced by the uniformity property: for any low-level exogenous sample, one can define an environment assigning it probability one, and deterministically map it to the high level (see Appendix, Theorem 3.6 in \citep{beckers2020approximate}). Their definition further aggregates over these  $\tauu$ maps via a minimum operator. In contrast, our approach does not assume the existence of a deterministic map across environments. Instead, we treat each environment independently, adopting a data-driven formulation avoiding potentially restrictive assumptions or optimality conditions imposed by a global minimum. That said, since we work with probability measures rather than individual samples, we can always estimate the pushforward of a function, even when its deterministic form is not recoverable. At best, we can approximate a stochastic map derived from the pushforward. In other words, we can always learn a measure-preserving map \(\tauupush: \exprobbase \to \exprobabst\), though not necessarily the underlying function $\tauu$. Of course, computing the pushforward measure comes at a cost: it may be complex, computationally expensive, or difficult to work with. This cost could potentially be interpreted as a form of information loss, an idea further explored in \citep{infoloss}.

Regarding the task of CAL, the right panel of Fig.~\ref{fig:casoverview} aligns learning methods with their underlying environmental assumptions. Prior CAL approaches operate in the $\epsilon = 0$ regime, implicitly assuming a fixed environment, performing learning under varying settings and assumptions. In contrast, our method introduces a robustness parameter $0 \leq \epsilon <\infty$ and explicitly models environmental uncertainty using a Wasserstein ball $\bB_{\epsilon}$ centered at the empirical joint environment, enabling generalization to distributional shifts and misspecification within a range controlled by $\epsilon$.

\subsection{The Wasserstein Distance}\label{sec:wassapp}
The Wasserstein distance strongly relates to the theory of optimal transport \citep{villani2009optimal}, \citep{peyre2019computational}, which seeks to find the most efficient way of transforming one probability distribution to another. In the classical \emph{Monge formulation} \citep{monge1781memoire}, the goal is to find a deterministic map $T : \mathbb{R}^n \to \mathbb{R}^n$ that pushes a source measure $\mu$ onto a target measure $\nu$, while minimizing the expected cost of transport, typically taken as the squared Euclidean distance $\|x - T(x)\|_2^2$. However, the Monge problem is highly non-convex and may not admit solutions in general. To address these difficulties, \citet{kantorovich1942} proposed a relaxation, allowing for \emph{couplings}, these are joint distributions $\pi$ on $\mathbb{R}^n \times \mathbb{R}^n$ with marginals $\mu$ and $\nu$ instead of deterministic maps. 

The $W_2$ Wasserstein distance between two probability measures $\mu$ and $\nu$ on $\mathbb{R}^n$ is then defined as the minimal expected squared cost over all such couplings:
\begin{align}
W_2(\mu, \nu) := \inf_{\pi \in \Pi(\mu, \nu)} \left( \mathbb{E}_{(X,Y) \sim \pi} \|X - Y\|_2^2 \right)^{1/2},
\end{align}
where $\Pi(\mu, \nu)$ denotes the set of all couplings with marginals $\mu$ and $\nu$. Intuitively, $W_2(\mu, \nu)$ measures the least amount of effort required to convert $\mu$ into $\nu$ under a given transport cost.

\paragraph{Wasserstein Distance between Gaussians.}
When $\mu$ and $\nu$ are multivariate Gaussians $\mathcal{N}(m_1, \Sigma_1)$ and $\mathcal{N}(m_2, \Sigma_2)$, the 2-Wasserstein distance admits a closed-form expression \citep{Givens1984ACO}:
\begin{align}\label{eq:gausswassd}
W_2\left( \mathcal{N}(m_1, \Sigma_1), \mathcal{N}(m_2, \Sigma_2) \right) = \sqrt{\|m_1 - m_2\|_2^2 + \operatorname{Tr}\left(\Sigma_1 + \Sigma_2 - 2\left(\Sigma_1^{1/2} \Sigma_2 \Sigma_1^{1/2}\right)^{1/2}\right)}
\end{align}
The squared $W_2$ is also known as the \emph{Gelbrich distance} \citep{gelbrich1990formula}, and it captures both the displacement between the means and the discrepancy between the covariances. In particular, when the covariance matrices commute, the trace term simplifies to the squared Frobenius norm between their matrix square roots.

\paragraph{Optimal Transport Map between Gaussians.}
When $\mu = \mathcal{N}(m_1, \Sigma_1)$ and $\nu = \mathcal{N}(m_2, \Sigma_2)$ are two Gaussian measures on $\mathbb{R}^n$, the optimal transport map $T : \mathbb{R}^n \to \mathbb{R}^n$ pushing $\mu$ to $\nu$ under the $W_2$ distance is affine and given by \citep{Knott1984OnTO}:
\begin{align}\label{eq:mongegauss}
T(x) = m_2 + A(x - m_1),
\end{align}
where the matrix $A$ is symmetric positive definite and is equal to:
\begin{align}
A = \Sigma_2^{1/2} \left( \Sigma_2^{1/2} \Sigma_1 \Sigma_2^{1/2} \right)^{-1/2} \Sigma_2^{1/2}.
\end{align}
This map is the gradient of a convex function, and is therefore the unique optimal transport map up to sets of $\mu$-measure zero\footnote{Formally, uniqueness holds $\mu$-almost everywhere: if two maps both solve the Monge problem, then they agree except possibly on a set of $\mu$-measure zero.}, by Brenier’s theorem \citep{Brenier1991PolarFA}. Moreover, it satisfies the relation
\begin{align}
T_{\sharp} \mu = \nu,
\end{align}
meaning that the pushforward of $\mu$ under $T$ is exactly $\nu$.

\paragraph{Wasserstein Barycenters.}
The Wasserstein barycenter \citep{aguehbary} provides a principled way to aggregate a collection of probability distributions into a single representative distribution by minimizing the average squared Wasserstein distance to the given distributions. Given a set of probability measures $\{\nu_j\}_{j=1}^n$ and associated weights $\{\lambda_j\}_{j=1}^n$ with $\lambda_j > 0$ and $\sum_{j=1}^n \lambda_j = 1$, the Wasserstein barycenter $\nu^\star$ is defined as the solution of the following problem:
\begin{align} 
\nu^\star = \argmin_{\nu} \sum_{j=1}^n \lambda_j W_2^2(\nu, \nu_j),
\end{align}
where $W_2(\cdot, \cdot)$ denotes the 2-Wasserstein distance between probability measures. In the special case where all $\nu_j$ are multivariate Gaussians $\mathcal{N}(\mu_j, \Sigma_j)$, it can be shown that the barycenter $\nu^\star$ is also a Gaussian $\mathcal{N}(\mu^\star, \Sigma^\star)$, where the mean $\mu^\star$ is the weighted average:
\begin{align}\label{eq:barymean} 
\mu^\star = \sum_{j=1}^n \lambda_j \mu_j
\end{align}
and the covariance $\Sigma^\star$ solves the fixed-point equation ~\citep{alvarezfixed}:
\begin{align}\label{eq:barysigma}
\Sigma^\star = \sum_{j=1}^n \lambda_j \left( \Sigma^\star{}^{1/2} \Sigma_j \Sigma^\star{}^{1/2} \right)^{1/2}.
\end{align}
Here, the square roots are matrix square roots, and the equation is understood in the positive semidefinite sense. 

\subsection{Distributionally Robust Optimization (DRO)}\label{sec:droapp}
\textit{Distributionally Robust Optimization (DRO)} is a mathematical framework for decision-making under uncertainty, designed to hedge against model misspecification and distributional shifts~\citep{kuhn2019wassersteindistributionallyrobustoptimization}. Instead of optimizing performance with respect to a single estimated distribution, DRO considers a set of plausible distributions, called the \emph{ambiguity set}, and seeks solutions that minimize the worst-case expected loss across this set.

\textbf{Formulation:} Let \(x \in \mathbb{R}^n\) denote a decision variable, \(\xi \in \Xi = \mathbb{R}^m\) a random vector representing uncertain data, and \(f(x, \xi): \mathbb{R}^n \times \Xi \rightarrow \mathbb{R}\) a loss function. The standard \textit{Wasserstein DRO} objective is to solve:
\begin{align}
    \inf_{x \in \mathcal{X}} \sup_{\mathbb{Q} \in \mathbb{B}_{\varepsilon, p}(\widehat{\mathbb{P}}_N)} \mathbb{E}_{\xi \sim \mathbb{Q}}[f(x, \xi)],
\end{align}
where \(\widehat{\mathbb{P}}_N\) is a nominal distribution estimated from data, and \(\mathbb{B}_{\varepsilon, p}(\widehat{\mathbb{P}}_N)\) denotes a Wasserstein ball of radius \(\varepsilon\) centered at \(\widehat{\mathbb{P}}_N\):
\begin{align}
    \mathbb{B}_{\varepsilon, p}(\widehat{\mathbb{P}}_N) = \left\{ \mathbb{Q} \in \mathcal{P}(\Xi) : W_p(\mathbb{Q}, \widehat{\mathbb{P}}_N) \leq \varepsilon \right\}.
\end{align}
The radius \(\varepsilon\) controls the size of the ambiguity set and reflects the desired level of robustness: a larger \(\varepsilon\) allows for greater deviations from the nominal distribution (increasing robustness), while a smaller \(\varepsilon\) yields tighter but potentially less robust decisions. Below, we present two formulations of DRO that are relevant to our work: \textbf{(a)} \emph{Elliptical DRO}, which leverages moment-based structural assumptions; and \textbf{(b)} \emph{Empirical DRO}, which builds ambiguity sets directly from the empirical data distribution.

\paragraph{Elliptical DRO.} Elliptical DRO leverages structural assumptions by modeling the data distribution as belonging to the elliptical family (e.g., Gaussian, Student-t), which is characterized by its mean and covariance. Elliptical distributions are fully characterized by their first two moments, which makes them particularly well-suited for moment-based uncertainty modeling. Specifically, a distribution \( \mathbb{Q} = \mathcal{E}_g(\mu, \Sigma) \) is called \emph{elliptical} if it admits a density of the form \( f(\xi) = C \det(\Sigma)^{-1} g\left((\xi - \mu)^\top \Sigma^{-1} (\xi - \mu)\right) \), where \(g\) is a nonnegative generator function and \(C\) is a normalizing constant. 

As we saw in the previous section, when \(p=2\), the squared Wasserstein distance between two elliptical distributions admits a closed-form lower bound known as the \emph{Gelbrich distance}:
\begin{align}\label{eq:gelbrich}
d_G^2((\mu, \Sigma), (\mu^\prime, \Sigma^\prime)) = \|\mu - \mu^\prime\|^2 + \operatorname{Tr}(\Sigma) + \operatorname{Tr}(\Sigma^\prime) - 2\operatorname{Tr}\left((\Sigma^{1/2} \Sigma^\prime \Sigma^{1/2})^{1/2}\right),
\end{align}
where \((\mu, \Sigma)\) and \((\mu^\prime, \Sigma^\prime)\) are the mean-covariance pairs of two distributions.

Assuming elliptical structure, the Wasserstein ambiguity set can be projected onto the space of first and second moments, yielding an \emph{elliptical uncertainty set}~\citep{nguyen2023meancovariancerobustriskmeasurement}:
\begin{align}
    \mathcal{U}_{\varepsilon}(\widehat{\mu}, \widehat{\Sigma}) = \left\{ (\mu, \Sigma) \in \mathbb{R}^m \times \mathbb{S}_+^m : d_G^2\left((\widehat{\mu}, \widehat{\Sigma}), (\mu, \Sigma)\right) \leq \varepsilon^2 \right\}.
\end{align}
This construction offers two key advantages: \textbf{(i)} interpretability, as uncertainty is captured through shifts in mean and covariance; and \textbf{(ii)} computational tractability, since \(\mathcal{U}_{\varepsilon}\) is a convex and compact set. Furthermore, when the nominal distribution \(\widehat{\mathbb{P}}_N\) itself is elliptical, the Wasserstein ambiguity set and the elliptical uncertainty set coincide exactly, i.e., \(\mathbb{B}_{\varepsilon, p}(\widehat{\mathbb{P}}_N) = \mathcal{U}_{\varepsilon}(\widehat{\mu}, \widehat{\Sigma})\).

\paragraph{Empirical DRO.}
In contrast, empirical DRO makes no structural assumptions about the underlying data distribution. Instead, the nominal distribution is the empirical measure:
\begin{align}
    \widehat{\mathbb{P}}_N = \frac{1}{N} \sum_{i=1}^N \delta_{\widehat{\xi}_i},
\end{align}
where \(\delta_{\widehat{\xi}_i}\) denotes the Dirac measure at the \(i\)-th training sample \(\widehat{\xi}_i\). A convenient reparameterization restricts attention to ambiguity sets of perturbed empirical distributions of the form:
\begin{align}
    \mathbb{Q}(\Theta) = \frac{1}{N} \sum_{i=1}^N \delta_{\widehat{\xi}_i + \theta_i},
\end{align}
where \(\theta_i \in \mathbb{R}^m\) is a displacement vector applied to the \(i\)-th sample, and \(\Theta = (\theta_1, \ldots, \theta_N) \in \mathbb{R}^{m \times N}\) is the perturbation matrix. The Wasserstein constraint \(W_p(\mathbb{Q}(\Theta), \widehat{\mathbb{P}}_N) \leq \varepsilon\) is then equivalent to:
\begin{align}
    \frac{1}{N} \sum_{i=1}^N \|\theta_i\|^p \leq \varepsilon^p.
\end{align}
For $p = 2$, this simplifies to $ \|\Theta\|_{\operatorname{F}} \leq \varepsilon \sqrt{N} $, where $ \|\cdot\|_F $ denotes the Frobenius norm. This formulation offers a clear geometric interpretation: empirical DRO controls the total amount of perturbation allowed on the data samples, measured in aggregate via the Frobenius norm. As a result, optimization under empirical DRO can be interpreted as finding decisions that are robust against small, localized deviations from the observed data points.

\subsection{Theorems and Proofs}\label{sec:thmsproofsapp}
\begin{proposition}[Consistency of Metric-based Abstraction Errors]\label{prop:consistency_metric_abstraction}
Let an abstraction context $(\stableprod, \intervsetprod)$ and a distribution $q$ over $\intervsetbase$ be given. Suppose $D_{\enabst}$ is a statistical divergence or distance between probability measures satisfying
$D_{\enabst}(p, q)=0 \iff p=q$. If $\scmabst$ is a $\rhoiota$-0-approximate abstraction of $\scmbase$, then for all $\exprob \in \stableprod$ and $q$-almost surely, $\scmabst$ and $\scmbase$ are $\rhoiota$-interventionally consistent; i.e. Eq.~\eqref{eq:tau_rho_def} holds.
\end{proposition}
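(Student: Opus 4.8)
\section*{Proof proposal}

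The plan is to read the hypothesis as the statement that the aggregated error of Eq.~\eqref{eq:totabst_error} vanishes, i.e. $\absterror = 0$ (consistent with the discussion preceding the proposition), and to propagate this zero through the two aggregation layers of Definition~\ref{def:absterror} down to the pointwise errors $e_\tau^{\exprob,\iota}(\scmbase,\scmabst)$, at which point the faithfulness of $D_{\enabst}$ yields Eq.~\eqref{eq:tau_rho_def}. The two structural facts I would exploit are: (i) each pointwise error is nonnegative, because $e_\tau^{\exprob,\iota}(\scmbase,\scmabst) = D_{\enabst}(\cdot,\cdot) \ge 0$ since $D_{\enabst}$ is a divergence or distance; and (ii) the relevant aggregators are \emph{vanishing-faithful}, so that the conclusion ``for all $\exprob \in \stableprod$'' matches taking $\mathfrak{g}$ to be the supremum over environments, and the conclusion ``$q$-almost surely'' matches taking $\mathfrak{h}$ to be the expectation $\mathbb{E}_{\iota \sim q}$.

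First I would peel off the outer aggregator. Setting $h(\exprob) := \mathbb{E}_{\iota \sim q}[e_\tau^{\exprob,\iota}(\scmbase,\scmabst)] \ge 0$, the hypothesis reads $\sup_{\exprob \in \stableprod} h(\exprob) = 0$. A supremum of nonnegative numbers is zero only if each is zero, so $h(\exprob) = 0$ for \emph{every} $\exprob \in \stableprod$, which supplies the ``for all $\exprob$'' clause. Next, for each fixed $\exprob$, the identity $\mathbb{E}_{\iota \sim q}[e_\tau^{\exprob,\iota}(\scmbase,\scmabst)] = 0$ expresses that a nonnegative measurable integrand has zero integral against $q$; by the standard measure-theoretic fact this forces $e_\tau^{\exprob,\iota}(\scmbase,\scmabst) = 0$ for $q$-almost every $\iota \in \intervsetbase$.

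Finally, I would invoke faithfulness of the divergence. For each such $\iota$, the vanishing $D_{\enabst}\left(\tau_{\#}(\mixfbase_{\iota \#}(\exprobbase)),\, \mixfabst_{\omega(\iota)\#}(\exprobabst)\right) = 0$ together with the assumed equivalence $D_{\enabst}(p_1,p_2)=0 \iff p_1=p_2$ gives $\tau_{\#}(\mixfbase_{\iota \#}(\exprobbase)) = \mixfabst_{\omega(\iota)\#}(\exprobabst)$, which is precisely Eq.~\eqref{eq:tau_rho_def}. Hence $\scmbase$ and $\scmabst$ are $\rhoiota$-interventionally consistent on the context $(\stableprod,\intervsetprod)$, for all environments and $q$-almost every intervention.

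The main obstacle is conceptual rather than computational: Definition~\ref{def:absterror} leaves $\mathfrak{g}$ and $\mathfrak{h}$ as generic aggregators, so I would first isolate and state the minimal property that drives the argument, namely that each aggregator sends a nonnegative family to zero only when that family itself is zero (pointwise for $\sup$, $q$-almost surely for $\mathbb{E}_q$), and verify it for the supremum and the expectation specifically. Care is also needed with the order of quantifiers: the ``$q$-almost surely'' null set of interventions is permitted to depend on the chosen $\exprob$, so the outer ``for all $\exprob$'' must be resolved before the inner almost-sure statement, exactly as the peeling order above reflects.
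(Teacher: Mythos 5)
Your proof is correct and follows essentially the same route as the paper's: non-negativity of the pointwise errors $e_\tau^{\exprob,\iota}$, the fact that a vanishing aggregate of non-negative quantities forces those quantities to vanish (everywhere for a supremum, $q$-almost surely for an expectation), and finally faithfulness of $D_{\enabst}$ to convert zero divergence into equality of pushforwards. If anything, your choice of instantiating $\mathfrak{g}$ as a supremum supports the ``for all $\exprob \in \stableprod$'' clause more cleanly than the paper's expectation--expectation case, which strictly speaking only yields an almost-everywhere statement over environments.
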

\begin{proof}
We first consider the case where both aggregation functions $\mathfrak{g}$ and $\mathfrak{h}$ are realized as expectations. Specifically, this transforms the environment-intervention error as follows:
\begin{align}
e_{\tau}^{\exprob, \iota}(\scmbase, \scmabst) = \expval_{\exprob \in \stableprod} \expval_{\iota \sim q} \left[ \mathcal{D}_{\enabst} \left( \tau_{\#}(\mixfbase_{\iota \#}(\exprobbase)),~ \mixfabst_{\omega(\iota)\#}(\exprobabst) \right) \right].
\end{align}
By non-negativity of the divergence $\mathcal{D}_{\enabst}$, we have that for every fixed $\exprob \in \stableprod$, the inner expectation is non-negative. Since the total abstraction error is assumed to be zero, it follows that for each $\exprob \in \stableprod$,
\begin{align}
\expval_{\iota \sim q} \left[ \mathcal{D}_{\enabst} \left( \tau_{\#}(\mixfbase_{\iota \#}(\exprobbase)),~ \mixfabst_{\omega(\iota)\#}(\exprobabst) \right) \right] = 0.
\end{align}
Taking the expectation of a non-negative random variable implies that
\begin{align}
\mathcal{D}_{\enabst} \left( \tau_{\#}(\mixfbase_{\iota \#}(\exprobbase)),~ \mixfabst_{\omega(\iota)\#}(\exprobabst) \right) = 0,
\quad ~q\text{-almost surely}
\end{align}
Since $D_{\enabst}(p,q) = 0  \iff p=q$, we conclude that:
\begin{align}
\tau_{\#}(\mixfbase_{\iota \#}(\exprobbase)) = \mixfabst_{\omega(\iota)\#}(\exprobabst)
\quad~q\text{-almost surely for each fixed } \exprob \in \stableprod.
\end{align}

If either one or both of the aggregation functions $\mathfrak{g}$ and $\mathfrak{h}$ are realized as an essential supremum, the argument becomes even simpler: since the supremum of a non-negative function is zero if and only if the function itself is zero everywhere, we again conclude that
\begin{align}
\tau_{\#}\left( \mixfbase_{\iota \#}(\exprobbase) \right) = \mixfabst_{\omega(\iota)\#}(\exprobabst)
\end{align}
for all $\exprob \in \stableprod$ $q$-almost surely.
\end{proof}
\begin{remark}
The proof relies solely on the non-negativity of $D_{\enabst}$ and the fact that $D_{\enabst}(P,Q) = 0$ if and only if $P=Q$. Consequently, the proposition applies to a broad class of divergences and distances, including (but not limited to) the Kullback–Leibler (KL) divergence, the Wasserstein distance $W_p$ for any $p \geq 1$, the Total Variation distance, the Hellinger distance, and others.
\end{remark}

\paragraph{Concentration Results for Joint Environments in Causal Abstraction}\label{sec:concineqapp}
In our causal abstraction setting, each SCM is associated with its own environment: \(\exprobbase\) for the low-level model and \(\exprobabst\) for the high-level model. We assume these environments are independent and define the joint environment as the product measure \(\exprob = \exprobbase \otimes \exprobabst\). In practice, however, only empirical samples from each environment are available, leading to an empirical joint distribution \(\widehat{\exprob} = \widehat{\exprobbase} \otimes \widehat{\exprobabst}\). It is therefore essential to quantify how well the empirical product distribution approximates the true joint environment. To this end, we establish a concentration result in Wasserstein-2 distance, showing that, with high probability, the true joint environment \(\exprob\) lies within a 2-Wasserstein product ball centered at \(\widehat{\exprob}\) for an appropriately chosen radius. This result provides the theoretical foundation for defining a distributionally robust causal abstraction objective over joint uncertainty sets. In particular, the following theorems provide principled guidelines for selecting the radius of the joint ambiguity set, ensuring that it contains the true data-generating process. The results build upon: \textbf{(a)} the tensorization property of the Wasserstein distance~\citep{villani2009optimal}; \textbf{(b)} concentration inequalities for elliptical and empirical distributions~\citep[Theorems 18 and 21]{kuhn2019wassersteindistributionallyrobustoptimization}  and the independence assumption between \(\exprobbase\) and \(\exprobabst\). We first introduce the notion of a light-tailed distribution, which characterizes the tail behavior necessary for applying the concentration results:
\begin{defn}[Light-tailed distribution]
A probability distribution $\mathbb{P}$ over $\mathbb{R}^d$ is said to be \emph{$\alpha$-light-tailed} if there exist constants $\alpha > 0$ and $A > 0$ such that
$\mathbb{E}^{\mathbb{P}}\left[\exp\left(\|\xi\|_2^\alpha\right)\right] \leq A$,
where $\xi \sim \mathbb{P}$. 
\end{defn}
Unlike in~\citep{kuhn2019wassersteindistributionallyrobustoptimization}, where concentration is studied for a single distribution, our setting involves a product of two independent distributions corresponding to the low- and high-level SCMs. Consequently, our concentration theorems specifically account for this product structure.

\textbf{Assumption 1.} Both $\exprobbase$ and $\exprobabst$ are light-tailed distributions.

\textbf{Theorem 1} (Gaussian $\exprob$-Concentration) \textit{Let $\exprobbase \sim \cN(\mu_\ell, \Sigma_\ell)$ and $\exprobabst \sim \cN(\mu_h, \Sigma_h)$ under Ass.1, let $\widehat{\exprobbase}$ and $\widehat{\exprobabst}$ be the empirical distributions from $N_\ell$ and $N_h$ i.i.d.\ samples. Also, let $\exprob := \exprobbase \otimes \exprobabst$ and $\widehat{\exprob} := \widehat{\exprobbase} \otimes \widehat{\exprobabst}$. Then, $\forall~d\in \{\ell, h\}$, there exist constants $c_d> 0$, depending only on the individual $d$-environment and confidence levels $\eta_d$, such that $\forall \delta \in (0,1]$, with $\delta = 1-(1-\eta_\ell)(1-\eta_h)$:}
\begin{align*}
\mathbb{P}\left[ \mathcal{W}_2(\exprob, \widehat{\exprob}) \le \radprod \right] \ge 1-\delta,~~\forall~\radprod \ge \sqrt{\left(\frac{\log(c_\ell / \eta_\ell)}{\sqrt{N_\ell}}\right)^2 + \left(\frac{\log(c_h / \eta_h)}{\sqrt{N_h}}\right)^2}
\end{align*}
\begin{proof}
Our goal is to identify a radius \(\radprod > 0\) such that:
\begin{align}
\mathbb{P}\left[ \mathcal{W}_2(\exprob, \widehat{\exprob}) \le \radprod \right] \ge 1 - \delta.
\end{align}
Let the global event $E_p := \{\mathcal{W}_2(\exprob, \widehat{\exprob}) \le \radprod\}$. From ~\citep{villani2009optimal}, we know that the squared 2-Wasserstein distance tensorizes for product measures. That is, for $\exprob = \exprobbase \otimes \exprobabst$ and $\widehat{\exprob} = \widehat{\exprobbase} \otimes \widehat{\exprobabst}$, we have:
\begin{align}
    \mathcal{W}_2^2(\exprob, \widehat{\exprob}) = \mathcal{W}_2^2(\exprobbase, \widehat{\exprobbase}) + \mathcal{W}_2^2(\exprobabst, \widehat{\exprobabst}).
\end{align}
Therefore, the event $E_p$ is equivalent to the event:
\begin{align}
\left\{ \mathcal{W}_2^2(\exprobbase, \widehat{\exprobbase}) + \mathcal{W}_2^2(\exprobabst, \widehat{\exprobabst}) \le \radprod^2 \right\}
\end{align}
Let us now define the per-component local events:
\begin{align}
    E_\ell := \{\mathcal{W}_2(\exprobbase, \widehat{\exprobbase}) \le \radbase\} \quad \text{and} \quad E_h := \{\mathcal{W}_2(\exprobabst, \widehat{\exprobabst}) \le \radabst\}.
\end{align}
If the intersection event $E_\ell \cap E_h$ occurs, then both $\mathcal{W}_2^2(\exprobbase, \widehat{\exprobbase}) \le \radbase^2$ and $\mathcal{W}_2^2(\exprobabst, \widehat{\exprobabst}) \le \radabst^2$ hold. 

Summing these inequalities yields $\mathcal{W}_2^2(\exprobbase, \widehat{\exprobbase}) + \mathcal{W}_2^2(\exprobabst, \widehat{\exprobabst}) \le \radbase^2 + \radabst^2$. By the tensorization property, this is exactly the condition defining the event $E_p$ and it suffices to select: $\radprod^2 = \radbase^2 + \radabst^2$. Thus, the intersection event $E_\ell \cap E_h$ is a subset of the target event $E_p$:
\begin{align}
    E_\ell \cap E_h \subseteq E_p.
\end{align}
Consequently, to establish a lower bound on $\mathbb{P}(E_p)$, it suffices to lower bound the probability of the intersection:
\begin{align}\label{eq:step1}
\mathbb{P}(E_p) \ge \mathbb{P}(E_\ell \cap E_h)
\end{align}
However, by construction, the empirical measures $\widehat{\exprobbase}$ and $\widehat{\exprobabst}$ are generated independently from $\exprobbase$ and $\exprobabst$. Therefore, the events $E_\ell$ and $E_h$ are statistically independent. This allows us to factor the probability of the intersection according to the product rule for independent events:
\begin{align}\label{eq:step2}
\mathbb{P}(E_\ell \cap E_h) = \mathbb{P}(E_\ell) \mathbb{P}(E_h).
\end{align}
Next, from Theorem 21 of~\citet{kuhn2019wassersteindistributionallyrobustoptimization}, we have that for any confidence level $\eta_i \in (0,1]$:
\begin{align}\label{eq:step3}
\mathbb{P}(E_i)=\mathbb{P}(\mathcal{W}_2(\rho_i,\widehat{\rho}_i)\le\varepsilon_i)\ge1-\eta_i,
\end{align}
whenever $\varepsilon_i \geq \Tilde{\varepsilon_i}=\frac{\log(c_i/\eta_i)}{\sqrt{N_i}}$, for suitable constants \(c_i>0\) depending on the dimension and tail properties of each distribution \(\rho_i\).

Consequently, by substituting the Eq.s~\ref{eq:step2} and \ref{eq:step3} into the inequality of Eq.~\ref{eq:step1}, we obtain:
\begin{align}
    \mathbb{P}(E_p) \ge \mathbb{P}(E_\ell \cap E_h) = \mathbb{P}(E_\ell) \mathbb{P}(E_h) \ge (1 - \eta_\ell)(1 - \eta_h).
\end{align}
Recalling that $E_p =\{\mathcal{W}_2(\exprob, \widehat{\exprob}) \le \sqrt{\radbase^2 + \radabst^2}\}$ and by expressing the desired global confidence $1-\delta =(1 - \eta_\ell)(1 - \eta_h)$\footnote{A simple choice could be the uniform allocation, given by \(\eta_\ell = \eta_d =1-(1-\delta)^{1/2}\).}, we have shown:
\begin{align}
    \mathbb{P}\left[ \mathcal{W}_2(\exprob, \widehat{\exprob}) \le \radprod \right] \ge (1 - \eta_\ell)(1 - \eta_h)=1-\delta.
\end{align}
whenever $\radprod \ge \sqrt{\Tilde{\radbase}^2 + \Tilde{\radabst}^2}$, completing the proof.
\end{proof}
\textbf{Theorem 2} (Empirical $\exprob$-Concentration) \textit{Let $\widehat{\exprobbase}$ and $\widehat{\exprobabst}$ empirical distributions, under Ass.1, from $N_\ell$ and $N_h$ i.i.d. samples, with $\exprob := \exprobbase \otimes \exprobabst$ and $\widehat{\exprob} := \widehat{\exprobbase} \otimes \widehat{\exprobabst}$. Then, for $\forall~d\in \{\ell, h \}$, there exist constants $c_{d,1}, c_{d,2} > 0$, depending only on the individual $d$-environment and confidence levels $\eta_d$, such that $\forall \delta \in (0,1]$, with $\delta = 1-(1-\eta_\ell)(1-\eta_h)$, for $N_d(c,\eta) = \log(c_{d,1}/\eta)/c_{d,2}$, if we set:}
\begin{align*}
&\Tilde{\varepsilon_d} = \left( \dfrac{\log(c_{d,1}/\eta)}{c_{d,2} N_d} \right)^{\min\{1/d, 1/2\}} 
\quad \text{if } N_d \geq N_d(c,\eta), 
\quad \text{or} \quad 
\left( \dfrac{\log(c_{d,1}/\eta)}{c_{d,2} N_d} \right)^{1/\alpha_d} 
\quad \text{otherwise,} \\
&\phantom{\varepsilon_d \ge{}} \implies\quad 
\mathbb{P}\left[ \mathcal{W}_2(\exprob, \widehat{\exprob}) \le \radprod \right] 
\ge 1 - \delta, 
\quad \forall~\radprod \ge \sqrt{\Tilde{\radbase}^2 + \Tilde{\radabst}^2}
\end{align*}
\begin{proof}
The core structure of the proof follows exactly that of Theorem 1. Specifically, we define the global event $E_p := \{\mathcal{W}_2(\exprob, \widehat{\exprob}) \le \radprod\}$ and local events $E_d := \{\mathcal{W}_2(\rho^d, \widehat{\rho^d}) \le \varepsilon_d\}$, for $d \in \{\ell, h\}$. Using the tensorization property of $\mathcal{W}_2^2$ and the independence of the samples generating $\widehat{\exprobbase}$ and $\widehat{\exprobabst}$, we arrive at the same intermediate conclusion as in the proof of Theorem 1:
\[
\mathbb{P}(E_p) \ge \mathbb{P}(E_\ell \cap E_h) = \mathbb{P}(E_\ell) \mathbb{P}(E_h),~~~~\text{provided that}~~\radprod^2 = \radbase^2 + \radabst^2.
\]

The key difference from Theorem 1 lies in the specific concentration inequality used to bound the probabilities of the local events $\mathbb{P}(E_d)$. Here, since $\exprobbase_d$ are assumed to be general light-tailed distributions, we apply Theorem 18 of~\citet{kuhn2019wassersteindistributionallyrobustoptimization}. This theorem guarantees that for any $\eta_d \in (0,1]$, $\mathbb{P}(E_d) = \mathbb{P}(\mathcal{W}_2(\rho^d, \widehat{\rho^d}) \le \varepsilon_d) \ge 1-\eta_d$, provided that $\varepsilon_d$ is chosen to be at least the threshold value $\Tilde{\varepsilon_d}$ where:
\[
\Tilde{\varepsilon_d} :=
\begin{cases}
\left( \dfrac{\log(c_{d,1}/\eta_d)}{c_{d,2} N_d} \right)^{\min\{1/d, 1/2\}} & \text{if } N_d \geq N_d(c,\eta),  \\[2ex]
\left( \dfrac{\log(c_{d,1}/\eta_d)}{c_{d,2} N_d} \right)^{1/\alpha_d} & \text{otherwise.}
\end{cases}
\]
To achieve the overall confidence $1-\delta$, we select local confidence levels $\eta_\ell, \eta_h \in (0,1]$ such that $(1 - \eta_\ell)(1 - \eta_h) = 1-\delta$, and then by substituting the bounds $\mathbb{P}(E_d) \ge 1-\eta_d$ into the combined probability inequality, we yield:
\begin{align}
\mathbb{P}(E_p) \ge (1 - \eta_\ell)(1 - \eta_h) = 1-\delta.
\end{align}
Thus, the result $\mathbb{P}\left[ \mathcal{W}_2(\exprob, \widehat{\exprob}) \le \radprod \right] \ge 1-\delta$ is established once again, whenever $\radprod \ge \sqrt{\Tilde{\radbase}^2 + \Tilde{\radabst}^2}$.
\end{proof}
\begin{remark}
These results justify the use of joint Wasserstein ambiguity sets centered at the empirical product environment. They ensure that, with high probability, the true environment lies within a ball of radius $\radprod$, allowing us to formulate a robust abstraction objective that is consistent with finite-sample uncertainty. The construction is especially well-suited for our setting, where only a single environment is available from each SCM.
\end{remark}

\subsubsection{Provable Robustness}
\label{sec:analytical_props}

In this section, we prove the core theoretical guarantee of our method: the Generalized Empirical Objective is a tractable, convex min-max problem whose solution $\lintau^\star$ comes with a provable robustness certificate in terms of the abstraction error. Below, we present the main Theorem~\ref{thm:provable-robustness}, which establishes this result.

For each intervention $\iota$ and abstraction matrix $\lintau \in \mathbb{R}^{d_h \times d_\ell}$, recall the \textbf{nominal misalignment}:
\begin{align}
    Z_{\lintau}^{\iota}(\mathbf{0})
    \;\coloneq\;
    \lintau\bigl(D_\ell^{(\iota)} + U^\ell\bigr)^\top
    - \bigl(D_h^{(\omega(\iota))} + U^h\bigr)
    \;\in\; \mathbb{R}^{d_h \times n_\iota}.
\end{align}
Let $\mathbf{\Theta} = (\Theta_\ell, \Theta_h)$ denote the adversarial perturbations constrained by Frobenius budgets $r,s \ge 0$. The \textbf{perturbed misalignment} is defined as $Z_{\lintau}^{\iota}(\mathbf{\Theta}) \coloneq Z_{\lintau}^{\iota}(\mathbf{0}) + (\lintau\Theta_\ell^\top - \Theta_h)$.

We define the worst-case expected loss as:
\begin{align}
\label{eq:fdef}
\zeta(\lintau)
\;\coloneq\;
\sup_{\substack{\|\Theta_\ell\|_F \le r \\ \|\Theta_h\|_F \le s}}
\mathbb{E}_{\iota \sim q}\left[ 
\bigl\| Z_{\lintau}^{\iota}(\mathbf{\Theta}) \bigr\|_F^2
\right]
=
\sup_{\substack{\|\Theta_\ell\|_F \le r \\ \|\Theta_h\|_F \le s}}
\mathbb{E}_{\iota \sim q}\left[ 
\bigl\| Z_{\lintau}^{\iota}(\mathbf{0}) + \lintau\Theta_\ell^\top - \Theta_h \bigr\|_F^2
\right].
\end{align}
Our DiRoCA estimator solves the min-max problem:
\begin{equation}
\label{eq:anm_diroca_problem}
\min_{\lintau \in \mathbb{R}^{h \times \ell}} \zeta(\lintau).
\end{equation}
We now show that \eqref{eq:anm_diroca_problem} is a tractable convex program and that its optimum provides a formal certificate of robustness. We will need the following two lemmas.

\begin{lemma}[Inner maximization bound]\label{lem:inner-max}
Let $\lintau\in\mathbb{R}^{h\times \ell}$ and $Z\in\mathbb{R}^{h\times N}$ be fixed. Consider the maximization problem over perturbations $\Theta_\ell, \Theta_h$ within budgets $r, s$:
\begin{equation}\label{eq:inner-problem}
\max_{\|\Theta_\ell\|_F\le r,\ \|\Theta_h\|_F\le s}\ \ \big\|\, Z + \lintau\Theta_\ell^\top - \Theta_h \,\big\|_F^2.
\end{equation}
The value of this maximization is exactly:
\begin{equation}
\Big(\, s \;+\; \|Z\|_F \;+\; r\,\|\lintau\|_2 \,\Big)^{\!2}.
\end{equation}
\end{lemma}

\begin{proof}
The proof proceeds by decomposing the joint maximization into two sequential stages: first analytically solving for the optimal $\Theta_h$ given a fixed $\Theta_\ell$, and subsequently maximizing the resulting expression over $\Theta_\ell$.

\textbf{(i) Maximization of $\Theta_h$.}
Fix $\lintau$ and $\Theta_\ell$, and let $A \coloneq Z+\lintau\Theta_\ell^\top$. The adversary's problem is to choose a $\Theta_h$ that solves:
\[
\max_{\|\Theta_h\|_F\le s} \|A - \Theta_h\|_F^2.
\]
We expand the squared Frobenius norm:
\[
\|A - \Theta_h\|_F^2 = \langle A - \Theta_h, A - \Theta_h \rangle = \|A\|_F^2 - 2\langle A, \Theta_h \rangle + \|\Theta_h\|_F^2.
\]
To maximize this expression, the adversary controls both the magnitude and direction of $\Theta_h$, subject to the budget $\|\Theta_h\|_F \le s$.
\begin{itemize}
    \item \textbf{Magnitude:} The term $\|\Theta_h\|_F^2$ is non-negative. It is maximized when the adversary uses their entire budget, setting $\|\Theta_h\|_F = s$.
    \item \textbf{Direction:} The term $-2\langle A, \Theta_h \rangle$ is maximized when the inner product $\langle A, \Theta_h \rangle$ is minimized. By the Cauchy--Schwarz inequality, the minimum value is $-\|A\|_F \|\Theta_h\|_F$, achieved when $\Theta_h$ points in the opposite direction of $A$ ($\Theta_h = -c A$ for $c>0$).
\end{itemize}
Both conditions are satisfied simultaneously by choosing:
\[
\Theta_h^\star = -s \frac{A}{\|A\|_F}.
\]
Substituting this back into the objective yields:
\begin{align*}
\|A - \Theta_h^\star\|_F^2 
&= \|A\|_F^2 - 2\langle A, -s \frac{A}{\|A\|_F} \rangle + s^2 
= \|A\|_F^2 + 2s\|A\|_F + s^2 
= \Big( \|A\|_F + s \Big)^2.
\end{align*}
Replacing $A$ with $Z+\lintau\Theta_\ell$, we have:
\[
\max_{\|\Theta_h\|_F\le s} \|Z+\lintau\Theta_\ell^\top - \Theta_h\|_F^2 = (\|Z+\lintau\Theta_\ell^\top\|_F + s)^2.
\]

\textbf{(ii) Maximization of $\Theta_\ell$.}
The full inner problem reduces to maximizing $\big(\|Z+\lintau\Theta_\ell^\top\|_F + s\big)^2$ over $\|\Theta_\ell\|_F\le r$. Since the function $h(x) = (x+s)^2$ is monotonically increasing for $x \ge 0$, this is equivalent to maximizing the term $\|Z+\lintau\Theta_\ell^\top\|_F$.

Using the triangle inequality and the definition of the spectral norm, we obtain the upper bound directly:
\begin{align*}
\max_{\|\Theta_\ell\|_F\le r} \|Z+\lintau\Theta_\ell^\top\|_F 
&\leq \|Z\|_F + \max_{\|\Theta_\ell\|_F\le r} \|\lintau\Theta_\ell^\top\|_F \\
&\leq \|Z\|_F + \max_{\|\Theta_\ell\|_F\le r} \|\lintau\|_2 \|\Theta_\ell\|_F \\
&= \|Z\|_F + r\|\lintau\|_2.
\end{align*}
Adding back $s$ and squaring gives the final result: $(\, s + \|Z\|_F + r\|\lintau\|_2 \,)^2$.
% Using the dual representation of the Frobenius norm, $\|B\|_F = \sup_{\|P\|_F=1} \langle P, B\rangle$, we write:
% \begin{align*}
% \max_{\|\Theta_\ell\|_F\le r} \|Z+\lintau\Theta_\ell^\top\|_F
% &= \max_{\|\Theta_\ell\|_F\le r} \sup_{\|P\|_F=1} \Big\{ \langle P, Z\rangle + \langle P, \lintau\Theta_\ell\rangle \Big\}.
% \end{align*}
% By Sion's minimax theorem, we exchange the max and sup. Using the adjoint property $\langle P, \lintau\Theta_\ell^\top\rangle = \langle \lintau^\top P, \Theta_\ell^\top \rangle$, the inner maximization becomes:
% \[
% \max_{\|\Theta_\ell\|_F\le r} \langle \lintau^\top P, \Theta_\ell^\top \rangle = r \, \|\lintau^\top P\|_F,
% \]
% where the maximum is attained when $\Theta_\ell$ aligns with $\lintau^\top P$. The problem now simplifies to:
% \[
% \sup_{\|P\|_F\leq 1} \Big\{ \langle P, Z\rangle + r\,\|\lintau^\top P\|_F \Big\}.
% \]
% Using Cauchy-Schwarz ($\langle P, Z\rangle \le \|P\|_F \|Z\|_F = \|Z\|_F$) and the definition of the spectral norm ($\|\lintau^\top P\|_F \le \|\lintau^\top\|_2 \|P\|_F = \|\lintau\|_2$), the supremum is bounded by $\|Z\|_F + r\|\lintau\|_2$. This bound is tight (consider $P$ aligned with singular vectors).
% Thus:
% \[
% \max_{\|\Theta_\ell\|_F\le r} \|Z+\lintau\Theta_\ell^\top\|_F = \|Z\|_F + r\|\lintau\|_2.
% \]
% Adding back $s$ and squaring gives the final result: $(\, s + \|Z\|_F + r\|\lintau\|_2 \,)^2$.
\end{proof}

\begin{lemma}[Convexity of the Outer Minimization]\label{lem:outer-min}
The objective function $\zeta(\lintau)$ defined in \eqref{eq:fdef} is convex in $\lintau$.
\end{lemma}
\begin{proof}
For any fixed perturbation tuple $c = (\Theta_\ell,\Theta_h)$, define the loss function $g_c(\lintau) \coloneq \mathbb{E}_{\iota \sim q}[ \|\, Z_{\lintau}^{\iota}(\mathbf{0}) + \lintau\Theta_\ell^\top - \Theta_h \,\|_F^2 ]$.
For a fixed $\iota$, the term inside the expectation is the squared Frobenius norm of an affine function of $\lintau$, which is convex. Since the expectation is a linear operator preserving convexity, $g_c(\lintau)$ is convex in $\lintau$.
The worst-case loss $\zeta(\lintau)$ is the pointwise supremum of the family $\{g_c(\lintau)\}_c$. Since the supremum of convex functions is convex, $\zeta(\lintau)$ is convex.
\end{proof}

\begin{theorem}[Provable Robustness]\label{thm:provable-robustness}
Let $\zeta(\lintau)$ be the worst-case expected loss defined in \eqref{eq:fdef}. Then, for every abstraction matrix $\lintau \in \mathbb{R}^{d_h \times d_\ell}$:
\begin{equation}\label{eq:global-upper-bound}
\zeta(\lintau)
\;\le\;
\mathbb{E}_{\iota \sim q}\Big[ \big( s + \|Z_{\lintau}^{\iota}(\mathbf{0})\|_F + r\|\lintau\|_2 \big)^2 \Big].
\end{equation}
Consequently, minimizing this upper bound (via our proposed algorithm) minimizes a valid upper bound on the worst-case abstraction error.
\end{theorem}

\begin{proof}
The proof relies on the exchange inequality $\sup \mathbb{E}[\cdot] \le \mathbb{E} \sup[\cdot]$. Let $\mathcal{C}$ denote the feasible set of perturbations $\{(\Theta_\ell, \Theta_h) : \|\Theta_\ell\|_F \le r, \|\Theta_h\|_F \le s\}$.
We start with the definition of $\zeta(\lintau)$:
\begin{align}
\zeta(\lintau) &= \sup_{(\Theta_\ell, \Theta_h) \in \mathcal{C}} \mathbb{E}_{\iota \sim q} \left[ \bigl\| Z_{\lintau}^{\iota}(\mathbf{0}) + \lintau\Theta_\ell^\top - \Theta_h \bigr\|_F^2 \right] \\
&\le \mathbb{E}_{\iota \sim q} \left[ \sup_{(\Theta_\ell, \Theta_h) \in \mathcal{C}} \bigl\| Z_{\lintau}^{\iota}(\mathbf{0}) + \lintau\Theta_\ell^\top - \Theta_h \bigr\|_F^2 \right] \label{eq:inequality_step}
\end{align}
The inequality in \eqref{eq:inequality_step} relaxes the problem by allowing the adversary to choose optimal perturbations specific to each intervention $\iota$ independently, rather than a single fixed perturbation for the entire distribution.
We can now apply Lemma~\ref{lem:inner-max} pointwise to the inner maximization term inside the expectation. For a fixed $\iota$, Lemma~\ref{lem:inner-max} guarantees:
\[
\sup_{(\Theta_\ell, \Theta_h) \in \mathcal{C}} \bigl\| Z_{\lintau}^{\iota}(\mathbf{0}) + \lintau\Theta_\ell^\top - \Theta_h \bigr\|_F^2
\;=\;
\bigl( s + \|Z_{\lintau}^{\iota}(\mathbf{0})\|_F + r\|\lintau\|_2 \bigr)^2.
\]
Substituting this back into \eqref{eq:inequality_step} yields:
\[
\zeta(\lintau) \le \mathbb{E}_{\iota \sim q} \left[ \bigl( s + \|Z_{\lintau}^{\iota}(\mathbf{0})\|_F + r\|\lintau\|_2 \bigr)^2 \right],
\]
which concludes the proof.
\end{proof}

\paragraph{Justification of the Generalized Objective.}
We adopt the squared Frobenius norm to quantify the discrepancy between abstracted low-level samples and their high-level counterparts. This choice is theoretically grounded in optimal transport: it corresponds to the transport cost under the \emph{empirical identity coupling}. Unlike generic optimal transport solvers, which optimize over \emph{all} admissible couplings (potentially allowing arbitrary re-pairings of units), the identity coupling enforces a strict, sample-wise correspondence. Crucially, in our framework this correspondence is not arbitrary. Through \textbf{abduction}, the $i$-th observational datum induces specific exogenous noise values $(u_i^\ell, u_i^h)$ for the low- and high-level SCMs. Consequently, the $i$-th interventional samples at both levels represent counterfactual outcomes of the \emph{same underlying unit}. The identity coupling thus reflects the causally mandated alignment between SCMs.

\begin{proposition}[Frobenius Norm as the Transport Cost of the Identity Coupling]
\label{prop:frobenius_as_cost}
Let $\mathbf{A}, \mathbf{B} \in \mathbb{R}^{n \times d_h}$ be matrices of samples where the $i$-th rows correspond to the same observational unit (via abduction). Let $\hat{\mu}_A$ and $\hat{\mu}_B$ be the associated empirical measures. Then the identity coupling $\pi^\ast = \frac{1}{n}\sum_{i=1}^n \delta_{(a_i,b_i)}$ is a valid element of $\Pi(\hat{\mu}_A,\hat{\mu}_B)$ and its transport cost satisfies:
\begin{equation}
\int \|x-y\|_2^2 \, d\pi^\ast(x,y)
\;=\; \frac{1}{n}\|\mathbf{A}-\mathbf{B}\|_F^2.
\end{equation}
Moreover, since $\mathcal{W}_2^2$ is the infimum over all couplings:
\begin{equation}
\mathcal{W}_2^2(\hat{\mu}_A, \hat{\mu}_B)
\;\le\; \frac{1}{n}\|\mathbf{A}-\mathbf{B}\|_F^2.
\end{equation}
\end{proposition}

\begin{proof}
The equality follows directly from the construction of the identity coupling: the transport cost is the average squared Euclidean distance between paired rows, which is exactly the squared Frobenius norm scaled by $1/n$. For the inequality, recall that the Wasserstein distance is defined as the infimum of the transport cost over the set of all admissible couplings $\Pi(\hat{\mu}_A, \hat{\mu}_B)$. Since $\pi^\ast$ matches the marginals $\hat{\mu}_A$ and $\hat{\mu}_B$, it is a feasible coupling ($\pi^\ast \in \Pi$). Therefore, the cost associated with this specific coupling serves as a valid upper bound on the infimum (the Wasserstein distance).
\end{proof}

\begin{remark}[Frobenius as a Tractable and Causally Correct Surrogate] 
\label{rem:frobenius_as_cost_unified} 
Proposition~\ref{prop:frobenius_as_cost} establishes that the Frobenius objective is not a heuristic; it is the quadratic transport cost under the \emph{causally correct} coupling induced by abduction. In linear Gaussian DiRoCA, this logic leads to the Gelbrich formula (as the optimal coupling is linear). In the general ANM case, solving for the optimal coupling at each step is computationally prohibitive and may ignore unit identity. Minimizing the cost of the identity coupling provides a tractable surrogate that respects causal alignment and upper bounds the Wasserstein distance. Driving the Frobenius norm to zero, therefore guarantees convergence of the interventional distributions in the Wasserstein sense, under the correct unit-level correspondence.
\end{remark}
\subsection{Additive Noise Models and the $(D, U)$ Decomposition}\label{sec:anmapp}
We work with \emph{Markovian} Structural Causal Models (SCMs) $\mathcal{M}=(S,\rho)$, where $S=(\en, \ex, \structfuncset)$ comprises endogenous variables $\en=\{X_i\}_{i=1}^d$, exogenous noises $\ex=\{U_i\}_{i=1}^d$, and structural functions $\structfuncset=\{f_i\}_{i=1}^d$. Markovianity, defined as the mutual independence of the exogenous variables $U_i$ under the environment $\rho$ implies acyclicity, entailing a DAG $\scmdag_{\mathcal{M}}$. We assume faithfulness and causal sufficiency throughout. The  mixing function $\mathbf{g}: \dom{\ex}\to\dom{\en}$ is obtained by recursively composing the structural functions along the topological order of $\scmdag_{\mathcal{M}}$ and yields the reduced form of the SCM:
\begin{equation}
    \en = \mathbf{g}(\ex).
\end{equation}
Every intervention mutilates the SCM and induces a new reduced form $\mathbf{g}_{\iota}$. For the CA setting, we denote the corresponding reduced forms by $\mixfbase_{\iota}$ (low-level) and $\mixfabst_{\omega(\iota)}$ (high-level). The associated interventional observable distributions are the pushforward measures:
\begin{equation}
    \mathbb{P}_{\mathcal{M}^\ell_{\iota}}(\en^\ell) = (\mixfbase_{\iota})_{\#}(\exprobbase)
    \quad \text{and} \quad
    \mathbb{P}_{\mathcal{M}^h_{\omega(\iota)}}(\en^h) = (\mixfabst_{\omega(\iota)})_{\#}(\exprobabst).
\end{equation}
An \emph{Additive Noise Model (ANM)} specifies that the structural assignment for each node is additive in the noise:
\begin{equation}
    X_i = f_i\big(\mathrm{PA}(X_i)\big) + U_i, \quad i\in[d],
\end{equation}
A \emph{Linear Additive Noise (LAN)} model is a special case of an ANM where the structural functions are linear.
\begin{equation}
    \en = \mathbf{B}^\top \en + \ex,
\end{equation}
where $\mathbf{B}$ is (permutable to) a strictly upper triangular matrix. The reduced form is linear:
\begin{equation}
    \en = \mathbf{M} \ex, \qquad \text{with } \mathbf{M} := (\mathbf{I} - \mathbf{B}^\top)^{-1}.
\end{equation}
\textbf{Note:} If the environments $\exprobbase, \exprobabst$ are Gaussian, the resulting pushforwards $(\mixfbase_{\iota})_{\#}(\exprobbase)$ and $(\mixfabst_{\omega(\iota)})_{\#}(\exprobabst)$ remain Gaussian.

In general ANMs, the functions $f_i$ may be arbitrary (e.g., polynomials, kernels, neural networks, etc). Consequently, the reduced form $\mathbf{g}$ is nonlinear in $\ex$, and thus we rely on the $(D, U)$ decomposition described below, which remains valid regardless of structural assumptions.

\subsubsection{The $(D, U)$ Decomposition}
\label{sec:anm_decomposition}

We leverage the property that in an SCM with additive noise, the value of any endogenous variable $X_i$ is the sum of a deterministic mechanism $f_i$ and a stochastic residual $U_i$. Given a known causal graph, identifying the SCM reduces to a set of supervised regression problems. This allows us to decompose any observed data, observational or interventional, into a deterministic component matrix $\mathbf{D}$ (representing the causal mechanisms) and a stochastic component matrix $\mathbf{U}$ (representing the specific noise realizations).

\paragraph{Procedure: Estimation and Abduction.}
Assume the DAG is given. The decomposition proceeds in two phases: \emph{estimation} (learning the mechanisms once) and \emph{abduction} (applying them to data).
\begin{enumerate}
    \item \textbf{Fit Mechanisms (If not known):} For each node $X_i$, we estimate the structural function $\hat{f}_i: \dom{\mathrm{PA}(X_i)}\to\mathbb{R}$ by regressing $X_i$ on its parents $\mathrm{PA}(X_i)$ using the observational data.
    \begin{itemize}
        \item For Linear ANMs, this utilizes standard OLS or Ridge regression.
        \item For Nonlinear ANMs, we employ nonparametric regressors (e.g., Random Forests, Neural Networks).
    \end{itemize}
    
    \item \textbf{Compute Deterministic Components ($\mathbf{D}$):} For any dataset (observational or interventional), we calculate the deterministic matrix $\mathbf{D}$ by evaluating the \emph{fixed} learned functions on the current parent values: $D_i := \hat{f}_i(\mathrm{PA}(X_i))$.
    \begin{itemize}
        \item \emph{Intervention Handling:} If variable $X_j$ is under intervention $\mathrm{do}(X_j=a)$, the learned structural mechanism is overridden. The column for $X_j$ in $\mathbf{D}$ is set deterministically to the constant $a$.
    \end{itemize}
    
    \item \textbf{Abduce Residuals ($\mathbf{U}$):} We recover the specific exogenous noise realizations for the current samples via subtraction: $U_i := X_i - D_i$.
    \begin{itemize}
        \item Under a hard intervention on $X_j$, the variable is fully determined by the intervention; we set the corresponding residual to zero to preserve the additive identity.
    \end{itemize}
\end{enumerate}

Stacking these column-wise yields the matrices $\mathbf{D}^{(\iota)}$ and $\mathbf{U}^{(\iota)}$ for any specific interventional environment $\iota$. Note that while the \emph{distribution} of non-intervened noise remains invariant, the specific \emph{samples} $\mathbf{U}^{(\iota)}$ differ across datasets.

\paragraph{Illustrative Example}\label{sec:anm_example}
We illustrate this decomposition using a simple nonlinear chain $X \to Y \to Z$ with $N=5$ samples. Assume we have already performed the estimation step and obtained:
\begin{align*}
    \hat{f}_X(\emptyset) &= 0, & \hat{f}_Y(X) &= \sin(X), & \hat{f}_Z(Y) &= Y^2.
\end{align*}

\textbf{Case 1: Observational Data ($\iota_0 = \emptyset$).}
We observe data matrix $\mathbf{X}^{(0)} \in \mathbb{R}^{5 \times 3}$. We apply the fitted functions to these samples to separate signal from noise:
\[
\mathbf{X}^{(0)} =
\begin{pmatrix}
x_1 & y_1 & z_1 \\
\vdots & \vdots & \vdots \\
x_5 & y_5 & z_5
\end{pmatrix}
\implies
\mathbf{D}^{(0)} =
\begin{pmatrix}
0 & \sin(x_1) & y_1^2 \\
\vdots & \vdots & \vdots \\
0 & \sin(x_5) & y_5^2
\end{pmatrix}
\]
The specific noise realizations for this batch are $\mathbf{U}^{(0)} = \mathbf{X}^{(0)} - \mathbf{D}^{(0)}$.

\textbf{Case 2: Interventional Data ($\iota_1 = \mathrm{do}(X=1.0)$).}
We collect a \emph{new} batch of 5 samples $\mathbf{X}^{(1)}$. Even though the underlying noise distribution is the same, these are new realizations. We use the \textbf{same} functions $\hat{f}$ to decompose this new data, but override the intervened column. Note that since $X$ is fixed to $1.0$, the parent input for $Y$ is now $1.0$:
\[
\mathbf{X}^{(1)} =
\begin{pmatrix}
1.0 & y'_1 & z'_1 \\
\vdots & \vdots & \vdots \\
1.0 & y'_5 & z'_5
\end{pmatrix}
\implies
\mathbf{D}^{(1)} =
\begin{pmatrix}
\mathbf{1.0} & \sin(1.0) & (y'_1)^2 \\
\vdots & \vdots & \vdots \\
\mathbf{1.0} & \sin(1.0) & (y'_5)^2
\end{pmatrix}
\]
The residual matrix $\mathbf{U}^{(1)} = \mathbf{X}^{(1)} - \mathbf{D}^{(1)}$ captures the new noise samples for $Y$ and $Z$, while the column for $X$ is strictly zero.

\paragraph{The \textsc{DiRoCA} Protocol.} While the standard sampling above involves varying noise across datasets, the \textsc{DiRoCA} optimization (Appendix~\ref{sec:optapp}) utilizes a shared Residuals protocol, as we saw in the main text. During training, we explicitly reuse the observational residuals $\mathbf{U}^{(0)}$ from Case 1 to generate interventional batches as: $\mathbf{X}^{(\iota)} = \mathbf{D}^{(\iota)} + \mathbf{U}^{(0)}$. This protocol is mandated by the empirical DRO formulation defined in Section 4 since we solve for a single worst-case perturbation matrix. As discussed in the main text, this reuse yields a consistent pairing of batch rows across interventions; it is not an alignment assumption but a computational device enabling efficient Frobenius-based discrepancy evaluation and avoiding repeated optimal transport computations necessary for the robust objective since we solve for a single worst-case perturbation of the underlying noise distribution, we must use a single base set of residuals to define the ambiguity set consistently across all environments. Although this technically fixes the noise profile rather than sampling new realizations for every intervention, it effectively forces the model to learn the stable deterministic mechanism $\mathbf{D}^{(\iota)}$ invariant to specific noise samples; empirically, this regularization provides the extra robustness observed during test time, where DiRoCA successfully generalizes to unseen noise realizations. When tested on actual samples where noise naturally varies, the robust DiRoCA method is able to successfully capture the underlying mechanisms, whereas other baselines fail, as demonstrated in the results below.

\paragraph{Adversarial Perturbation Strategy.}
While linear models admit a simplified surrogate through the reduced form of the SCM, this $(D, U)$ decomposition suggests a universal strategy for nonlinear ANMs. By separating the deterministic mechanisms from the stochastic noise, we can inject adversarial perturbations $\Theta$ directly into the abduced residuals $U$. This allows us to evaluate robustness of the abstraction map $\lintau$ by propagating perturbed noise $U + \Theta$ through the fixed (estimated) mechanisms $\hat{f}$, without requiring re-estimation of structural functions during the optimization loop.

\paragraph{Structural Misspecification as Residual Perturbation.}
The same decomposition also clarifies why DiRoCA can remain stable when structural assumptions are violated (e.g., imperfect functional estimation, mild mechanism deviations, or errors in the intervention mapping). If the true mechanism $f_i$ differs from the estimated mechanism $\hat{f}_i$, then for non-intervened nodes we can write
\[
X_i
= f_i(\mathrm{PA}(X_i)) + U_i
= \hat{f}_i(\mathrm{PA}(X_i)) + \underbrace{\big(U_i + \delta_i(\mathrm{PA}(X_i))\big)}_{\tilde U_i},
\qquad
\delta_i(\cdot) := f_i(\cdot) - \hat{f}_i(\cdot).
\]
Thus, from the perspective of the learned model, structural error $\delta_i(\mathrm{PA}(X_i))$ is observationally absorbed into an \emph{effective residual} $\tilde U_i$. Consequently, adversarial perturbations $\Theta$ applied to abduced residuals $U$ can be interpreted as covering both (i) distributional shifts in exogenous noise and (ii) deviations induced by mechanism misspecification or estimation error. This offers a simple explanation for why optimizing against worst-case perturbations in a Wasserstein ambiguity set can improve robustness beyond environmental shift, and it aligns with recent perspectives that model mechanism deviations as structured perturbations of causal systems \citep{schneider2025generativeinterventionmodelscausal}.

\subsection{The Joint Ambiguity Set}\label{sec:ambsetapp}
Here, we provide a visual and formal illustration of the construction of the \emph{joint ambiguity set} used in our framework, as described in the main text. Our goal is to introduce distributional robustness at the level of environments by following the Distributionally Robust Optimization (DRO) paradigm. Since we assume access to a single observed environment from each SCM (low-level and high-level), we model distributional uncertainty using a $2$-Wasserstein product ball centered at the empirical joint environment $\widehat{\exprob}$ that is formally defined as the joint ambiguity set:
\begin{align}
    \bB_{\radprod, 2}(\widehat{\exprob}) := \mathbb{B}_{\radbase, 2}(\widehat{\exprobbase}) \times \mathbb{B}_{\radabst, 2}(\widehat{\exprobabst}),
\end{align}
where for each domain \(d \in \{|\scmbase|, |\scmabst|\}\), the marginal ambiguity set $\mathbb{B}_{\varepsilon_d, 2}(\widehat{\rho^d})$ is a $2$-Wasserstein ball defined as
\begin{align}
    \mathbb{B}_{\varepsilon_d, 2}(\widehat{\rho^d}) = \left\{ \rho \in \mathcal{P}(\ex^d) : W_2(\rho, \widehat{\rho^d}) \leq \varepsilon_d \right\}.
\end{align}
Each radius $\varepsilon_d$ controls the robustness level for the corresponding SCM, with larger values providing robustness to broader shifts but may lead to more conservative solutions.

\paragraph{Construction Steps.}
Since we do not observe exogenous environments directly, we reconstruct them via abduction. Specifically, given samples from the endogenous variables, we invert the reduced form of the SCMs to recover approximate exogenous samples:
\begin{align}
(\mathbf{g}^d_{\#})^{-1}(\widehat{\en^d}) \approx \widehat{\ex^d}, \quad \text{for } d \in \{\ell, h\}.
\end{align}
With known structural functions, this inversion is exact. Otherwise, if the structural functions must be estimated, the quality of the abduction depends on the estimation accuracy of the selected method (e.g. Linear Regression for LAN models, Random Forest/ Neural Networks for general ANMs). After performing abduction, we form the empirical exogenous distributions \(\widehat{\exbase}\) and \(\widehat{\exabst}\) via empirical measures:
\begin{align}
\widehat{\exbase} = \frac{1}{N} \sum_{i=1}^N \delta_{\widehat{\contbase_i}}, \quad
\widehat{\exabst} = \frac{1}{M} \sum_{j=1}^M \delta_{\widehat{\contabst_j}},
\end{align}
where \(\widehat{\contbase_i}\) and \(\widehat{\contabst_j}\) denote the reconstructed exogenous samples at the low and high levels, respectively. Next, we form the nominal joint distribution \(\widehat{\exprob} := \widehat{\exprobbase} \otimes \widehat{\exprobabst}\) by combining the empirical exogenous distributions. Finally, we build the joint ambiguity set \(\bB_{\radprod, 2}(\widehat{\exprob})\) by taking the product of two marginal Wasserstein balls. The ambiguity set \(\bB_{\radprod, 2}(\widehat{\exprob})\) allows us to find the worst-case joint environment \(\exprob^\star = \rho^{\ell\star} \otimes \rho^{h\star}\) that maximizes the abstraction error within the specified robustness radii. This captures the most adversarial distributional shift consistent with the observed data and estimated environments. The radii \(\radbase\) and \(\radabst\) quantify the maximum deviation we aim to protect against at each abstraction level. The full construction process, including the abduction, nominal estimation, and radius selection steps, is illustrated in Fig.~\ref{fig:ambsetconstructBOX} for the case where low- and high-level environments are sampled independently.

\begin{figure}[h]
    \centering
    \begin{tikzpicture}
        % Nodes
        \node (A) at (0,0) {\large$\widehat{\enbase}$};
        \node (B) at (2.4,0) {\large$\widehat{\exbase}$};
        \node (C) at (5.2,0) {\large$\widehat{\exprobbase}$};
        \node (D) at (8.7,0) {\large$\mathbb{B}_{\radbase, 2}(\widehat{\exprobbase})$};

        \node (A1) at (0,-2) {\large$\widehat{\enabst}$};
        \node (B1) at (2.4,-2) {\large$\widehat{\exabst}$};
        \node (C1) at (5.2,-2) {\large$\widehat{\exprobabst}$};
        \node (D1) at (8.7,-2) {\large$\mathbb{B}_{\radabst, 2}(\widehat{\exprobabst})$};

        % Arrows
        \draw[->] (A) to node[above,font=\small]{$(\scmbase)^{-1}$} (B);
        \draw[->] (B) to node[above,font=\small]{$\frac{1}{N}\sum_{i=1}^{N}\delta_{\widehat{\contbase_i}}$} (C);
        \draw[->] (C) to node[above,font=\small]{$\radbase$} (D);

        \draw[->] (A1) to node[above,font=\small]{$(\scmabst)^{-1}$} (B1);
        \draw[->] (B1) to node[above,font=\small]{$\frac{1}{M}\sum_{j=1}^{M}\delta_{\widehat{\contabst_j}}$} (C1);
        \draw[->] (C1) to node[above,font=\small]{$\radabst$} (D1);

        \node (E) at (10,-1) {\large$\bB_{\radprod, 2}(\widehat{\exprob})$};
        \draw[->] (D) to[bend right=25] (E);
        \draw[->] (D1) to[bend left=25] (E);

        % Red boxes
        \draw[red, thick, dotted, rounded corners] (-0.6,0.9) rectangle (2.9,-2.6);
        \draw[red, thick, dotted, rounded corners] (3.0,0.9) rectangle (5.6,-2.6);
        \draw[red, thick, dotted, rounded corners] (5.7,0.9) rectangle (10.7,-2.6);

        % Step labels
        \node at (1.2,-3.2) {\small \textbf{Abduction}};
        \node at (4.3,-3.2) {\small \textbf{Nominal Estimation}};
        \node at (8.2,-3.2) {\small \textbf{Radius Selection}};
    \end{tikzpicture}
    \caption{Construction of the joint ambiguity set \(\bB_{\radprod, 2}(\widehat{\exprob})\) under the assumption of independently sampled environments. The process involves abduction of exogenous variables, nominal empirical distribution estimation, and selection of robustness radii.}
    \label{fig:ambsetconstructBOX}
\end{figure}

\subsection{Datasets}\label{sec:examples_app}
\subsubsection{Simple Lung Cancer (\texttt{SLC}) Dataset}
The Simple Lung Cancer (\texttt{SLC}) dataset models the causal relationships between three continuous variables: \(S\) denotes \emph{Smoking}, \(T\) denotes \emph{Tar deposition} in the lungs, and \(C\) denotes \emph{Cancer} development. In the low-level representation, smoking causes tar accumulation, which, in turn, causes cancer. The high-level abstraction removes the intermediate variable \(T\), resulting in a direct causal link between smoking and cancer. The experiment includes a set of 6 distinct low-level and 3 high-level binary relevant interventions: \(\intervsetbase = \{ \iota_0, \dots, \iota_5 \}\) and \(\intervsetabst = \{ \eta_0, \eta_1, \eta_2 \}\). Table~\ref{tab:slc_interventions} demonstrates the map $\omega:\intervsetbase \to \intervsetabst$. The corresponding low-level and high-level causal graphs are shown in Fig.~\ref{fig:low_high_graphs_slc}.

\begin{figure}[h]
\centering
\begin{tikzpicture}[->,>=latex,thick]
% LEFT: Low-Level Graph
\node[circle, draw, minimum size=0.9cm] (S) at (0,0) {\(S\)};
\node[circle, draw, minimum size=0.9cm] (T) at (2,0) {\(T\)};
\node[circle, draw, minimum size=0.9cm] (C) at (4,0) {\(C\)};
\draw (S) -- (T);
\draw (T) -- (C);

% RIGHT: High-Level Graph
\node[circle, draw, minimum size=0.9cm] (Sprime) at (7,0) {\(S'\)};
\node[circle, draw, minimum size=0.9cm] (Cprime) at (9,0) {\(C'\)};
\draw (Sprime) -- (Cprime);

% Labels
\node[align=center, above=0.5cm of T] {\textbf{Low-Level:} $\scmdag_{\scmbase}$};
\node[align=center, above=0.5cm of Sprime, xshift=1cm] {\textbf{High-Level:} $\scmdag_{\scmabst}$};
\end{tikzpicture}
\caption{Low-level (\(\scmdag_{\scmbase}\)) and high-level (\(\scmdag_{\scmabst}\)) causal graphs for the Simple Lung Cancer (\texttt{SLC}) dataset. In the high-level abstraction, the intermediate variable \(T\) (tar deposition) is omitted, resulting in a direct causal link between smoking and cancer.}
\label{fig:low_high_graphs_slc}
\end{figure}

\begin{table}[h]
\centering
\caption{Intervention definitions and the \(\omega\) map for the \texttt{SLC} dataset. The map \(\omega\) aggregates low-level interventions \(\iota\) into high-level interventions \(\eta\).}
\label{tab:slc_interventions}
\begin{tabular}{@{}ll ll@{}}
\toprule
\multicolumn{2}{c}{\textbf{High-Level ($\eta$)}} & \multicolumn{2}{c}{\textbf{Low-Level ($\iota$) mapped to $\eta$}} \\ 
\cmidrule(r){1-2} \cmidrule(l){3-4}
\textbf{Label} & \textbf{Definition} & \textbf{Label} & \textbf{Definition} \\ \midrule
$\eta_0$ & $\emptyset$ (Observational) & $\iota_0$ & $\emptyset$ (Observational) \\ \midrule
\multirow{2}{*}{$\eta_1$} & \multirow{2}{*}{$do(S'=0)$} & $\iota_1$ & $do(S=0)$ \\
 & & $\iota_2$ & $do(S=0, T=1)$ \\ \midrule
\multirow{3}{*}{$\eta_2$} & \multirow{3}{*}{$do(S'=1)$} & $\iota_3$ & $do(S=1)$ \\
 & & $\iota_4$ & $do(S=1, T=0)$ \\
 & & $\iota_5$ & $do(S=1, T=1)$ \\ \bottomrule
\end{tabular}
\end{table}

\subsubsection{LUCAS Datasets (\texttt{LiLUCAS} \& \texttt{nLUCAS})}
The LUng CAncer dataset (LUCAS)\footnote{\href{http://www.causality.inf.ethz.ch/data/LUCAS.html}{http://www.causality.inf.ethz.ch/data/LUCAS.html}}, originally designed to simulate realistic causal relationships in lung cancer diagnosis. We consider two versions sharing the same causal structure and abstraction logic but differing in mechanism complexity: the \emph{Linearized LUCAS} (\texttt{LiLUCAS}) dataset, which uses linear mechanisms, and the \texttt{nLUCAS} dataset, which employs non-linear relationships on the low-level model. The low-level model involves several continuous variables: \(\text{SM}\) denotes \emph{Smoking}, \(\text{GE}\) denotes \emph{Genetics}, \(\text{LC}\) denotes \emph{Lung Cancer}, \(\text{AL}\) denotes \emph{Allergy}, \(\text{CO}\) denotes \emph{Coughing}, and \(\text{FA}\) denotes \emph{Fatigue}. In the high-level abstraction, groups of variables are clustered into broader concepts: \(\text{EN}'\) (Environment), \(\text{GE}'\) (Genetics), and \(\text{LC}'\) (Lung Cancer). While sharing the same abstraction logic, the datasets differ in their interventional configurations. \texttt{LiLUCAS} utilizes a comprehensive set of 21 relevant binary low-level interventions (including the observational state) mapped to 11 high-level interventions: \(\intervsetbase = \{ \iota_0, \dots, \iota_{20} \}\) and \(\intervsetabst = \{ \eta_0, \dots, \eta_{10} \}\). In contrast, \texttt{nLUCAS} employs a subset of 11 low-level interventions while keeping the same high-level ones. Table~\ref{tab:lucas_interventions} demonstrates the full map $\omega:\intervsetbase \to \intervsetabst$ used in the linear setting. The corresponding low-level and high-level causal graphs are shown in Fig.~\ref{fig:lucas_graphs}.

% \begin{figure}[h]
% \centering
% \begin{tikzpicture}[->,>=latex,thick, node distance=1.8cm]

% % LEFT: Low-Level Graph (V-structures)

% % Bottom nodes (colliders)
% \node[circle, draw, minimum size=0.9cm] (SM) at (0, 0) {SM};
% \node[circle, draw, minimum size=0.9cm] (LC) at (1, -2) {LC};
% \node[circle, draw, minimum size=0.9cm] (CO) at (-1, -2) {CO};
% \node[circle, draw, minimum size=0.9cm] (FA) at (3, -2) {FA};

% % Top nodes
% \node[circle, draw, minimum size=0.9cm] (GE) at (2,0) {GE};
% \node[circle, draw, minimum size=0.9cm] (AL) at (-3,-2) {AL};

% % Arrows for Low-Level
% \draw (SM) -- (LC);
% \draw (GE) -- (LC);
% \draw (LC) -- (CO);
% \draw (LC) -- (FA);
% \draw (GE) -- (FA);
% \draw (CO) -- (AL);

% % RIGHT: High-Level Graph (V-structure)
% \node[circle, draw, minimum size=0.9cm] (ENp) at (7,0) {EN'};
% \node[circle, draw, minimum size=0.9cm] (GEp) at (9,0) {GE'};
% \node[circle, draw, minimum size=0.9cm] (LCp) at (8,-2) {LC'};

% % Arrows for High-Level
% \draw (ENp) -- (LCp);
% \draw (GEp) -- (LCp);

% % Labels
% \node[align=center, above=0.5cm of SM, xshift=1cm] {\textbf{Low-Level:} $\scmdag_{\scmbase}$};
% \node[align=center, above=0.5cm of ENp, xshift=1cm] {\textbf{High-Level:} $\scmdag_{\scmabst}$};

% \end{tikzpicture}
% \caption{Low-level (\(\scmdag_{\scmbase}\)) and high-level (\(\scmdag_{\scmabst}\)) causal graphs for the LUCAS datasets (\texttt{LiLUCAS} and LUCAS). The abstraction groups variables related to environment, genetics, and disease outcomes, resulting in a simplified causal structure.}
% \label{fig:lucas_graphs}
% \end{figure}
\begin{figure}[h]
\centering
\begin{tikzpicture}[->,>=latex,thick, node distance=1.8cm]

% LEFT: Low-Level Graph (V-structures)

% Bottom nodes (colliders)
\node[circle, draw, minimum size=0.9cm] (SM) at (0, 0) {SM};
\node[circle, draw, minimum size=0.9cm] (LC) at (1, -2) {LC};
\node[circle, draw, minimum size=0.9cm] (CO) at (-1, -2) {CO};
\node[circle, draw, minimum size=0.9cm] (FA) at (3, -2) {FA};

% Top nodes
\node[circle, draw, minimum size=0.9cm] (GE) at (2,0) {GE};
\node[circle, draw, minimum size=0.9cm] (AL) at (-3,-2) {AL};

% Arrows for Low-Level
\draw (SM) -- (LC);
\draw (GE) -- (LC);
\draw (LC) -- (CO);
\draw (LC) -- (FA);
% Corrected edges based on YAML ground truth:
\draw(CO) to[bend right=35] (FA); % Corrected from GE -- FA
\draw (AL) -- (CO); % Corrected from CO -- AL

% RIGHT: High-Level Graph (V-structure)
\node[circle, draw, minimum size=0.9cm] (ENp) at (7,0) {EN'};
\node[circle, draw, minimum size=0.9cm] (GEp) at (9,0) {GE'};
\node[circle, draw, minimum size=0.9cm] (LCp) at (8,-2) {LC'};

% Arrows for High-Level
\draw (ENp) -- (LCp);
\draw (GEp) -- (LCp);

% Labels
\node[align=center, above=0.5cm of SM, xshift=1cm] {\textbf{Low-Level:} $\scmdag_{\scmbase}$};
\node[align=center, above=0.5cm of ENp, xshift=1cm] {\textbf{High-Level:} $\scmdag_{\scmabst}$};

\end{tikzpicture}
\caption{Low-level (\(\scmdag_{\scmbase}\)) and high-level (\(\scmdag_{\scmabst}\)) causal graphs for the LUCAS datasets (\texttt{LiLUCAS} and \texttt{nLUCAS}). The abstraction groups variables related to environment, genetics, and disease outcomes, resulting in a simplified causal structure.}
\label{fig:lucas_graphs}
\end{figure}

\begin{table}[h]
\centering
\caption{Intervention definitions and the \(\omega\) map for the LUCAS datasets. The map \(\omega\) aggregates low-level interventions \(\iota\) into high-level interventions \(\eta\).}
\label{tab:lucas_interventions}
\begin{tabular}{@{}ll ll@{}}
\toprule
\multicolumn{2}{c}{\textbf{High-Level ($\eta$)}} & \multicolumn{2}{c}{\textbf{Low-Level ($\iota$) mapped to $\eta$}} \\ 
\cmidrule(r){1-2} \cmidrule(l){3-4}
\textbf{Label} & \textbf{Definition} & \textbf{Label} & \textbf{Definition} \\ \midrule
$\eta_0$ & $\emptyset$ (Observational) & $\iota_0$ & $\emptyset$ (Observational) \\ \midrule
$\eta_1$ & $do(\text{EN}'=0)$ & $\iota_1$ & $do(\text{SM}=0)$ \\ \midrule
$\eta_2$ & $do(\text{EN}'=1)$ & $\iota_2$ & $do(\text{SM}=1)$ \\ \midrule
\multirow{2}{*}{$\eta_3$} & \multirow{2}{*}{$do(\text{GE}'=0)$} & $\iota_3$ & $do(\text{LC}=0)$ \\
 & & $\iota_{12}$ & $do(\text{GE}=1, \text{SM}=1)$ \\ \midrule
\multirow{2}{*}{$\eta_4$} & \multirow{2}{*}{$do(\text{GE}'=1)$} & $\iota_4$ & $do(\text{LC}=1)$ \\
 & & $\iota_{13}$ & $do(\text{GE}=0, \text{SM}=1)$ \\ \midrule
\multirow{3}{*}{$\eta_5$} & \multirow{3}{*}{$do(\text{EN}'=0, \text{GE}'=0)$} & $\iota_5$ & $do(\text{SM}=0, \text{LC}=0)$ \\
 & & $\iota_{10}$ & $do(\text{GE}=1)$ \\
 & & $\iota_{16}$ & $do(\text{AL}=1)$ \\ \midrule
\multirow{3}{*}{$\eta_6$} & \multirow{3}{*}{$do(\text{EN}'=1, \text{GE}'=1)$} & $\iota_6$ & $do(\text{SM}=1, \text{LC}=1)$ \\
 & & $\iota_{11}$ & $do(\text{GE}=0, \text{SM}=0)$ \\
 & & $\iota_{17}$ & $do(\text{CO}=0)$ \\ \midrule
$\eta_7$ & $do(\text{EN}'=0, \text{GE}'=1)$ & $\iota_{15}$ & $do(\text{AL}=0)$ \\ \midrule
\multirow{2}{*}{$\eta_8$} & \multirow{2}{*}{$do(\text{EN}'=1, \text{GE}'=0)$} & $\iota_{14}$ & $do(\text{GE}=1, \text{SM}=0)$ \\
 & & $\iota_{20}$ & $do(\text{CO}=0, \text{FA}=1)$ \\ \midrule
\multirow{2}{*}{$\eta_9$} & \multirow{2}{*}{$do(\text{LC}'=0)$} & $\iota_7$ & $do(\text{SM}=0, \text{LC}=1)$ \\
 & & $\iota_{19}$ & $do(\text{CO}=1, \text{FA}=0)$ \\ \midrule
\multirow{3}{*}{$\eta_{10}$} & \multirow{3}{*}{$do(\text{LC}'=1)$} & $\iota_8$ & $do(\text{SM}=1, \text{LC}=0)$ \\
 & & $\iota_{18}$ & $do(\text{CO}=1, \text{FA}=1)$ \\
\bottomrule
\end{tabular}%
\end{table}

\subsubsection{Electric Battery Manufacturing (\texttt{EBM})  \citep{zennaro2023jointly}}
The Electric Battery Manufacturing (\texttt{EBM}) dataset applies our framework to a real-world scenario involving lithium-ion battery production. This dataset contains data collected from two distinct experimental settings representing different levels of granularity. The first setting (WMG) has been modeled through a low-level SCM that captures the effect of a control variable (comma gap) on an output (mass loading) at multiple spatial locations. The second setting (LRCS) is modeled through a high-level SCM that relates the same control variable to a single aggregated output.  The low-level variables are \(\mathbf{X}_L = [\text{CG}, \text{ML}_1, \text{ML}_2]^\top\), where \(\text{CG}\) denotes the \emph{Comma Gap} (in \(\mu m\)) and \(\text{ML}_{1,2}\) denote \emph{Mass Loading} measurements (in \(mg/cm^2\)) at two distinct spatial locations. The high-level variables are \(\mathbf{X}_H = [\text{CG}^\prime, \text{ML}]^\top\), where mass loading is treated as a single scalar.

\paragraph{SCM Construction and Abduction}
Unlike the synthetic datasets where structural functions are known, here we must infer them from data. A key methodological divergence from the original work by \citet{zennaro2023jointly}, which used a non-parametric empirical SCM, is our use of a parametric ANM This choice allows us to fully utilize our generalized optimization objective. We fit linear mechanisms with intercepts to both datasets, respecting the causal graph structures (\(\text{CG} \to \text{ML}\)):
\begin{equation}
    X_i = \sum_{j \in \mathrm{Pa}(i)} \beta_{ji} X_j + c_i + U_i,
\end{equation}
where \(\beta_{ji}\) are the learned coefficients and \(c_i\) are the intercepts. We perform exact abduction to recover the noise terms by computing the residuals: \(\hat{U}_i = X_i - (c_i + \sum \beta_{ji} X_j)\). To ensure robustness, the noise is centered to be mean-zero per intervention bucket. The deterministic component \(D^{(\iota)}\) for an intervention \(\iota = do(\text{CG}=c)\) is then defined as:
\begin{equation}
    D_i^{(\iota)} =
    \begin{cases}
    c & \text{if } i = \text{CG}, \\
    c_i + \sum_{j \in \mathrm{Pa}(i)} \beta_{ji} \cdot c & \text{if } i \in \mathrm{Ch}(\text{CG}), \\
    c_i & \text{otherwise}.
    \end{cases}
\end{equation}
Finally, we use the set of real interventions performed during the data collection. The continuous Comma Gap values are aligned via the map \(\omega\), as detailed in Table~\ref{tab:battery_interventions}.
\begin{figure}[h]
\centering
\begin{tikzpicture}[->,>=latex,thick, node distance=1.5cm]

% LEFT: Low-Level Graph
\node[circle, draw, minimum size=0.9cm] (CG) at (0, 0) {CG};
\node[circle, draw, minimum size=0.9cm] (ML1) at (-1.2, -2) {$\text{ML}_1$};
\node[circle, draw, minimum size=0.9cm] (ML2) at (1.2, -2) {$\text{ML}_2$};

% Arrows for Low-Level
\draw (CG) -- (ML1);
\draw (CG) -- (ML2);

% RIGHT: High-Level Graph
\node[circle, draw, minimum size=0.9cm] (CGp) at (6, 0) {CG$^\prime$};
\node[circle, draw, minimum size=0.9cm] (MLp) at (6, -2) {ML};

% Arrows for High-Level
\draw (CGp) -- (MLp);

% Labels
\node[align=center, above=0.5cm of CG] {\textbf{Low-Level:} $\scmdag_{\scmbase}$};
\node[align=center, above=0.5cm of CGp] {\textbf{High-Level:} $\scmdag_{\scmabst}$};

\end{tikzpicture}
\caption{Low-level (\(\scmdag_{\scmbase}\)) and high-level (\(\scmdag_{\scmabst}\)) causal graphs for the \texttt{EBM} dataset. The low-level model captures spatial granularity ($\text{ML}_1, \text{ML}_2$), while the high-level model represents an aggregated view ($\text{ML}$).}
\label{fig:battery_graphs}
\end{figure}

\begin{table}[h]
\centering
\caption{Intervention definitions and the \(\omega\) map for the \texttt{EBM} dataset. Interventions correspond to setting the Comma Gap (\(\text{CG}\)) to specific values (in \(\mu m\)).}
\label{tab:battery_interventions}
\begin{tabular}{@{}ll ll@{}}
\toprule
\multicolumn{2}{c}{\textbf{High-Level ($\eta$)}} & \multicolumn{2}{c}{\textbf{Low-Level ($\iota$) mapped to $\eta$}} \\ 
\cmidrule(r){1-2} \cmidrule(l){3-4}
\textbf{Label} & \textbf{Definition} & \textbf{Label} & \textbf{Definition} \\ \midrule
$\eta_0$ & $\emptyset$ (Observational) & $\iota_0$ & $\emptyset$ (Observational) \\ \midrule
$\eta_1$ & $do(\text{CG}=75)$ & $\iota_1$ & $do(\text{CG}^\prime=75)$ \\ \midrule
$\eta_2$ & $do(\text{CG}=100)$ & $\iota_2$ & $do(\text{CG}^\prime=110)$ \\ \midrule
\multirow{2}{*}{$\eta_3$} & \multirow{2}{*}{$do(\text{CG}=200)$} & $\iota_3$ & $do(\text{CG}^\prime=180)$ \\
 & & $\iota_4$ & $do(\text{CG}^\prime=200)$ \\
\bottomrule
\end{tabular}
\end{table}

\subsubsection{Colored MNIST (\texttt{cMNIST}) \citep{xia2024neural}}
This experiment tests our framework on a high-dimensional, non-linear task based on the Colored MNIST (\texttt{cMNIST}) dataset. The objective is to learn a robust \emph{linear} abstraction $\lintau$ that aligns a complex pixel-level SCM with a disentangled latent-space SCM. While both levels operate on the same graph structure visualized in Fig.~\ref{fig:cmnist_graph} with parents Digit $D$ and Color $C$, which are correlated ($p=0.85$), they are distinguished by their collider targets. The low-level target $X^\ell$ is instantiated as the high-dimensional image $I_P \in \mathbb{R}^{3072}$, while the high-level target $X^h$ is the compact latent code $z \in \mathbb{R}^{64}$ obtained via the pre-trained encoder $E_\phi$ \citep{xia2024neural}.

We utilize a set of 10 distinct interventions $\mathcal{I}$ alongside the observational data. The set includes atomic interventions (e.g., $\text{do}(D=6)$) and joint interventions (e.g., $\text{do}(D=4, C=4)$). These interventions are critical because the observational data is heavily confounded (e.g., digit 0 is strongly correlated with red). Interventions break these spurious correlations, creating counterfactual combinations (e.g., digit 0 with blue) that are absent in the training distribution. Since both SCMs share identical parent definitions, the alignment map is the identity: $\omega(\iota) = \iota$.
\begin{figure}[h]
\centering
\begin{tikzpicture}[->,>=latex,thick]
% LEFT: Low-Level Graph
\node[circle, draw, minimum size=0.9cm] (D) at (0,1.5) {\(D\)};
\node[circle, draw, minimum size=0.9cm] (C) at (0,-1.5) {\(C\)};
\node[circle, draw, minimum size=1.6cm] (I) at (2,0) {\(I_P\)};
\draw (D) -- (I);
\draw (C) -- (I);

% RIGHT: High-Level Graph
\node[circle, draw, minimum size=0.9cm] (D2) at (6,1.5) {\(D\)};
\node[circle, draw, minimum size=0.9cm] (C2) at (6,-1.5) {\(C\)};
\node[circle, draw, minimum size=0.9cm] (Z) at (8,0) {\(z\)};
\draw (D2) -- (Z);
\draw (C2) -- (Z);

% Labels
\node[align=center, above=0.5cm of I, yshift=1cm] {\textbf{Low-Level:} $\scmdag_{\scmbase}$};
\node[align=center, above=0.5cm of Z, yshift=1cm] {\textbf{High-Level:} $\scmdag_{\scmabst}$};
\end{tikzpicture}
\caption{Causal graphs for \texttt{cMNIST}. Both levels share parents $D$ (Digit) and $C$ (Color). The low-level generates pixels $I_P$, while the high-level generates latent codes $z$.}
\label{fig:cmnist_graph}
\end{figure}

\paragraph{SCM Construction and Abduction}
The original generative process is non-additive ($z = G(D, C, \epsilon)$), preventing direct abduction in our additive noise framework. We explicitly construct and fit two valid ANMs to model the data:
\begin{itemize}

\item \emph{Low-Level Model ($f_L$): U-Net with FiLM.}
We model the pixel generation as $I_P = f_L(\text{Shape}, D, C) + U^\ell$. We employ a \emph{U-Net} architecture that takes a grayscale ``Shape'' image as input to define structural content. To inject causal control, we use \emph{FiLM (Feature-wise Linear Modulation)} layers. These layers map the parents $(D, C)$ to affine parameters $(\bm{\gamma}, \bm{\beta})$, which modulate the internal feature maps of the U-Net via $\mathbf{h}' = \bm{\gamma} \cdot \mathbf{h} + \bm{\beta}$. This architecture analytically separates structural content (processed by convolutions) from style (injected by FiLM), allowing for noise abduction.

  \item \emph{High-Level Model ($f_H$): Vector-Valued Cell-Means.}
We model the latent mechanism as $z = f_H(D, C) + U^h$. Given the discrete $10 \times 10$ parent space, we fit a \emph{vector-valued cell-means model}. To ensure robustness against data scarcity in specific $(d,c)$ buckets (caused by confounding), we apply shrinkage regularization. The deterministic vector $m_{d,c}$ is a weighted average of the empirical cell mean and a stable additive baseline ($\hat{\mu}_{\text{base}} = \hat{\mu}_d + \hat{\mu}_c - \hat{\mu}_{\text{global}}$).
\end{itemize}

\paragraph{Abduction Procedure.}
Following our framework, we (1) fit $f_L$ and $f_H$ on observational data; (2) abduce anchor noise matrices $\mathbf{U}^\ell$ and $\mathbf{U}^h$ as residuals; and (3) synthesize the deterministic components $\mathbf{D}_\ell^{(\iota)}$ and $\mathbf{D}_h^{(\iota)}$ by evaluating the fitted functions under the interventional settings defined in $\mathcal{I}$.

A visual inspection of the dataset (Fig.~\ref{fig:cmnist_samples}) reveals the core causal challenge: strong spurious correlations in the observational distribution. For instance, specific digits are consistently paired with specific colors (e.g., digit 3 is predominantly green). Interventions are critical for breaking these correlations, generating counterfactual combinations (e.g., digit 3 as red) that expose the true causal mechanism.
\begin{figure}[h!]
    \centering
    \includegraphics[width=.8\linewidth]{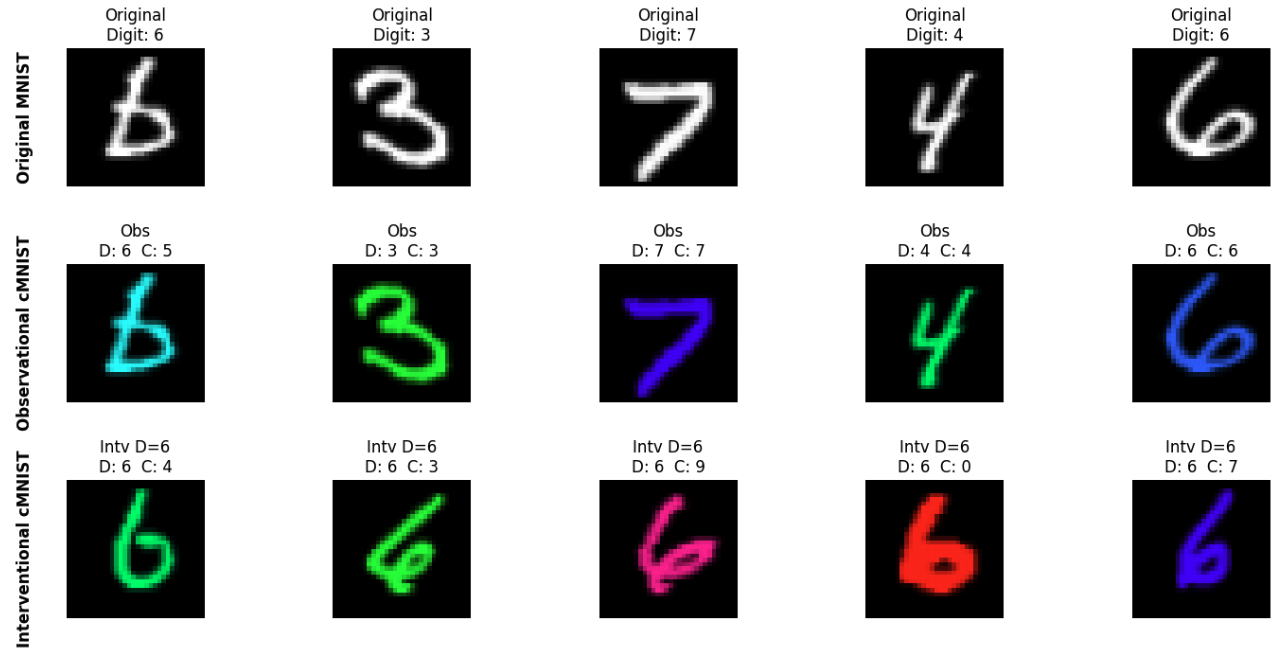}
    \caption{\texttt{cMNIST} data samples. \textbf{Top:} Original grayscale MNIST. \textbf{Middle:} Observational data, showing the spurious correlation (e.g., $D=3, C=3$ and $D=7, C=7$). \textbf{Bottom:} Interventional data for $do(D=6)$, where the correlation is broken, and the digit 6 is paired with novel colors.}
    \label{fig:cmnist_samples}
\end{figure}

\paragraph{Latent Space Structure.}
We analyze the geometry of the high-level latent space $z \in \mathbb{R}^{64}$ by projecting it to two dimensions using PCA (Figure \ref{fig:cmnist_pca}). The visualizations show that the latent space $z$ (learned by Xia et al.'s encoder $E_\phi$) is highly structured. The plots show that both the 10 digits and the 10 colors form distinct clusters. This suggests that the latent space is \emph{disentangled}, with different sets of dimensions corresponding to ``digit" and ``color" information. 
\begin{figure}[ht]
    \centering
    \includegraphics[width=0.49\linewidth]{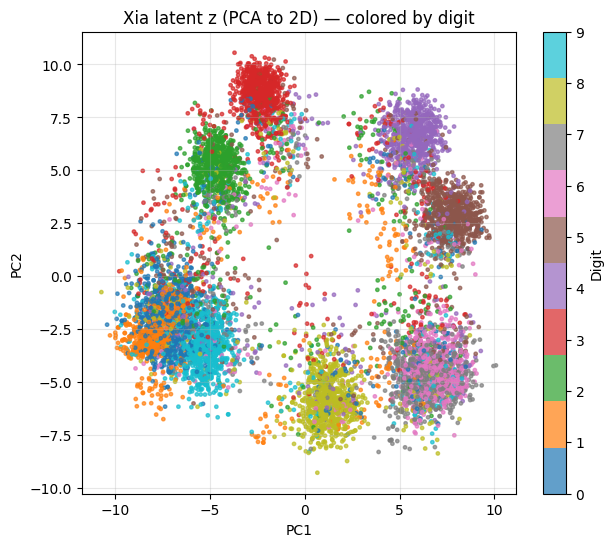}
    \includegraphics[width=0.49\linewidth]{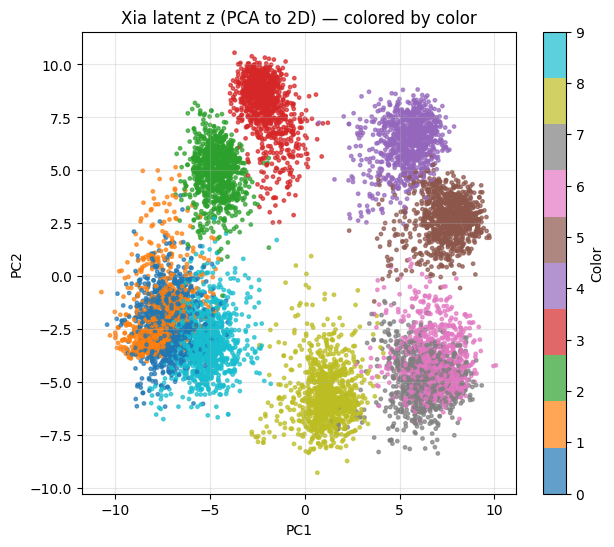}
    \caption{PCA projection of the latent space $z \in \mathbb{R}^{64}$ from the observational dataset. \textbf{Left:} Colored by digit $D$. \textbf{Right:} Colored by color $C$.}
    \label{fig:cmnist_pca}
\end{figure}

\paragraph{Interventional Geometry.}
Finally, we analyze the effect of interventions on the latent distribution $P(z)$, as shown in Figure \ref{fig:cmnist_hists}. The distribution of the $l_2$ norm of $z$, $\|z\|_2$, (left plot) shifts significantly under different interventions. The observational distribution (blue) is distinct from the interventional ones (orange, green, purple). This confirms that interventions induce a distributional shift in the high-level space. The right plot quantifies this shift, showing the $l_2$ distance $\|z_{obs} - z_{int}\|_2$ between the latent vector of an observational sample and its interventional counterpart. The distribution is centered far from zero (around 15-20), confirming that interventional samples lie in a different region of the latent space than their observational counterparts. 
\begin{figure}[ht]
    \centering
    \includegraphics[width=0.5\linewidth]{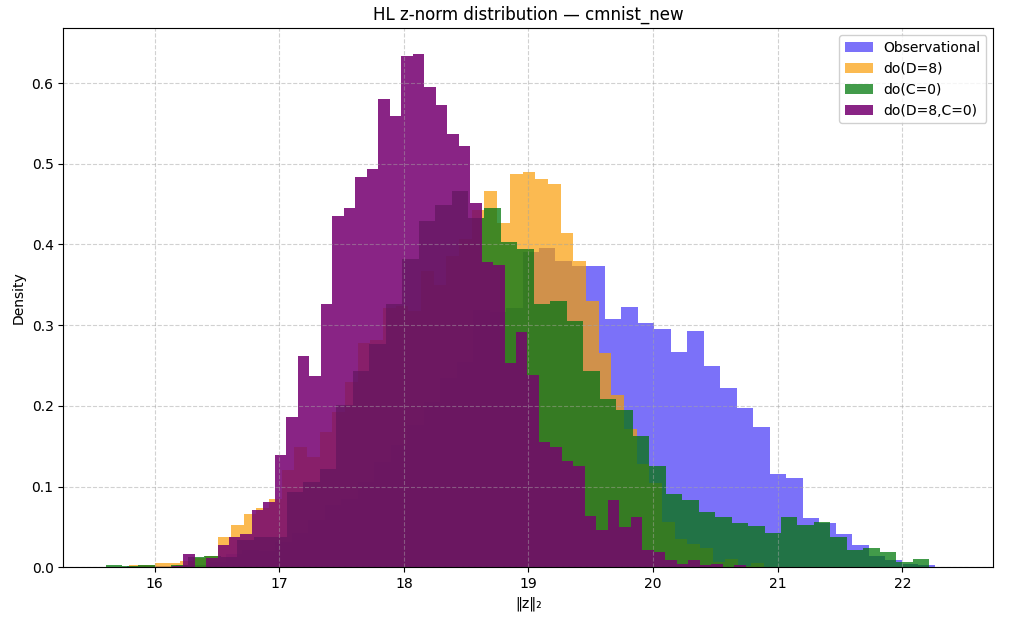}
    \includegraphics[width=0.4\linewidth]{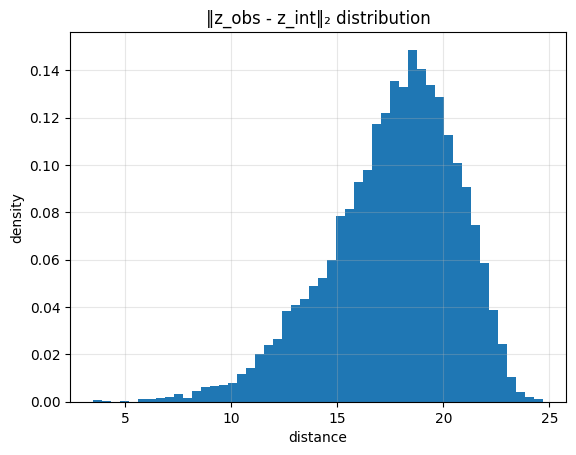}
    \caption{\textbf{Left:} Distribution of the $l_2$ norm of $z$ under observational and various interventional distributions. \textbf{Right:} Distribution of the $l_2$ distance between observational samples $z_{obs}$ and their interventional counterfactuals $z_{int}$.}
    \label{fig:cmnist_hists}
\end{figure}

\subsection{Additional experimental results and settings}\label{sec:all_results_settings}
We evaluate our framework across a diverse set of experimental settings of different representations of the environment (Gaussian vs. empirical) and datasets. Fig.~\ref{fig:exp_roadmap} provides an overview of the datasets, methods, and evaluation metrics used in each setting.
\begin{figure}[htp]
\begin{center}
\begin{tikzpicture}[
    node distance=0.6cm and 0.4cm,
    box/.style={draw, rounded corners, align=center, font=\footnotesize, minimum height=1.2em, text width=2.5cm, fill=white},
    env/.style={
        draw,
        thick,
        rounded corners,
        fill=gray!15,
        font=\small\bfseries,
        text width=3.0cm,
        align=center,
        inner sep=3pt
    },
    cat/.style={
        draw=none,
        font=\small\bfseries,
        text width=2.5cm,
        align=center
    },
    subcat/.style={
        draw=none,
        font=\scriptsize\itshape,
        text width=2.0cm,
        align=center,
        color=darkgray
    },
    arrow/.style={thick, -{Stealth[length=2mm]}}
]

% --- ROOT ---
\node[font=\large\bfseries] (root) {$\exprob = \exprobbase \otimes \exprobabst$} ;

% --- LEVEL 1: ENVIRONMENT TYPE ---
\node[env, below=1.0cm of root, xshift=-4.5cm] (gauss) {Gaussian};
\node[env, below=1.0cm of root, xshift=3.5cm] (emp) {Empirical};

\draw[arrow] (root) -| (gauss);
\draw[arrow] (root) -| (emp);

% --- LEFT BRANCH: GAUSSIAN (Linear Only) ---
\node[cat, below=0.8cm of gauss] (gauss_lin_label) {Linear};
\node[box, below=0.1cm of gauss_lin_label] (gauss_datasets) {\texttt{SLC}, \texttt{LiLUCAS}};

\draw[-] (gauss) -- (gauss_lin_label);
\draw[-] (gauss_lin_label) -- (gauss_datasets);

% --- RIGHT BRANCH: EMPIRICAL (Split Linear / Non-Linear) ---

% Split coordinates
\coordinate[below=0.8cm of emp] (emp_split);
\draw[-] (emp) -- (emp_split);

% 1. Linear Empirical
\node[cat, below=0.8cm of emp, xshift=-3.5cm] (emp_lin_label) {Linear};
\node[box, below=0.1cm of emp_lin_label] (emp_lin_datasets) {\texttt{SLC}, \texttt{LiLUCAS}};

\draw[-] (emp_split) -| (emp_lin_label);
\draw[-] (emp_lin_label) -- (emp_lin_datasets);

% 2. Non-Linear Empirical (Split Known / Estimated)
\node[cat, below=0.8cm of emp, xshift=2.0cm] (emp_nonlin_label) {Non-Linear};
\draw[-] (emp_split) -| (emp_nonlin_label);

% Split Non-Linear into Known vs Estimated
\node[subcat, below=0.4cm of emp_nonlin_label, xshift=-1.4cm] (known_label) {Known $f$};
\node[box, below=0.0cm of known_label] (known_ds) {\texttt{nLUCAS}};

\node[subcat, below=0.4cm of emp_nonlin_label, xshift=1.4cm] (est_label) {Estimated $f$};
\node[box, below=0.0cm of est_label] (est_ds) {\texttt{EBM}, \texttt{cMNIST}};

\coordinate[below=0.1cm of emp_nonlin_label] (nl_split);
\draw[-] (emp_nonlin_label) -- (nl_split);
\draw[-] (nl_split) -| (known_label);
\draw[-] (nl_split) -| (est_label);
\draw[-] (known_label) -- (known_ds);
\draw[-] (est_label) -- (est_ds);

\end{tikzpicture}
\caption{\textbf{Roadmap of Experimental Datasets.} Evaluation is divided into Gaussian (parametric) and Empirical (data-driven) environments. Under the Empirical setting, we distinguish between \textbf{Linear} and \textbf{Non-Linear} benchmarks. We further categorize Non-Linear tasks by structural knowledge: \textbf{Known} functions (\texttt{nLUCAS}) versus \textbf{Estimated} functions (\texttt{EBM}, \texttt{cMNIST}).}
\label{fig:exp_roadmap}
\end{center}
\end{figure}

\subsubsection{Misspecifications and settings} 
\paragraph{Robustness to Distributional Shifts.}
We evaluate robustness using a unified and flexible framework based on a Huber-style contamination model, applied to the held-out test data from each of the $k$ cross-validation folds. The methodology consists of data contamination, error computation, and a multi-level aggregation protocol.
\begin{enumerate}
    \item{Data Contamination.}
    For a given clean interventional test set, $X_{\text{test}, \iota} \in \mathbb{R}^{n_{\text{test}} \times d}$, we first generate a noisy outlier set, $\tilde{X}_{\iota}$, by applying an additive stochastic shift. This is achieved by generating a noise matrix $\mathbf{N} \in \mathbb{R}^{n_{\text{test}} \times d}$, where each row is an independent sample from a specified distribution. While the main text focuses on Gaussian noise ($\mathbf{n}_i \sim \mathcal{N}(0, \sigma_{\text{noise}}^2 \cdot \mathbf{I})$), our framework also supports other distributions, including heavy-tailed Student-t and skewed Exponential. The final contaminated test set, $\bar{X}_{\iota}$, is then formed as a convex combination of the clean and noisy data, governed by the contamination fraction $\alpha \in [0, 1]$:
    \begin{align}
        \bar{X}_{\iota} = (1-\alpha)X_{\text{test}, \iota} + \alpha \tilde{X}_{\iota}.
    \end{align}
    This process is controlled by two parameters: the fraction $\alpha$ (prevalence) and the noise scale $\sigma_{\text{noise}}$ (magnitude).
    \item{Error Computation.}
    For a single contaminated test set and a given abstraction map $\lintau$, the abstraction error is computed as the average distance between the transformed low-level and high-level samples, averaged across all interventions $\iota \in \mathcal{I}_\ell$:
    \begin{align}
        \mathcal{E}(\tau, \alpha, \sigma_{\text{noise}}) = \mathbb{E}_{\iota \sim q} \left[ \mathcal{D}\left( \lintau \bar{X}_{\iota}^{\ell}, \bar{X}_{\omega(\iota)}^{h} \right) \right].
    \end{align}
    \item{Experimental Protocol.}
    To ensure statistically robust results, we perform a multi-level evaluation. The full experiment is a nested procedure that iterates through a grid of contamination parameters ($\alpha$ and $\tilde{\sigma}$). For each point on this grid, our protocol yields a total of $m \times k$ individual error measurements (one for each of the $m$ random samples on each of the $k$ data splits). The final "Robustness Curves" visualize the mean error, averaged over both samples and folds, as a function of the contamination parameters.
\end{enumerate}

\paragraph{Note.} For the \texttt{nLUCAS} dataset, we conduct an additional demonstration to validate the impact of the shared residuals optimization strategy discussed in Appendix~\ref{sec:anm_decomposition}. While our training minimizes discrepancies using a fixed noise realization $\mathbf{U}$ across interventions, we verify that the learned abstraction is not an artifact of this device. We compare performance under two distinct protocols: \textbf{a) Generalization}, which evaluates the abstraction on actual samples with their own (varying) noise samples (Different $\mathbf{U}$), and \textbf{b) Consistency}, which does the same thing as training, deriving samples from the shared residuals (Shared $\mathbf{U}$). In the results below, we explicitly observe the consequence of this distinction. Under the consistency protocol, the non-robust baseline \textsc{Grad}$_{\tauomega}$ outperforms \textsc{DiRoCA} in the clean setting ($\alpha=0$), as it aggressively fits the specific noise geometry used during optimization. However, under the generalization one, this advantage disappears and \textsc{DiRoCA} outperforms \textsc{Grad}$_{\tauomega}$ even at $\alpha=0$. This confirms that the baseline overfits the nominal environment's specific noise realization, whereas \textsc{DiRoCA}'s adversarial training acts as a regularizer—forcing the model to learn the true, noise-invariant mechanism and preventing the generalization failure described in the main text. Unless otherwise stated, all other results in the main text and appendices report performance under the generalization protocol, as it represents the realistic evaluation setting.

\paragraph{Robustness to Model Misspecification.}
Beyond robustness to environmental shifts, we evaluate our method's resilience to violations of core modeling assumptions. We challenge two critical assumptions: (a) the linearity of the structural functions ($\cF$-misspecification), and (b) the correctness of the intervention mapping ($\omega$-misspecification).

\begin{itemize}
    \item \textbf{$\mathcal{F}$-misspecification.} To test robustness against violations of the linearity assumption, we evaluate our linearly-trained models on test data generated from fundamentally non-linear SCMs. Crucially, these new SCMs share the same causal graph, intervention sets, and underlying environments as those used in training. However, we define a new structural non-linear equations $f_{\text{NL}}$, controlled by a strength parameter $k \in \bR$:
    \begin{align}
        X_j = k \cdot f_{\text{NL}}(\text{Pa}(X_j)) + U_j.
    \end{align}
    For each strength level $k$, we simulate a brand new, ground-truth non-linear test set from this SCM and compute the abstraction error. The main text reports the results for $k=1$ using a sinusoidal function. Here we also provide full robustness curves showing the error as a function of $k \in [0,100]$ for both \texttt{sin} and \texttt{tanh} non-linearities in Figures~\ref{fig:error_vs_k_sin} and \ref{fig:error_vs_k_tanh} respectively.

    \item \textbf{$\omega$-misspecification.} To test robustness to an incorrect intervention mapping, we contaminate the ground-truth $\omega$ map using a semantic corruption strategy. For a given number of misalignments, we randomly select a subset of low-level interventions and re-assign each one to an incorrect high-level counterpart of similar complexity (i.e., one that intervenes on a similar number of variables), with a fallback to the nearest complexity neighbor if an exact match is not available within a specified delta. An illustration of this semantic $\omega$-misspecification process is provided in Figure~\ref{fig:omega_contamination}.
\end{itemize}

\begin{figure}
    \centering
    \begin{tikzpicture}[>=stealth,scale=1.2]
% Left part: Ground-truth omega
\node (l0) at (0,0) {$\emptyset$};
\node (l1) at (0,1) {$\iota_1$};
\node (l2) at (0,2) {$\iota_2$};
\node (l3) at (0,3) {$\iota_3$};

\node (r0) at (2.5,0) {$\emptyset$};
\node (r1) at (2.5,1.5) {$\eta_1$};
\node (r2) at (2.5,3) {$\eta_2$};

% True omega mappings
\draw[dotted,->] (l0) -- (r0) node[midway,below] {\scriptsize $\omega(\emptyset) = \emptyset$};
\draw[dotted,->] (l1) -- (r1) node[midway,below] {\scriptsize $\omega(\iota_1) = \eta_1$};
\draw[dotted,->] (l2) -- (r1) node[midway,above] {\scriptsize $\omega(\iota_2) = \eta_1$};
\draw[dotted,->] (l3) -- (r2) node[midway,above] {\scriptsize $\omega(\iota_3) = \eta_2$};

\node at (1.25,3.8) {\textbf{Ground-Truth $\omega$}};

% Right part: Contaminated omega
\node (l0c) at (5,0) {$\emptyset$};
\node (l1c) at (5,1) {$\iota_1$};
\node (l2c) at (5,2) {$\iota_2$};
\node (l3c) at (5,3) {$\iota_3$};

\node (r0c) at (7.5,0) {$\emptyset$};
\node (r1c) at (7.5,1.5) {$\eta_1$};
\node (r2c) at (7.5,3) {$\eta_2$};

% Contaminated omega mappings
\draw[dotted,->] (l0c) -- (r0c) node[midway,below] {\scriptsize $\omega(\emptyset) = \emptyset$};
\draw[dotted,red,->] (l1c) -- (r0c) node[midway,above] {\scriptsize $\omega(\iota_1) = \emptyset$}; % changed
\draw[dotted,->] (l2c) -- (r1c) node[midway,below] {\scriptsize $\omega(\iota_2) = \eta_1$};
\draw[dotted,red,->] (l3c) -- (r1c) node[midway,above] {\scriptsize $\omega(\iota_3) = \eta_1$}; % changed

\node at (6.25,3.8) {\textbf{Contaminated $\omega$}};

\end{tikzpicture}
    \caption{Illustration of intervention map ($\omega$) contamination. Left: the ground-truth mapping between low-level interventions ($\iota$) and high-level interventions ($\eta$). Right: a contaminated version where interventions have been re-assigned to incorrect counterparts of similar complexity (e.g., $\omega(\iota_1)=\eta_2$ instead of $\eta_1$) shown in red.}
    \label{fig:omega_contamination}
\end{figure}

\paragraph{Robustness to Camera Shifts.}
In addition to Huber-style pixel corruptions for the \texttt{cMNIST} dataset, we evaluate the robustness of each learned abstraction under geometric distribution shifts. Unlike pixel noise, which degrades the signal-to-noise ratio, these transformations test the stability of the learned linear operator $\lintau$ under realistic camera shifts. Given a clean image $x \in [-1,1]^{3\times 32 \times 32}$ we consider two distinct shifts:
\begin{itemize}
    \item \emph{Geometric}, which rotates the data manifold in the input space. We generate a rotated version $\tilde{x} = \mathcal{R}_{\theta}(x)$ at distinct levels of severity $\theta \in \{30^{\circ}, 90^{\circ}\}$.
    \item \emph{Photometric}, which scales pixel intensities to emulate changing environmental lighting. We generate shifted versions $\tilde{x} = \mathcal{B}_{\lambda}(x)$ by adjusting brightness by a factor $\lambda \in \{0.6, 3.0\}$, representing dim and overexposed conditions, respectively.
\end{itemize}
A purely linear abstraction trained on canonical orientations may suffer from catastrophic intercept mismatch when the input distribution rotates. To isolate the quality of the learned geometry (the matrix $\lintau$) from simple mean shifts for a batch of shifted inputs $X_{\text{pix}}$ and corresponding ground-truth latents $Z$, we formulate the prediction as:
\begin{equation}
\hat{Z} = X_{\text{pix}}\lintau^\top + b,
\end{equation}
where $b = \mathbb{E}[Z - X_{\text{pix}}T^\top]$ is a test-time mean-shift correction vector. This correction effectively re-centers the predicted point cloud to match the first moment of the ground truth, ensuring that the reported \emph{Relative Squared Error} measures strictly the alignment of the learned subspace with the causal variables, independent of the translational offset induced by the rotation.
\begin{figure}
    \centering
    \includegraphics[width=.55\linewidth]{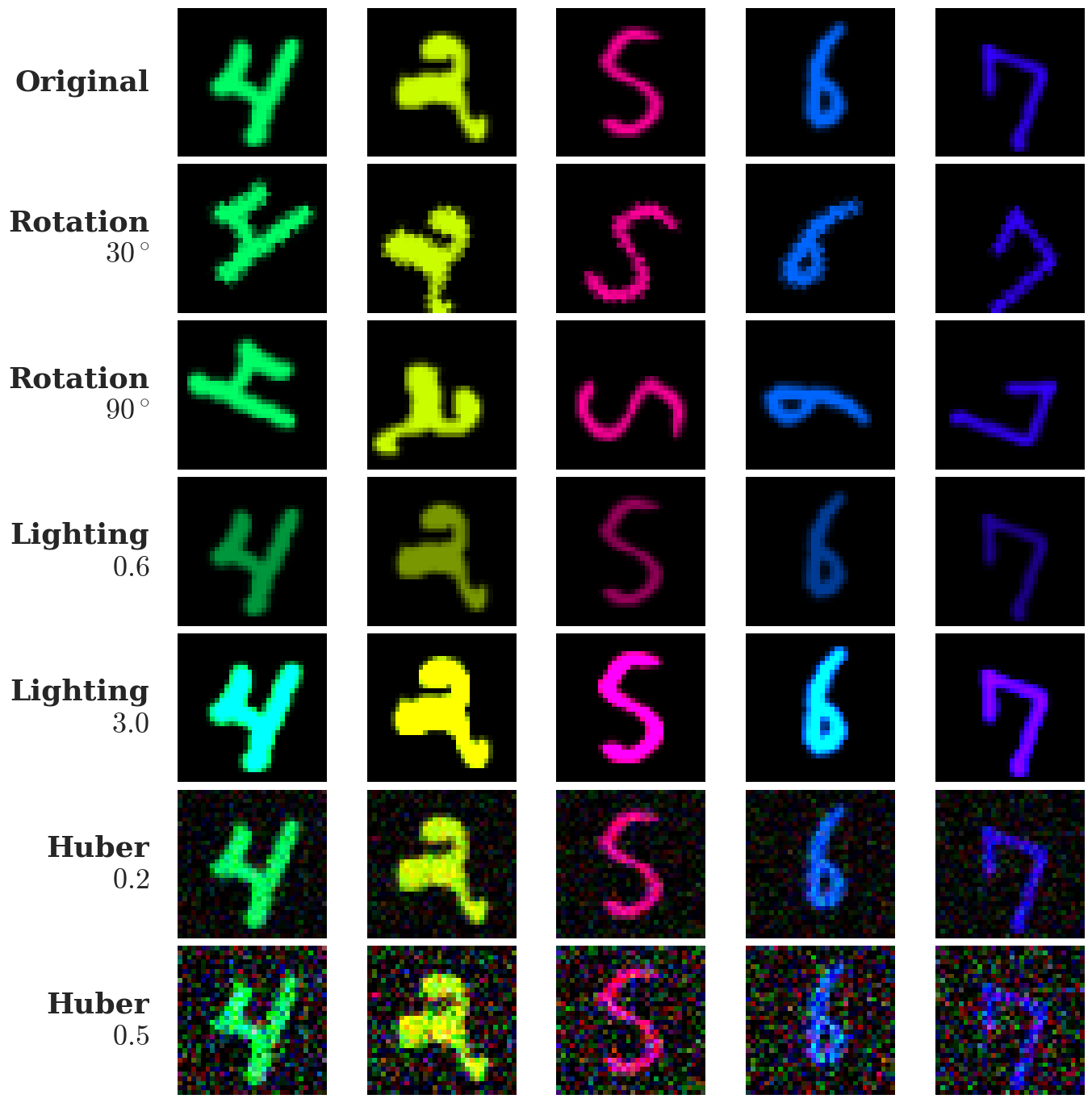}
    \caption{Visualizing distributional shifts on \texttt{cMNIST}. Top row: Original samples from the test distribution. Subsequent rows display the specific perturbations evaluated in our robustness analysis: Geometric Shifts (rotations of $30^\circ$ and $90^\circ$), Photometric Shifts (lighting adjustments of $\lambda=0.6$ and $\lambda=3.0$), and Huber Contamination (additive noise with $\tilde{\sigma}=0.2$ and $\tilde{\sigma}=0.5$). These transformations test the stability of learned abstractions against structured geometric changes, environmental lighting variations, and unstructured pixel noise, respectively.}
    \label{fig:placeholder}
\end{figure}

\paragraph{Implementation and Optimization Details.} All models were trained on a MacBook Air (13-inch, M1, 2020) equipped with an Apple M1 chip and 16\,GB of unified memory, running macOS Monterey version 12.3. Performance is evaluated using a $k$-fold cross-validation scheme (with $k=5$) on the 10,000 generated samples for each intervention in \texttt{LiLUCAS} and \texttt{SLC} and 5,000 samples for \texttt{nLUCAS}. In \textsc{DiRoCA} each iteration comprises 5 minimization and 2 maximization steps. Finally, we monitored convergence by tracking the absolute change in the objective value across successive epochs, and the optimization is terminated once this change falls below a fixed threshold of $10^{-4}$, ensuring numerical stability while avoiding unnecessary computation. Regarding the operationalization of the theoretical radius (Theorems 1, 2), we treated the concentration constants as hyperparameters fixed for each dataset type. We consistently set the tail exponent $\alpha_d=2.0$ (assuming sub-Gaussian tails) and confidence level $\eta=0.05$. The scaling constants were set to $c_{d,1}=1.0, c_{d,2}=1.0$ for the linear synthetic datasets (\texttt{SLC}, \texttt{LiLUCAS}), while for the more complex non-linear and high-dimensional tasks (\texttt{nLUCAS}, \texttt{EBM}, \texttt{cMNIST}), we increased the pre-factor $c_{d,1}$ (ranging from $10$ to $1000$) to account for looser theoretical bounds in higher dimensions, while keeping $c_{d,2}=1.0$.

\subsubsection{Results}
Below, we present the full experimental results as outlined in the experiments roadmap (Fig.~\ref{fig:exp_roadmap}). All bolded values in the tables indicate statistically significant differences, determined using paired t-tests with a significance threshold of $p < 0.05$.
\begin{table}[htp]
\caption{Abstraction error and std under $\alpha=0$ and $\alpha=1$ for \texttt{SLC} and \texttt{LiLUCAS}. Top: Gaussian setting with $(\hat{\radbase}, \hat{\radabst})=(0.11, 0.11)$. Bottom: Empirical setting with $(\hat{\radbase}, \hat{\radabst})=(0.10, 0.03)$}
\label{tab:all_0_rho_shift}
\centering
\begin{tabular}{lcccc}
\toprule
\multirow{2}{*}{Method} & $\alpha=0$ & $\alpha=1$ & $\alpha=0$ & $\alpha=1$ \\
\cmidrule(lr){2-3} \cmidrule(lr){4-5}
 & \multicolumn{2}{c}{\texttt{SLC}} & \multicolumn{2}{c}{\texttt{LiLUCAS}} \\
\midrule
\multicolumn{5}{c}{Gaussian} \\
\midrule
\textsc{Bary}$_{\tauomega}$                     & 2.63 $\pm$ 0.01 & 8.43 $\pm$ 0.01 & 5.75 $\pm$ 0.01 & 55.11 $\pm$ 0.02 \\
\textsc{Grad}$_{\tauomega}$                       & \textbf{1.25 $\pm$ 0.01} & 6.74 $\pm$ 0.00 & \textbf{5.45 $\pm$ 0.02} & 64.73 $\pm$ 0.03 \\
\textsc{DiRoCA}$_{\widehat{\epsilon}}$ & 2.67 $\pm$ 0.02 & 11.28 $\pm$ 0.01 & 5.86 $\pm$ 0.01 & 61.86 $\pm$ 0.02 \\
\textsc{DiRoCA}$_{1,1}$                            & 2.01 $\pm$ 0.01 & 4.14 $\pm$ 0.01 & 6.97 $\pm$ 0.03 & 48.05 $\pm$ 0.02 \\
\textsc{DiRoCA}$_{2,2}$                            & 1.74 $\pm$ 0.01 & \textbf{2.37 $\pm$ 0.01} & 5.53 $\pm$ 0.03 & 25.79 $\pm$ 0.01 \\
\textsc{DiRoCA}$_{4,4}$                            & 1.90 $\pm$ 0.01 & 2.59 $\pm$ 0.01 & 6.81 $\pm$ 0.03 & \textbf{22.99 $\pm$ 0.01} \\
\textsc{DiRoCA}$_{8,8}$                            & 1.90 $\pm$ 0.01 & 2.59 $\pm$ 0.01 & 6.81 $\pm$ 0.03 & \textbf{23.00 $\pm$ 0.01} \\
\midrule
\multicolumn{5}{c}{Empirical} \\
\midrule
\textsc{Bary}$_{\tauomega}$                     & 86.46 $\pm$ 0.15 & 650.33 $\pm$ 0.69 & 561.19 $\pm$ 3.49 & 3804.18 $\pm$ 4.39 \\
\textsc{Grad}$_{\tauomega}$                       & \textbf{57.04 $\pm$ 0.28} & 392.95 $\pm$ 0.30 & \textbf{309.82 $\pm$ 1.71} & 1290.65 $\pm$ 1.52 \\
$\absp$                                            & 89.29 $\pm$ 0.24 & 678.89 $\pm$ 0.71 & 399.21 $\pm$ 1.63 & 2189.48 $\pm$ 2.15 \\
$\absn$                                            & 91.67 $\pm$ 0.29 & 640.45 $\pm$ 0.68 & 354.31 $\pm$ 1.46 & 1878.96 $\pm$ 1.55 \\
\textsc{DiRoCA}$_{\widehat{\epsilon}}$ & 88.39 $\pm$ 0.14 & 674.40 $\pm$ 0.74 & 470.13 $\pm$ 3.65 & 2928.40 $\pm$ 3.41 \\
\textsc{DiRoCA}$_{1,1}$                            & 59.08 $\pm$ 0.33 & 388.73 $\pm$ 0.31 & 332.96 $\pm$ 2.98 & 1385.36 $\pm$ 1.58 \\
\textsc{DiRoCA}$_{2,2}$                            & 58.86 $\pm$ 1.26 & \textbf{357.93 $\pm$ 1.72} & 315.21 $\pm$ 7.93 & 1102.80 $\pm$ 18.15 \\
\textsc{DiRoCA}$_{4,4}$                            & 58.87 $\pm$ 1.25 & \textbf{357.93 $\pm$ 1.72} & 311.19 $\pm$ 8.01 & \textbf{981.61 $\pm$ 18.29} \\
\textsc{DiRoCA}$_{8,8}$                            & 58.89 $\pm$ 1.21 & 358.20 $\pm$ 1.67 & 311.43 $\pm$ 8.18 & 987.53 $\pm$ 17.81 \\
\bottomrule
\end{tabular}
\end{table}
\begin{figure}
    \centering
    \includegraphics[width=.85\linewidth]{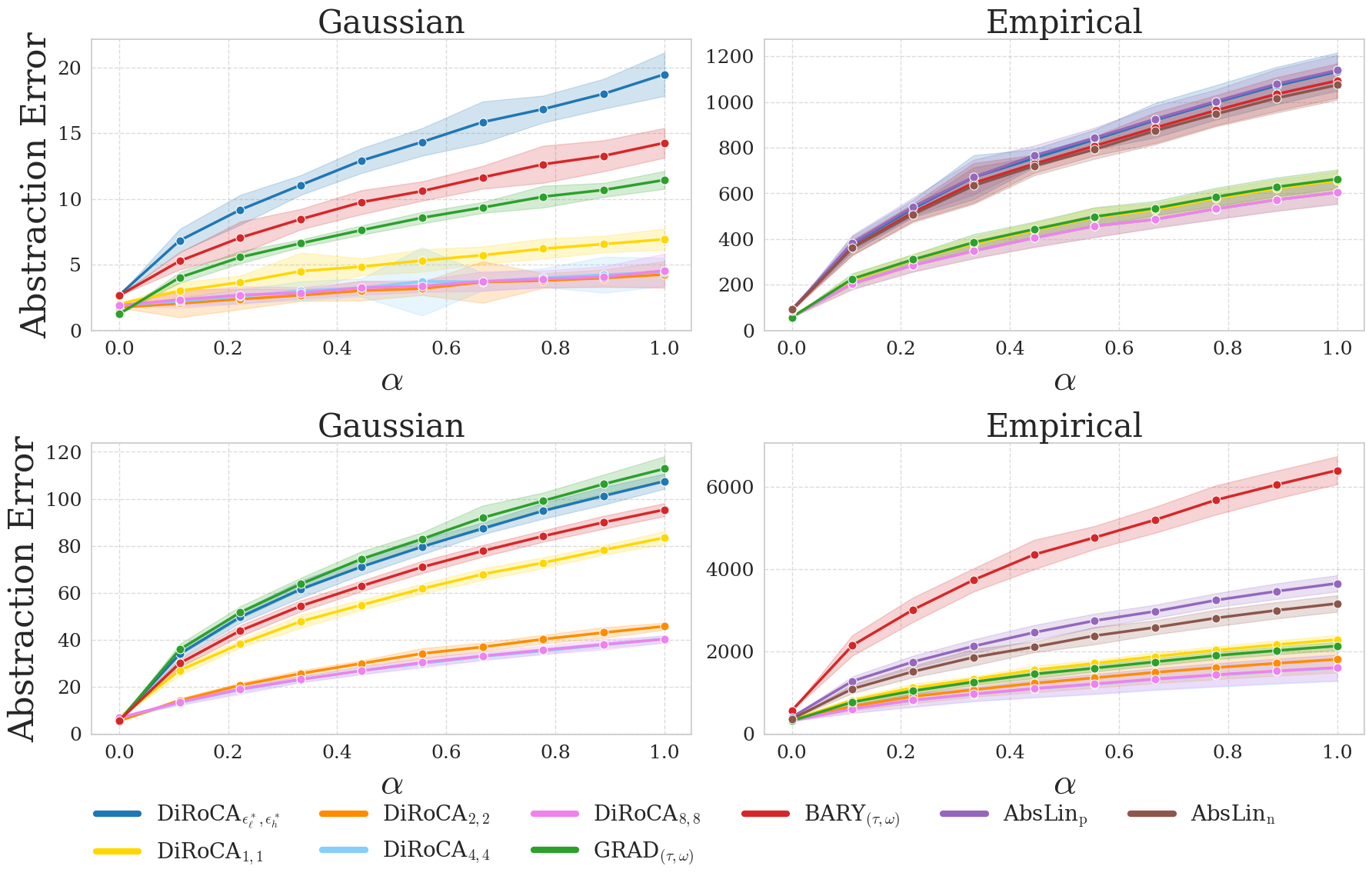}
    \caption{Robustness to outlier fraction ($\alpha$) on the \texttt{SLC} (top) and \texttt{LiLUCAS} (bottom) experiments for the Gaussian (left) and Empirical (right) settings. The evaluation is performed at a fixed \textit{Gaussian} noise intensity ($\tilde{\sigma}=5.0$ for \texttt{SLC} and $\tilde{\sigma}=10.0$ for \texttt{LiLUCAS})}
\label{fig:error_vs_alpha_complete}
\end{figure}
\begin{figure}
    \centering
    \includegraphics[width=.85\linewidth]{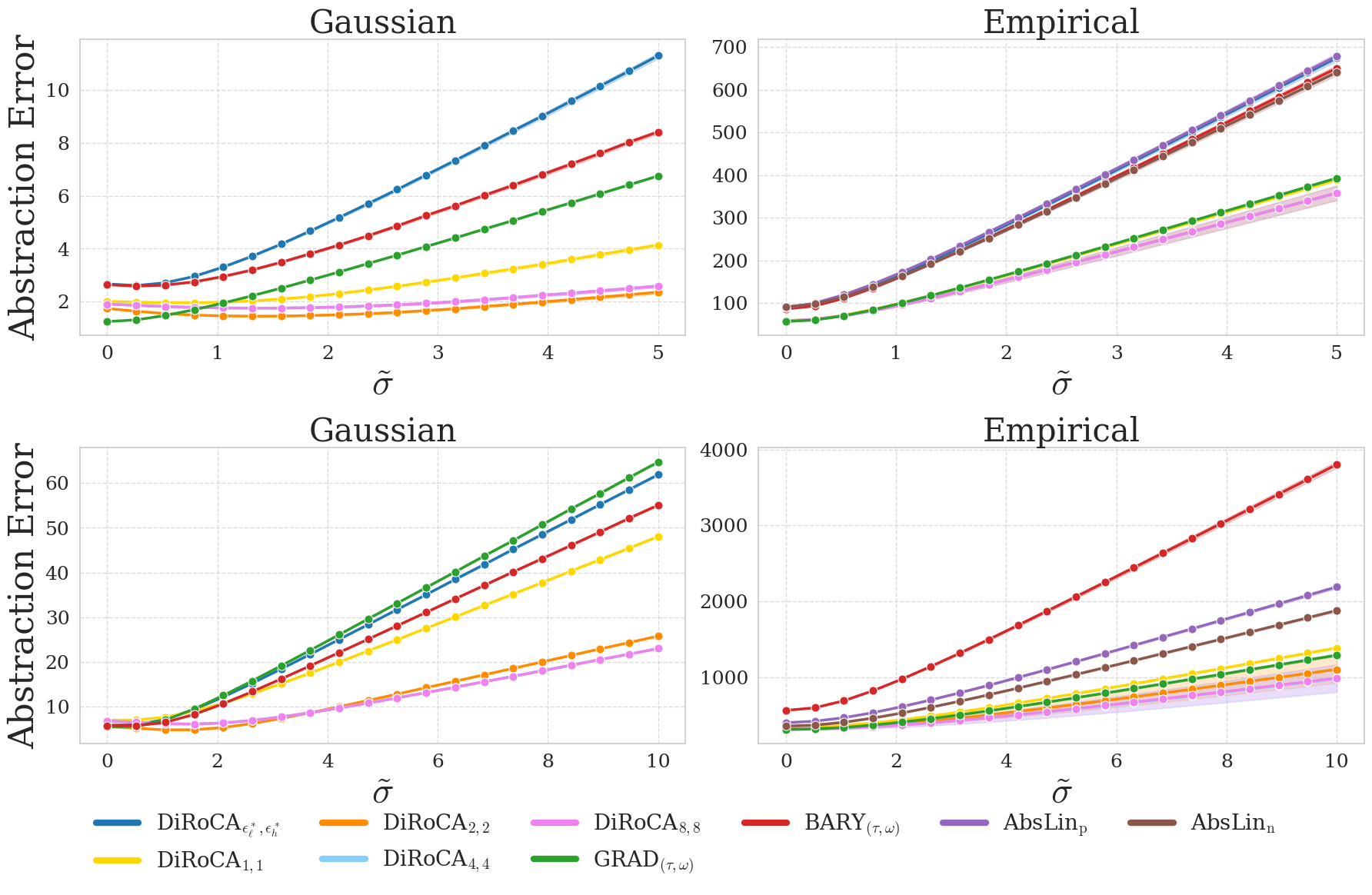}
    \caption{Robustness to \textit{Gaussian} noise intensity ($\tilde{\sigma}$) on the \texttt{SLC} (top) and \texttt{LiLUCAS} (bottom) experiments for the Gaussian (left) and Empirical (right) settings. The evaluation is performed at a fixed outlier fraction of $\alpha=1.0$ (fully noisy data).}
\label{fig:error_vs_sigma_complete}
\end{figure}

\begin{table}[htp]
\caption{Average abstraction error (mean $\pm \sigma$) under \emph{model} $\cF$-misspecification with \texttt{sin} nonlinearity for the Gaussian setting with $(\hat{\radbase},\hat{\radabst})=(0.11,0.11)$ and the empirical setting with $(\hat{\radbase},\hat{\radabst})=(0.10,0.03)$.}
\label{tab:unified_f_misspec_sin}
\centering
\begin{tabular}{lcccc}
\toprule
\multirow{2}{*}{Method} & \multicolumn{2}{c}{\texttt{SLC}} & \multicolumn{2}{c}{\texttt{LiLUCAS}} \\
\cmidrule(lr){2-3} \cmidrule(lr){4-5}
 & Gaussian & Empirical & Gaussian & Empirical \\
\midrule
\textsc{Bary}$_{\tauomega}$                         & 1.10 $\pm$ 0.02 & 185.84 $\pm$ 1.50 & 4.25 $\pm$ 0.01 & 578.80 $\pm$ 1.25 \\
\textsc{Grad}$_{\tauomega}$                         & 2.96 $\pm$ 0.01 & 138.33 $\pm$ 0.96 & 5.86 $\pm$ 0.03 & 338.74 $\pm$ 1.01 \\
$\absp$                                             & n/a             & 198.39 $\pm$ 1.70 & n/a             & 430.34 $\pm$ 1.24 \\
$\absn$                                             & n/a             & 190.58 $\pm$ 1.67 & n/a             & 381.32 $\pm$ 1.92 \\
\textsc{DiRoCA}$_{\widehat{\epsilon}}$  & \textbf{0.40 $\pm$ 0.02} & 191.68 $\pm$ 1.54 & 6.01 $\pm$ 0.03 & 500.10 $\pm$ 1.44 \\
\textsc{DiRoCA}$_{1,1}$                             & 1.79 $\pm$ 0.02 & 137.37 $\pm$ 0.92 & 5.98 $\pm$ 0.03 & 353.61 $\pm$ 1.83 \\
\textsc{DiRoCA}$_{2,2}$                             & 1.43 $\pm$ 0.01 & 135.34 $\pm$ 0.51 & \textbf{3.99 $\pm$ 0.03} & 333.02 $\pm$ 9.86 \\
\textsc{DiRoCA}$_{4,4}$                             & 2.19 $\pm$ 0.02 & 135.34 $\pm$ 0.50 & 4.63 $\pm$ 0.03 & \textbf{327.42 $\pm$ 9.15} \\
\textsc{DiRoCA}$_{8,8}$                             & 2.19 $\pm$ 0.02 & \textbf{135.32 $\pm$ 0.52} & 4.63 $\pm$ 0.03 & 327.64 $\pm$ 9.00 \\
\bottomrule
\end{tabular}
\end{table}

\begin{figure}
    \centering
    \includegraphics[width=.85\linewidth]{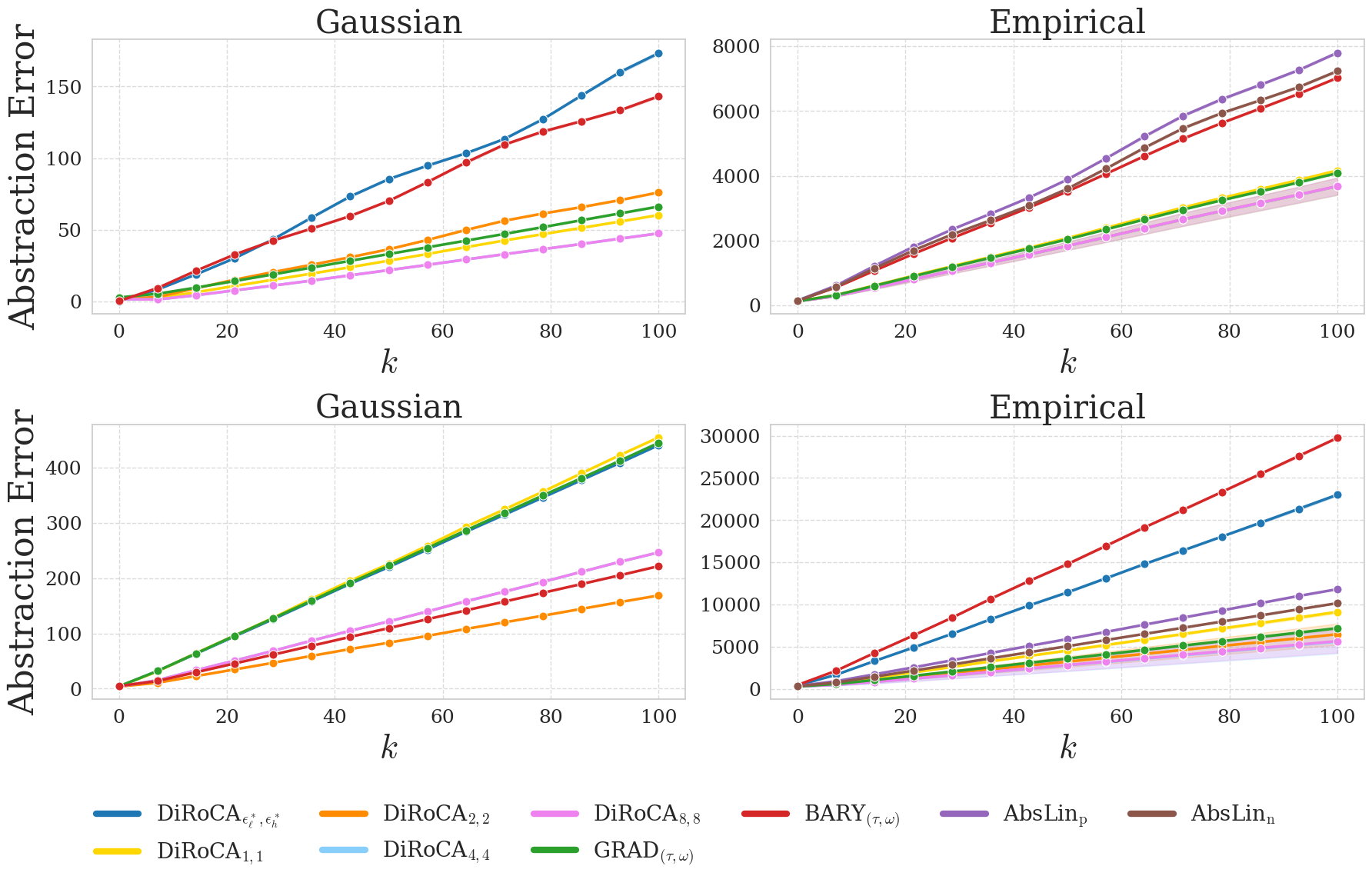}
    \caption{Abstraction error comparison as a function of the nonlinearity strength parameter $k$, which controls the strength of the nonlinear component (\texttt{sin}) in the data generation process. The figure shows results for both Gaussian (left) and empirical (right) data distributions in the \texttt{SLC} (top) and \texttt{LiLUCAS} (bottom) settings.
}
\label{fig:error_vs_k_sin}
\end{figure}

\begin{table}[htp]
\caption{Average abstraction error (mean $\pm \sigma$) under \emph{model} $\cF$-misspecification with \texttt{tanh} nonlinearity for the Gaussian setting with $(\hat{\radbase},\hat{\radabst})=(0.11,0.11)$ and the empirical setting with $(\hat{\radbase},\hat{\radabst})=(0.10,0.03)$.}
\label{tab:unified_f_misspec_tanh}
\centering
\begin{tabular}{lcccc}
\toprule
\multirow{2}{*}{Method} & \multicolumn{2}{c}{\texttt{SLC}} & \multicolumn{2}{c}{\texttt{LiLUCAS}} \\
\cmidrule(lr){2-3} \cmidrule(lr){4-5}
 & Gaussian & Empirical & Gaussian & Empirical \\
\midrule
\textsc{Bary}$_{\tauomega}$                         & 1.22 $\pm$ 0.01 & 186.94 $\pm$ 1.39 & \textbf{3.85 $\pm$ 0.01} & 564.09 $\pm$ 1.81 \\
\textsc{Grad}$_{\tauomega}$                         & 2.99 $\pm$ 0.01 & 140.07 $\pm$ 0.87 & 5.42 $\pm$ 0.04 & 338.69 $\pm$ 0.93 \\
$\absp$                                             & n/a             & 195.23 $\pm$ 1.62 & n/a             & 444.13 $\pm$ 1.27 \\
$\absn$                                             & n/a             & 191.79 $\pm$ 1.57 & n/a             & 386.72 $\pm$ 2.03 \\
\textsc{DiRoCA}$_{\widehat{\epsilon}}$  & \textbf{0.55 $\pm$ 0.01} & 192.73 $\pm$ 1.44 & 5.61 $\pm$ 0.04 & 486.29 $\pm$ 1.89 \\
\textsc{DiRoCA}$_{1,1}$                             & 2.05 $\pm$ 0.02 & 138.89 $\pm$ 0.84 & 5.86 $\pm$ 0.04 & 352.50 $\pm$ 1.84 \\
\textsc{DiRoCA}$_{2,2}$                             & 1.57 $\pm$ 0.01 & 136.98 $\pm$ 0.42 & 4.08 $\pm$ 0.03 & 333.73 $\pm$ 8.63 \\
\textsc{DiRoCA}$_{4,4}$                             & 2.32 $\pm$ 0.02 & 136.99 $\pm$ 0.42 & 4.96 $\pm$ 0.03 & \textbf{328.82 $\pm$ 8.26} \\
\textsc{DiRoCA}$_{8,8}$                             & 2.32 $\pm$ 0.02 & \textbf{136.92 $\pm$ 0.45} & 4.96 $\pm$ 0.03 & 329.04 $\pm$ 8.12 \\
\bottomrule
\end{tabular}
\end{table}

\begin{figure}
    \centering
    \includegraphics[width=.85\linewidth]{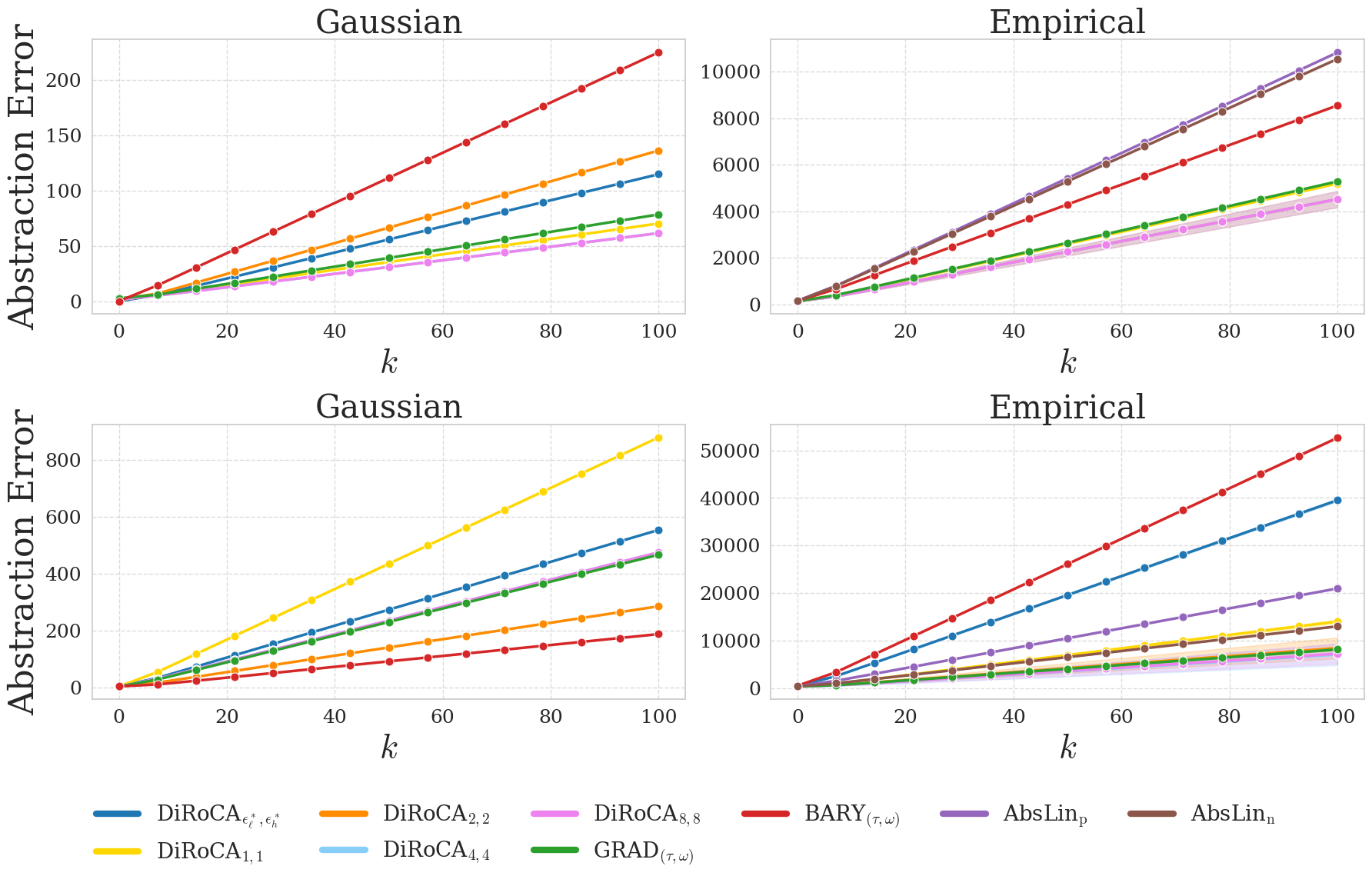}
    \caption{Abstraction error comparison as a function of the nonlinearity strength parameter $k$, which controls the strength of the nonlinear component (\texttt{tanh}) in the data generation process. The figure shows results for both Gaussian (left) and empirical (right) data distributions in the \texttt{SLC} (top) and \texttt{LiLUCAS} (bottom) settings.}
\label{fig:error_vs_k_tanh}
\end{figure}

\begin{table}[htp]
\caption{Average abstraction error (mean $\pm \sigma$) under \textit{intervention mapping} $\omega$-misspecification for the Gaussian setting with $(\hat{\radbase},\hat{\radabst})=(0.11,0.11)$ and the empirical setting with $(\hat{\radbase},\hat{\radabst})=(0.10,0.03)$.}
\label{tab:unified_omega_misspec}
\centering
\begin{tabular}{lcccc}
\toprule
\multirow{2}{*}{Method} & \multicolumn{2}{c}{\texttt{SLC}} & \multicolumn{2}{c}{\texttt{LiLUCAS}} \\
\cmidrule(lr){2-3} \cmidrule(lr){4-5}
 & Gaussian & Empirical & Gaussian & Empirical \\
\midrule
\textsc{Bary}$_{\tauomega}$                         & 2.77 $\pm$ 0.00 & 100.98 $\pm$ 0.24 & 5.86 $\pm$ 0.03 & 548.55 $\pm$ 2.48 \\
\textsc{Grad}$_{\tauomega}$                         & \textbf{0.71 $\pm$ 0.00} & 50.39 $\pm$ 0.15  & 5.63 $\pm$ 0.03 & 305.46 $\pm$ 1.31 \\
$\absp$                                             & n/a             & 111.24 $\pm$ 0.17 & n/a             & 414.62 $\pm$ 1.09 \\
$\absn$                                             & n/a             & 105.50 $\pm$ 0.14 & n/a             & 359.45 $\pm$ 1.44 \\
\textsc{DiRoCA}$_{\widehat{\epsilon}}$  & 2.57 $\pm$ 0.02 & 105.03 $\pm$ 0.24 & 6.08 $\pm$ 0.03 & 459.06 $\pm$ 2.80 \\
\textsc{DiRoCA}$_{1,1}$                             & 1.54 $\pm$ 0.00 & 46.21 $\pm$ 0.14  & 7.16 $\pm$ 0.03 & 326.56 $\pm$ 2.59 \\
\textsc{DiRoCA}$_{2,2}$                             & 1.48 $\pm$ 0.01 & \textbf{39.16 $\pm$ 2.75}  & \textbf{5.55 $\pm$ 0.03} & 308.52 $\pm$ 8.54 \\
\textsc{DiRoCA}$_{4,4}$                             & 1.43 $\pm$ 0.00 & \textbf{39.16 $\pm$ 2.75}  & 6.86 $\pm$ 0.02 & \textbf{304.11 $\pm$ 7.76} \\
\textsc{DiRoCA}$_{8,8}$                             & 1.43 $\pm$ 0.00 & 39.28 $\pm$ 2.51  & 6.86 $\pm$ 0.02 & 304.32 $\pm$ 7.83 \\
\bottomrule
\end{tabular}
\end{table}

\begin{figure}
    \centering
    \includegraphics[width=.85\linewidth]{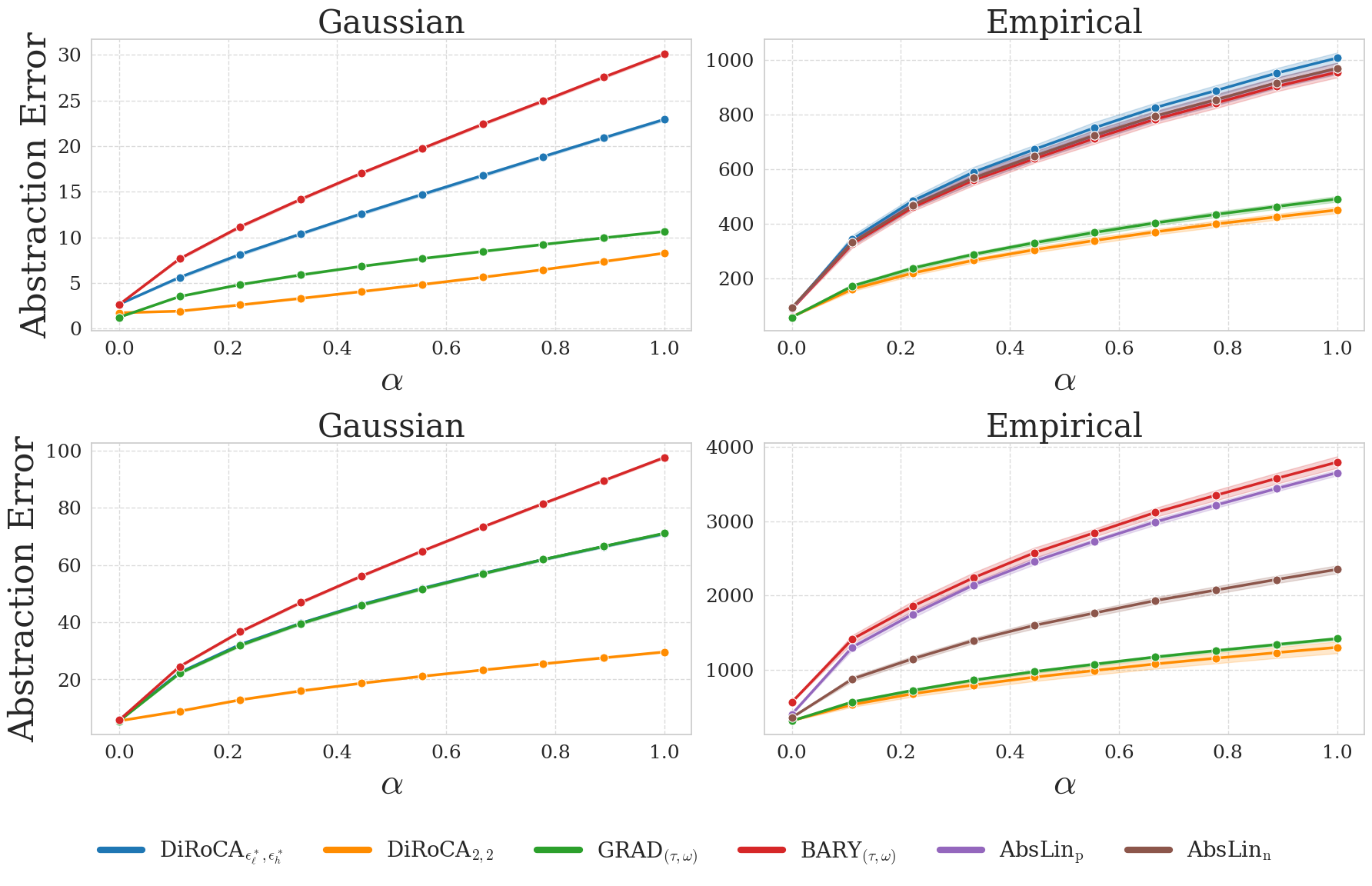}
    \caption{Robustness to outlier fraction ($\alpha$) on the \textbf{\texttt{SLC}} (top) and \texttt{LiLUCAS} (bottom) experiments for the Gaussian (left) and Empirical (right) settings. The evaluation is performed at a fixed \textit{Exponential} noise intensity ($\tilde{\sigma}=5.0$ for \texttt{SLC} and $\tilde{\sigma}=10.0$ for \texttt{LiLUCAS})}
\label{fig:error_vs_alpha_expo}
\end{figure}

\begin{figure}
    \centering
    \includegraphics[width=.85\linewidth]{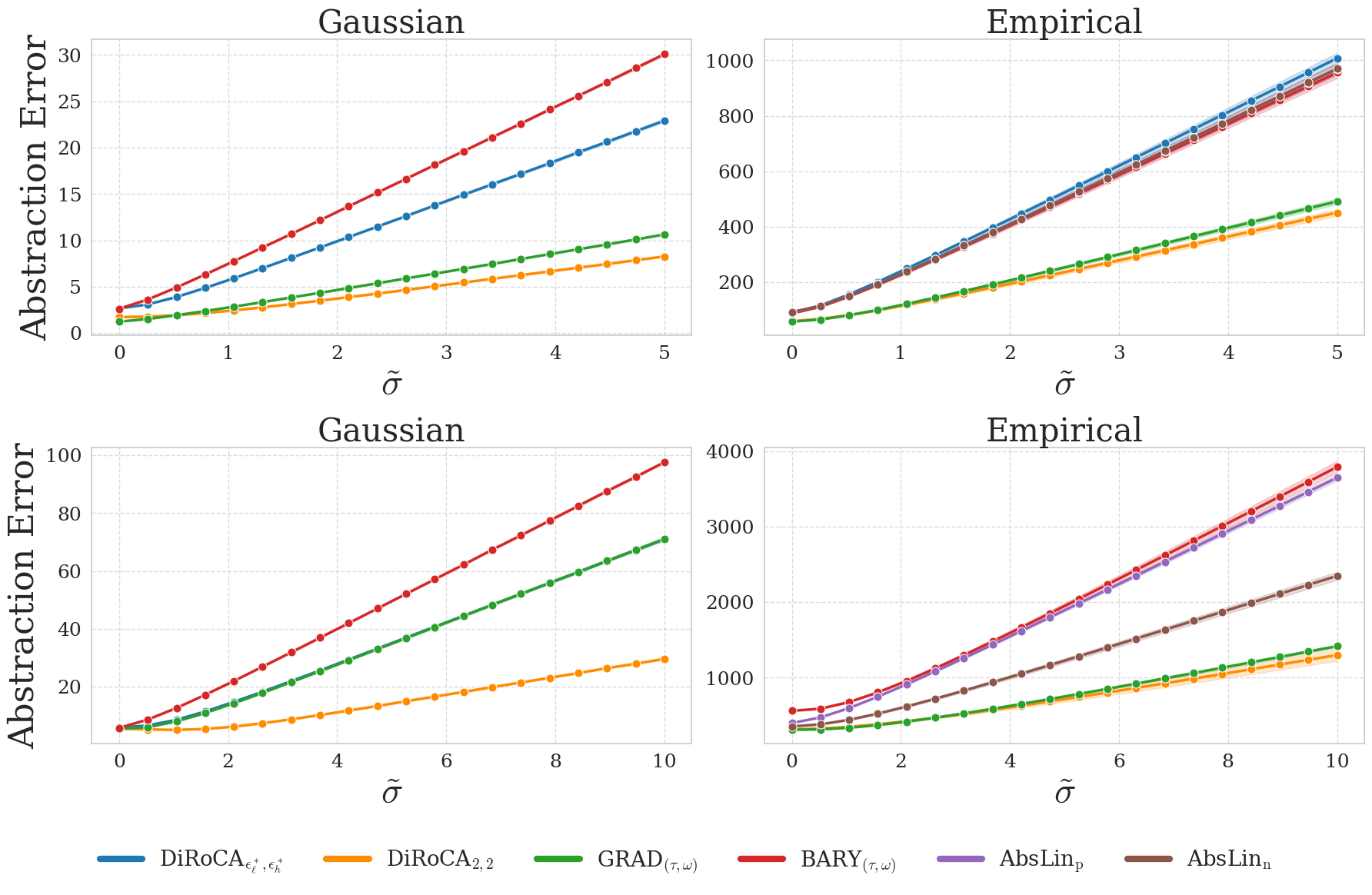}
    \caption{Robustness to \textit{Exponential} noise intensity ($\tilde{\sigma}$) on the \texttt{SLC} (top) and \texttt{LiLUCAS} (bottom) experiments for the Gaussian (left) and Empirical (right) settings. The evaluation is performed at a fixed outlier fraction of $\alpha=1.0$ (fully noisy data).}
\label{fig:error_vs_sigma_expo}
\end{figure}

\begin{figure}
    \centering
    \includegraphics[width=.85\linewidth]{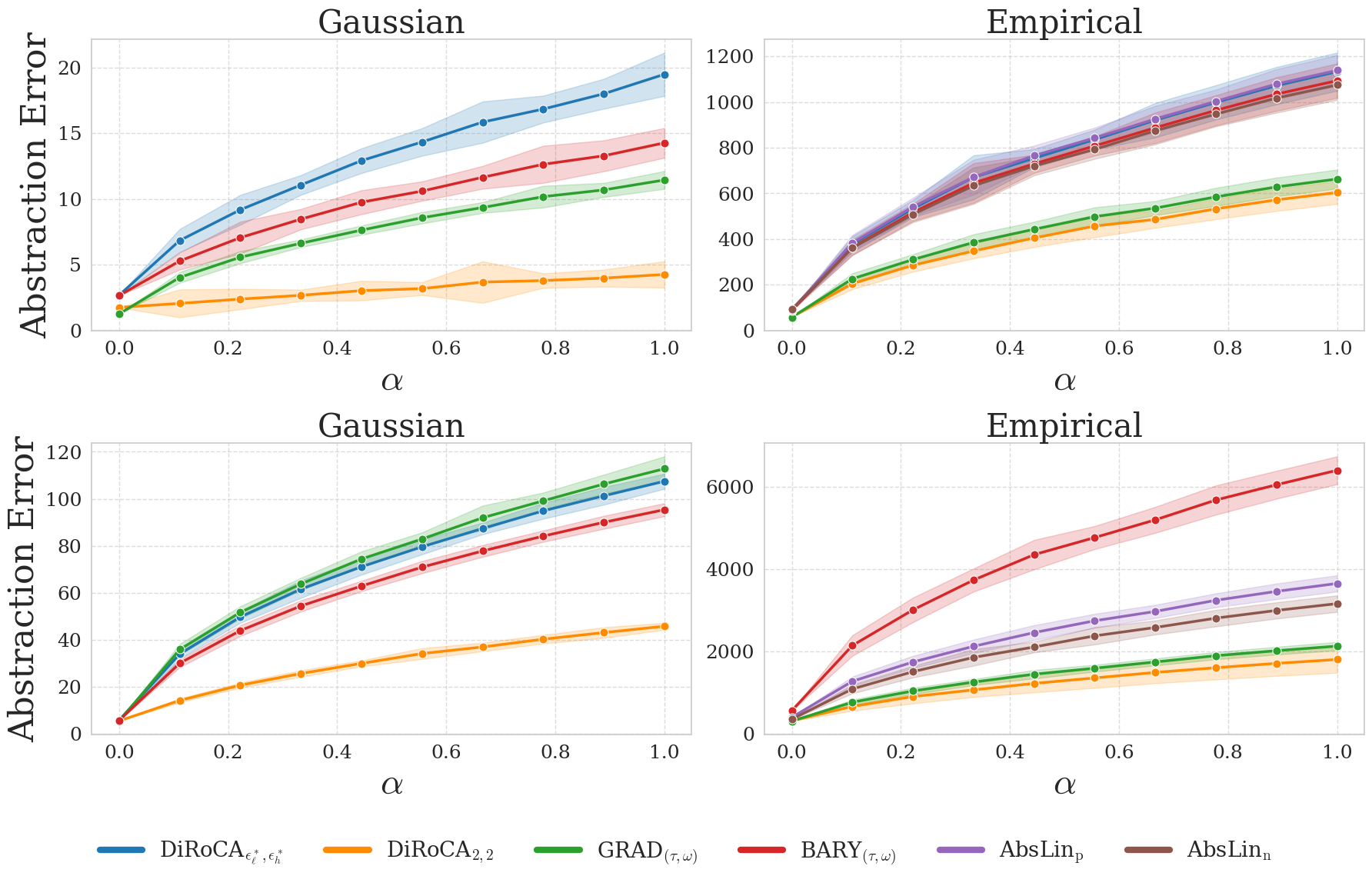}
    \caption{Robustness to outlier fraction ($\alpha$) on the \texttt{SLC} (top) and \texttt{LiLUCAS} (bottom) experiments for the Gaussian (left) and Empirical (right) settings. The evaluation is performed at a fixed \textit{Student-t} noise intensity ($\tilde{\sigma}=5.0$ for \texttt{SLC} and $\tilde{\sigma}=10.0$ for \texttt{LiLUCAS})}
\label{fig:error_vs_alpha_student}
\end{figure}

\begin{figure}
    \centering
    \includegraphics[width=.85\linewidth]{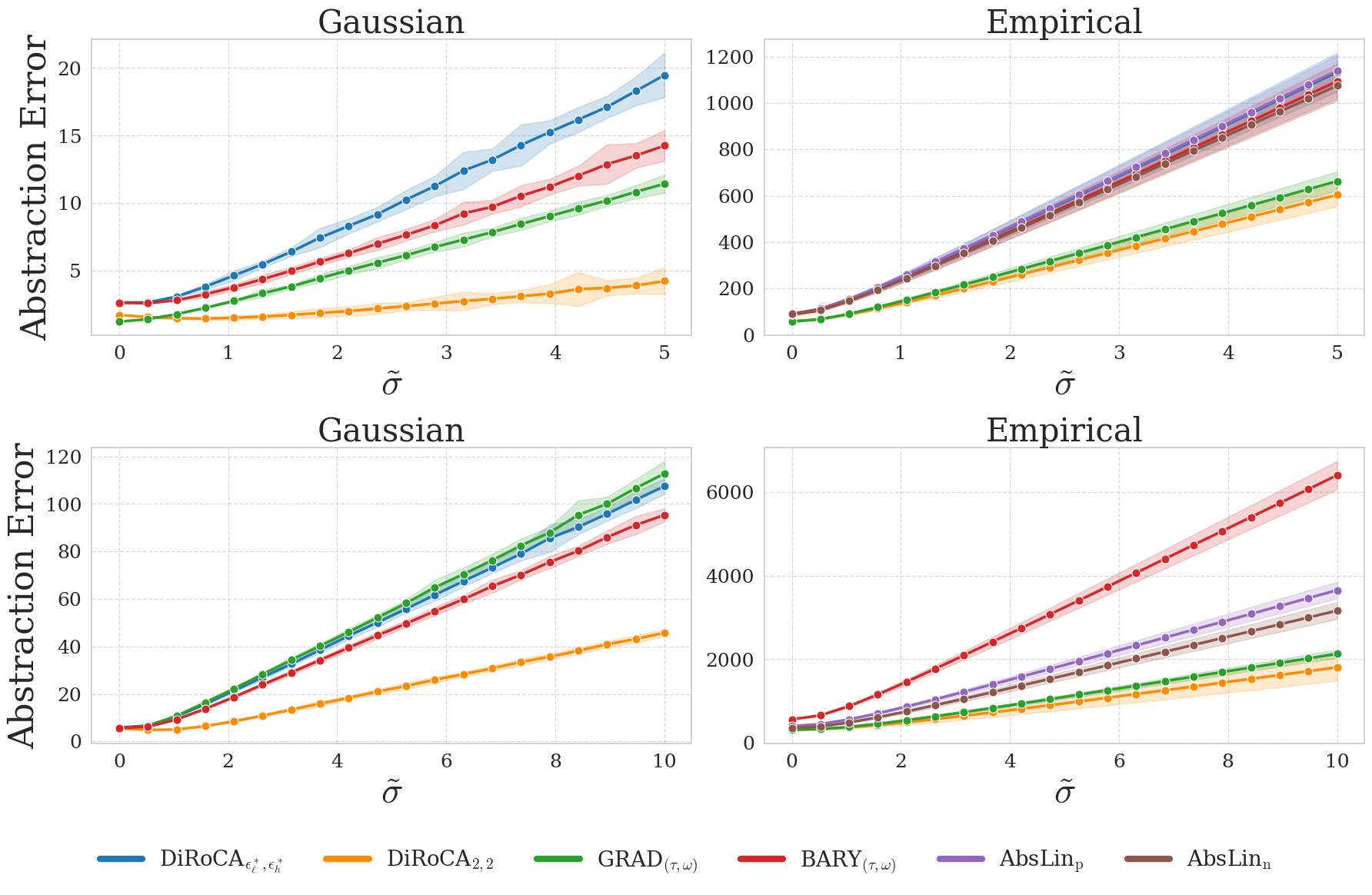}
    \caption{Robustness to \textit{Student-t} noise intensity ($\tilde{\sigma}$) on the \texttt{SLC} (top) and \texttt{LiLUCAS} (bottom) experiments for the Gaussian (left) and Empirical (right) settings. The evaluation is performed at a fixed outlier fraction of $\alpha=1.0$ (fully noisy data).}
\label{fig:error_vs_sigma_student}
\end{figure}

\begin{table}[htp]
\caption{Abstraction error and std under $\alpha=0$ for \texttt{nLUCAS} with $(\hat{\radbase}, \hat{\radabst})=(0.47, 0.45)$ comparing two evaluation protocols: \textbf{Consistency} (evaluating on the shared noise samples used during training) and \textbf{Generalization} (evaluating on new, independent noise samples). See Appendix~\ref{sec:all_results_settings} for details.}
\label{tab:lucas_alpha0_protocols}
\centering
\begin{tabular}{lcc}
\toprule
Method & Consistency & Generalization \\
\midrule
\textsc{Bary}$_{\tauomega}$ & 2.98 $\pm$ 0.02 & 18.74 $\pm$ 0.11 \\
\textsc{Grad}$_{\tauomega}$ & \textbf{2.42 $\pm$ 0.05} & 16.39 $\pm$ 0.13 \\
$\absp$ &5.94 $\pm$ 0.04  &26.09 $\pm$ 0.19\\
$\absn$ &7.93 $\pm$ 0.07 &20.36 $\pm$ 0.14\\
\textsc{DiRoCA}$_{\widehat{\epsilon}}$ & 2.51 $\pm$ 0.05 & 15.97 $\pm$ 0.15 \\
\textsc{DiRoCA}$_{1,1}$ & 6.35 $\pm$ 0.92 & 10.27 $\pm$ 0.53 \\
\textsc{DiRoCA}$_{2,2}$ & 8.90 $\pm$ 0.23 & 9.67 $\pm$ 0.11 \\
\textsc{DiRoCA}$_{4,4}$ & 13.48 $\pm$ 2.33 & \textbf{8.98 $\pm$ 0.10} \\
\textsc{DiRoCA}$_{8,8}$ & 13.49 $\pm$ 2.33 & 8.98 $\pm$ 0.10 \\
\bottomrule
\end{tabular}
\end{table}

\begin{table}[htp]
\caption{Abstraction error and std under $\alpha=1, \tilde{\sigma}=10.0$ (Gaussian noise) for \texttt{nLUCAS} with $(\hat{\radbase}, \hat{\radabst})=(0.47, 0.45)$ comparing the Consistency and Generalization evaluation protocols}
\label{tab:lucas_gaussian_alpha1}
\centering
\begin{tabular}{lcc}
\toprule
Method & Consistency & Generalization \\
\midrule
\textsc{Bary}$_{\tauomega}$ & 1008.04 $\pm$ 10.37 & 1024.79 $\pm$ 10.89 \\
\textsc{Grad}$_{\tauomega}$ & 863.91 $\pm$ 8.79 & 877.45 $\pm$ 9.65 \\
$\absp$ &1216.24 $\pm$ 12.32 &1238.59 $\pm$ 12.69\\
$\absn$ &876.45 $\pm$ 8.77 &889.06 $\pm$ 7.01\\
\textsc{DiRoCA}$_{\widehat{\epsilon}}$ & 794.99 $\pm$ 8.57 & 808.21 $\pm$ 8.38 \\
\textsc{DiRoCA}$_{1,1}$ & 298.52 $\pm$ 44.55 & 302.46 $\pm$ 45.85 \\
\textsc{DiRoCA}$_{2,2}$ & 203.82 $\pm$ 2.17 & 204.80 $\pm$ 2.35 \\
\textsc{DiRoCA}$_{4,4}$ & \textbf{126.14 $\pm$ 23.17} & \textbf{121.55 $\pm$ 25.34} \\
\textsc{DiRoCA}$_{8,8}$ & \textbf{126.17 $\pm$ 23.18} & \textbf{121.54 $\pm$ 25.55} \\
\bottomrule
\end{tabular}
\end{table}
\begin{table}[htp]
\caption{Robustness to intervention mapping misspecification on \texttt{nLUCAS}: Abstraction error under corrupted intervention map ($\omega$). Results reported for the Generalization protocol. \textsc{DiRoCA}$_{\widehat{\epsilon}}$ uses radii $(0.47, 0.45)$.}
\label{tab:lucas_omega}
\centering
\begin{tabular}{lc}
\toprule
Method & Error \\
\midrule
\textsc{Bary}$_{\tauomega}$ & 21.37 $\pm$ 1.01 \\
\textsc{Grad}$_{\tauomega}$ & 18.65 $\pm$ 0.88 \\
$\absp$ & 29.28 $\pm$ 1.20 \\
$\absn$ & 22.65 $\pm$ 0.89 \\
\textsc{DiRoCA}$_{\widehat{\epsilon}}$ & 18.17 $\pm$ 0.86 \\
\textsc{DiRoCA}$_{1,1}$ & 11.47 $\pm$ 0.78 \\
\textsc{DiRoCA}$_{2,2}$ & 10.17 $\pm$ 0.24 \\
\textsc{DiRoCA}$_{4,4}$ & \textbf{9.13 $\pm$ 0.29} \\
\textsc{DiRoCA}$_{8,8}$ & \textbf{9.13 $\pm$ 0.29} \\
\bottomrule
\end{tabular}
\end{table}

\begin{figure}
    \centering
    \includegraphics[width=0.85\linewidth]{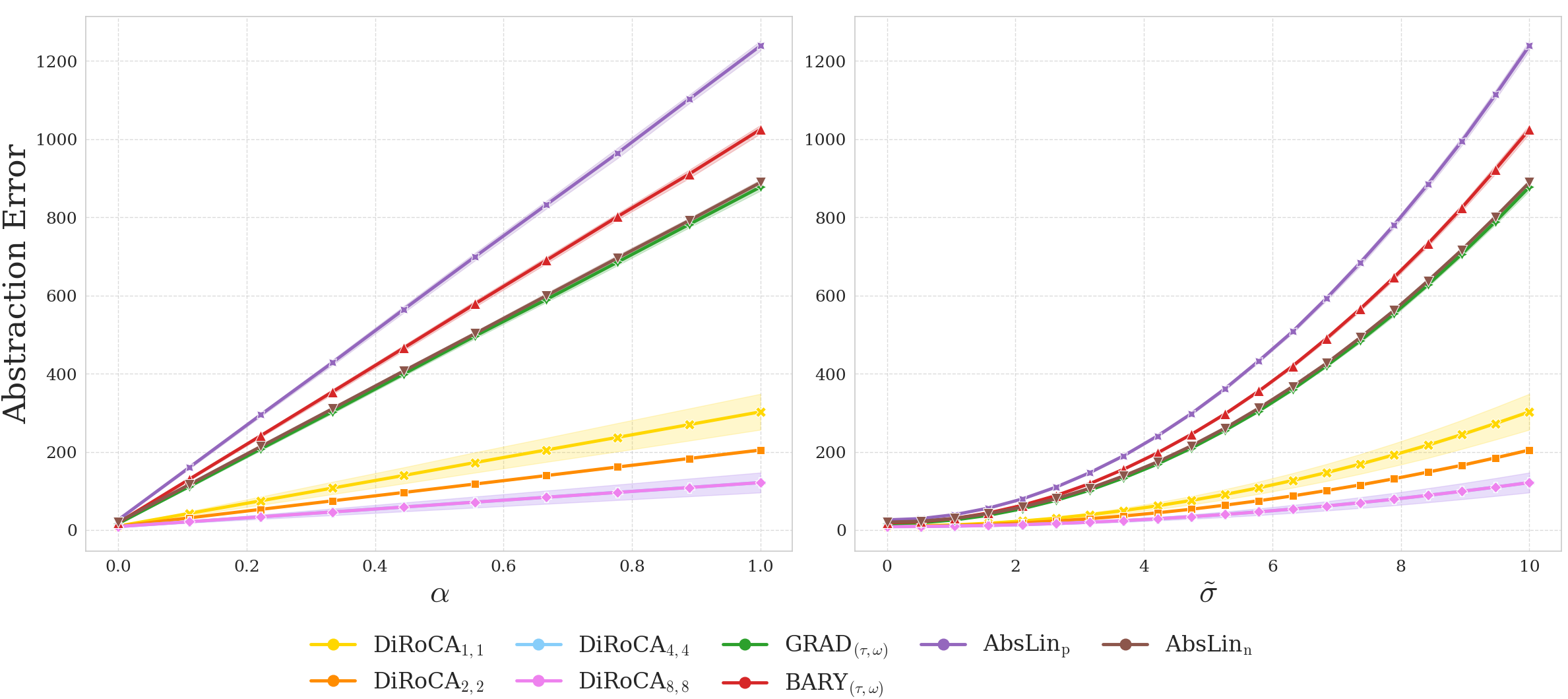}
    \caption{For the \texttt{nLUCAS} experiment. \textbf{Left:} Robustness to outlier fraction ($\alpha$) at a fixed \textit{Gaussian} noise intensity ($\tilde{\sigma}=10.0$). \textbf{Right:} Robustness to \textit{Gaussian} noise intensity ($\tilde{\sigma}$) at a fixed outlier fraction of $\alpha=1.0$.}
    \label{fig:placeholder}
\end{figure}
\begin{figure}
    \centering
    \includegraphics[width=0.85\linewidth]{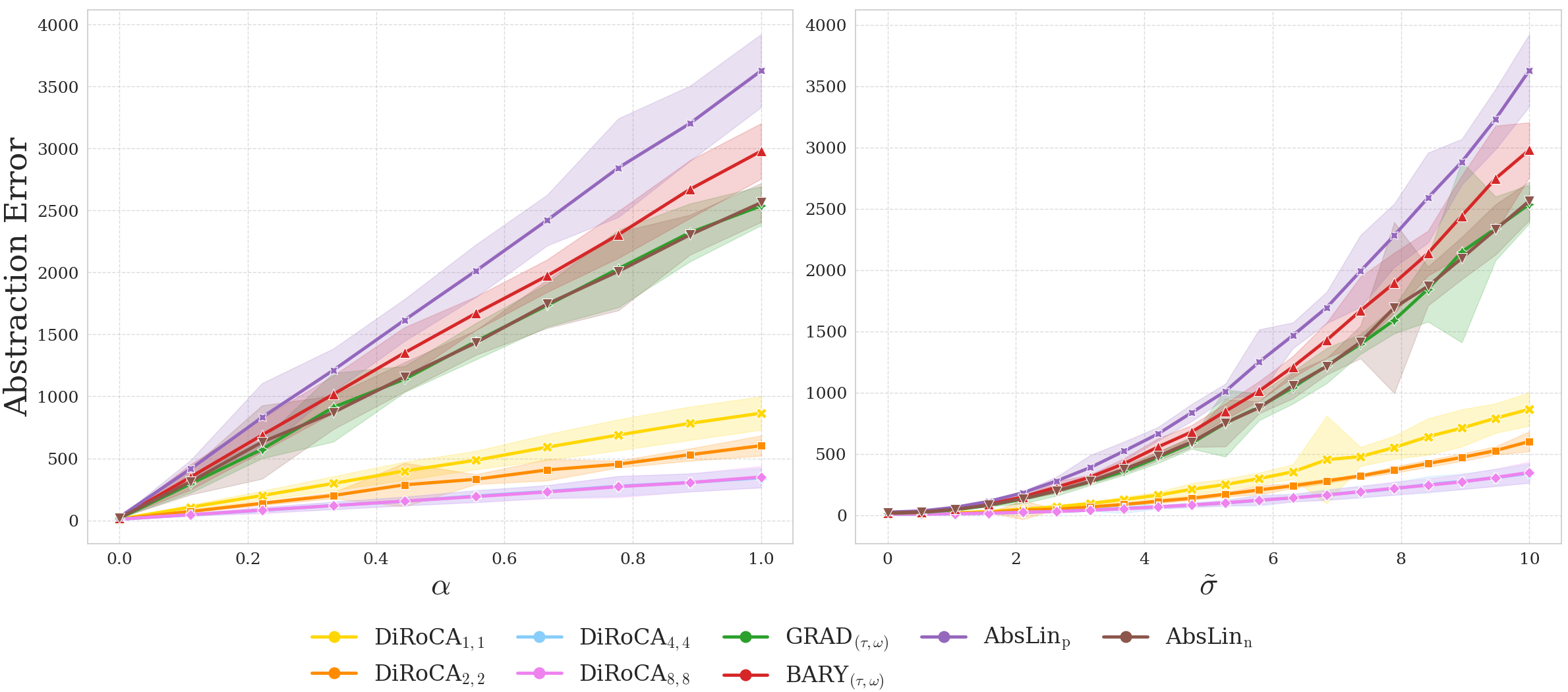}
    \caption{For the \texttt{nLUCAS} experiment. \textbf{Left:} Robustness to outlier fraction ($\alpha$) at a fixed \textit{Student-t} noise intensity ($\tilde{\sigma}=10.0$). \textbf{Right:} Robustness to \textit{Student-t} noise intensity ($\tilde{\sigma}$) at a fixed outlier fraction of $\alpha=1.0$.}
    \label{fig:placeholder}
\end{figure}
\begin{figure}
    \centering
    \includegraphics[width=0.85\linewidth]{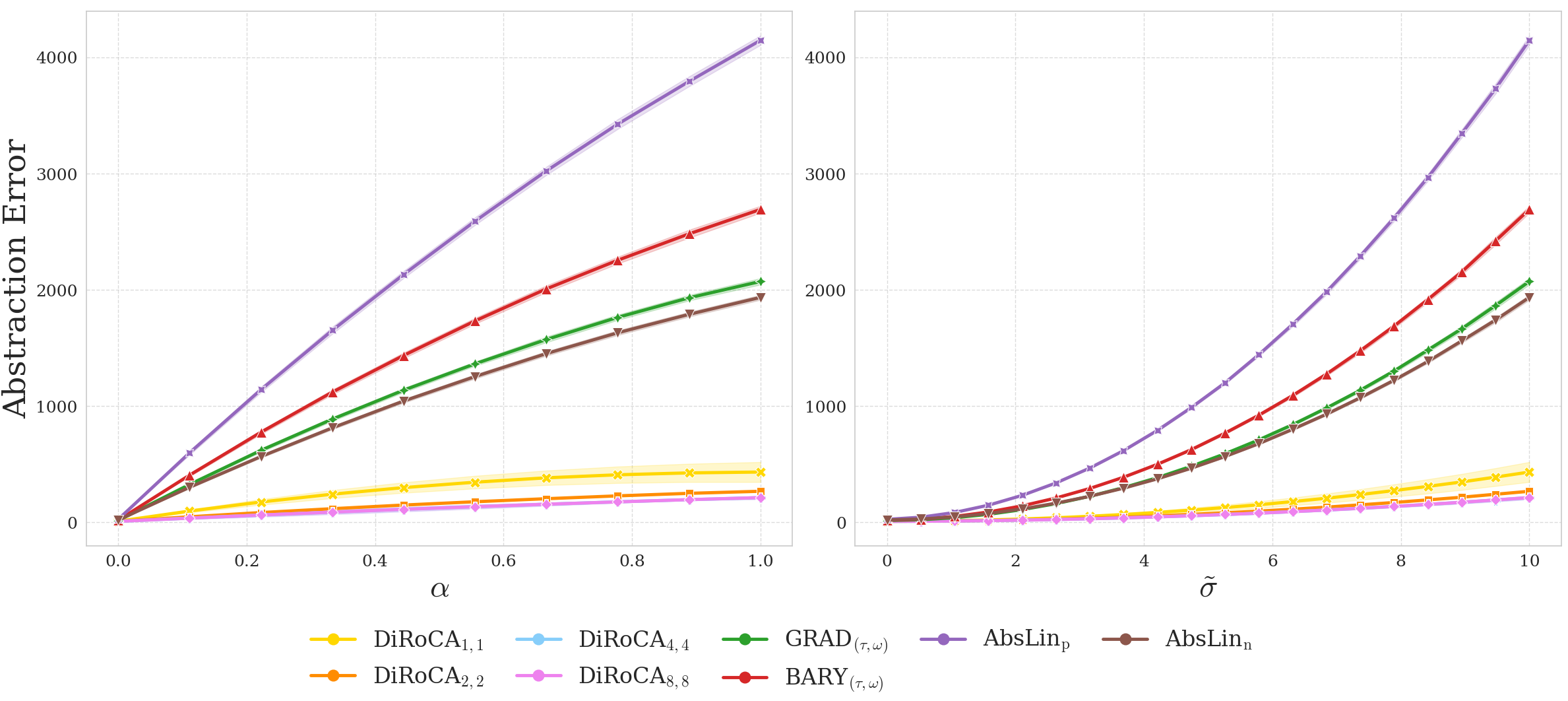}
    \caption{For the \texttt{nLUCAS} experiment. \textbf{Left:} Robustness to outlier fraction ($\alpha$) at a fixed \textit{Exponential} noise intensity ($\tilde{\sigma}=10.0$). \textbf{Right:} Robustness to \textit{Exponential} noise intensity ($\tilde{\sigma}$) at a fixed outlier fraction of $\alpha=1.0$.}
    \label{fig:placeholder}
\end{figure}

\begin{table}[htp]
\caption{Abstraction error and std under clean $\alpha=0$ and fully corrupted $\alpha=1, \tilde{\sigma}=10.0$ (Gaussian noise) settings for \texttt{EBM} with $(\hat{\radbase}, \hat{\radabst})=(0.55, 0.41)$.}
\label{tab:battery_gaussian_merged}
\centering
\begin{tabular}{lcc}
\toprule
Method & $\alpha=0$ & $\alpha=1$ \\
\midrule
$\absp$
& \textbf{58.85 $\pm$ 11.27} 
& 3034.75 $\pm$ 2129.28 \\

$\absn$
& 352.82 $\pm$ 47.07 
& 3330.89 $\pm$ 3238.00 \\

\textsc{Bary}$_{\tauomega}$ 
& 84.03 $\pm$ 14.06 
& 357.42 $\pm$ 144.91 \\

\textsc{Grad}$_{\tauomega}$ 
& 93.53 $\pm$ 26.43 
& 320.15 $\pm$ 148.20 \\

\textsc{DiRoCA}$_{\widehat{\epsilon}}$ 
& 83.96 $\pm$ 28.64 
& \textbf{297.15 $\pm$ 167.75} \\

\textsc{DiRoCA}$_{0.5,0.5}$ 
& 73.10 $\pm$ 13.12 
& \textbf{268.35 $\pm$ 158.99} \\

\textsc{DiRoCA}$_{1,1}$ 
& 77.80 $\pm$ 19.93 
& \textbf{268.38 $\pm$ 141.75} \\

\textsc{DiRoCA}$_{2,2}$ 
& 92.55 $\pm$ 26.11 
& 313.50 $\pm$ 125.72 \\

\textsc{DiRoCA}$_{4,4}$ 
& 91.76 $\pm$ 27.09 
& 321.39 $\pm$ 166.74 \\

\textsc{DiRoCA}$_{8,8}$ 
& 91.76 $\pm$ 27.09 
& \textbf{307.86 $\pm$ 172.27} \\
\bottomrule
\end{tabular}
\end{table}

\begin{figure}[t]
\centering

% --- Top figure ---
\includegraphics[width=.85\linewidth]{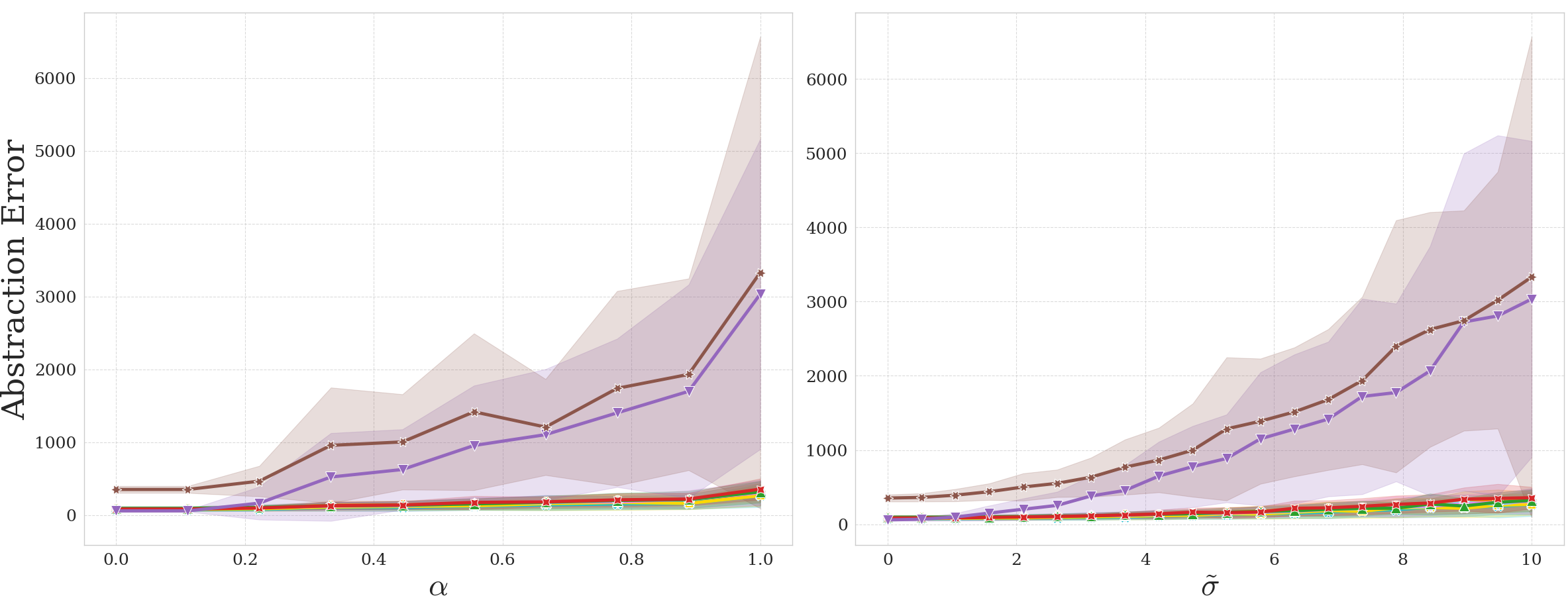}

% --- Vertical gap you can tune ---
\vspace{6pt} % <--- adjust this (can be negative too)

% --- Bottom figure ---
\includegraphics[width=.85\linewidth]{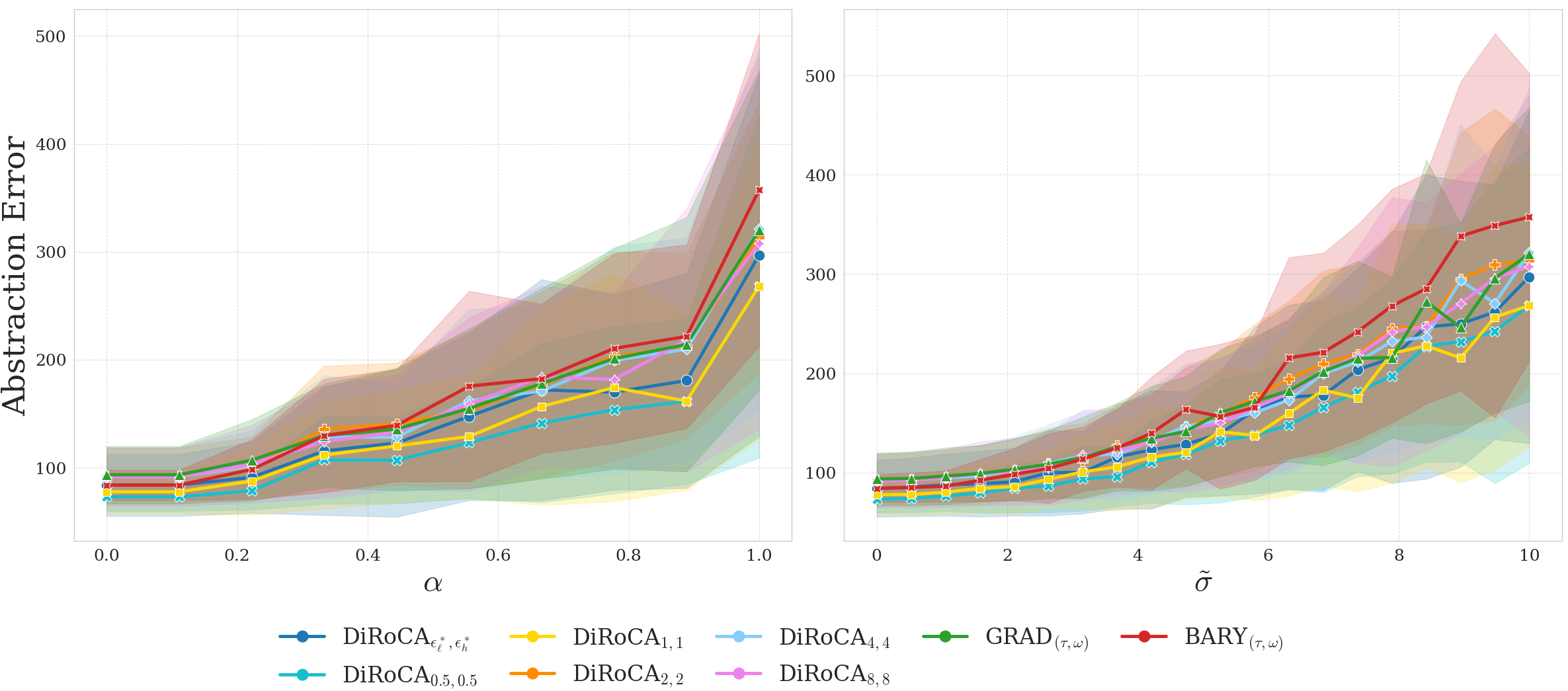}

\caption{For the \texttt{EBM} experiment. \textbf{Left:} Robustness to outlier fraction ($\alpha$) at a fixed \textit{Student-t} noise intensity ($\tilde{\sigma}=10.0$). \textbf{Right:} Robustness to \textit{Student-t} noise intensity ($\tilde{\sigma}$) at a fixed outlier fraction of $\alpha=1.0$. The bottom row presents results with the $\absp$ and $\absn$ baselines omitted for clearer visualization.}\label{fig:stacked_two}
\end{figure}

\begin{table}[htp]
\caption{Robustness to camera shifts on \texttt{cMNIST} (Relative L2 \%) with $(\hat{\radbase}, \hat{\radabst})=(61.69, 0)$. We evaluate two distinct shift types at increasing severity levels: \textbf{Lighting} (Low: dimming $\lambda=0.6$, High: overexposure $\lambda=3.0$) and \textbf{Rotation} (Low: $30^\circ$, High: $90^\circ$). \textsc{DiRoCA} remains stable under both shift types, while baselines degrade sharply.}
\label{tab:cmnist_lighting_rotation}
\centering
\begin{tabular}{lcccc}
\toprule
\multirow{2}{*}{Method} 
& \multicolumn{2}{c}{Lighting} 
& \multicolumn{2}{c}{Rotation} \\
\cmidrule(lr){2-3} \cmidrule(lr){4-5}
& Low & High & Low & High \\
\midrule
\textsc{Bary}$_{\tauomega}$             
& 97.48 $\pm$ 3.10  & 870.61 $\pm$ 34.88
& 2287.08 $\pm$ 198.31 & 4131.29 $\pm$ 363.07 \\
\textsc{Grad}$_{\tauomega}$             
& 32.90 $\pm$ 0.83  & 330.53 $\pm$ 8.36
& 1776.67 $\pm$ 235.11 & 3595.63 $\pm$ 653.21 \\
$\absp$                                 
& 56.09 $\pm$ 0.87  & 505.20 $\pm$ 11.27
& 541.15 $\pm$ 13.20 & 604.37 $\pm$ 24.12 \\
$\absn$                                 
& 95.76 $\pm$ 2.68  & 447.47 $\pm$ 10.69
& 276.81 $\pm$ 6.65 & 287.80 $\pm$ 9.26 \\
\midrule
\textsc{DiRoCA}$_{\hat{\radbase}, 0}$   
& 35.84 $\pm$ 3.64  & 137.99 $\pm$ 6.53
& 101.64 $\pm$ 6.20 & 106.40 $\pm$ 3.74 \\
\textsc{DiRoCA}$_{8, 0}$              
& \textbf{17.82 $\pm$ 0.66}  & 75.19 $\pm$ 4.03
& 52.63 $\pm$ 2.04 & 56.71 $\pm$ 2.59 \\
\textsc{DiRoCA}$_{20, 0}$               
& 18.96 $\pm$ 0.49  & \textbf{26.77 $\pm$ 0.94}
& \textbf{28.02 $\pm$ 0.39} & \textbf{30.64 $\pm$ 0.72} \\
\textsc{DiRoCA}$_{40, 0}$               
& 24.13 $\pm$ 2.63  & 72.93 $\pm$ 5.56
& 61.48 $\pm$ 4.36 & 66.60 $\pm$ 2.68 \\
\bottomrule
\end{tabular}
\end{table}

\begin{table}[htp]
\caption{Huber Noise Robustness on \texttt{cMNIST} (Relative L2 \%) for $\alpha=0$ (Clean) and $\alpha=1, \tilde{\sigma} = 0.5$ (fully corrupted) with $(\hat{\radbase}, \hat{\radabst})=(61.69, 0)$. 
Values are reported as mean $\pm$ standard deviation across trials. \textsc{DiRoCA} maintains low error even under full noise saturation.}
\label{tab:cmnist_huber_noise}
\centering
\begin{tabular}{lcc}
\toprule
Method & $\alpha=0.00$ & $\alpha=1.00$ \\
\midrule
\textsc{Bary}$_{\tauomega}$             
& 270.19 $\pm$ 8.69 
& 15338.79 $\pm$ 628.99 \\

\textsc{Grad}$_{\tauomega}$             
& 87.64 $\pm$ 2.17 
& 22207.40 $\pm$ 1501.67 \\

$\absp$                               
& 141.64 $\pm$ 2.44 
& 2951.69 $\pm$ 51.59 \\

$\absn$                                
& 227.89 $\pm$ 5.78 
& 470.84 $\pm$ 8.45 \\
\midrule
\textsc{DiRoCA}$_{\hat{\radbase},0}$   
& 69.80 $\pm$ 5.04 
& 214.91 $\pm$ 8.42 \\

\textsc{DiRoCA}$_{8.0,0}$              
& 35.97 $\pm$ 1.70 
& 60.33 $\pm$ 1.73 \\

\textsc{DiRoCA}$_{20,0}$               
& \textbf{20.05 $\pm$ 0.70} 
& \textbf{48.17 $\pm$ 0.64} \\

\textsc{DiRoCA}$_{40,0}$               
& 38.68 $\pm$ 3.20 
& 132.86 $\pm$ 7.48 \\
\bottomrule
\end{tabular}
\end{table}

\newpage
\subsection{Optimization}\label{sec:optapp}
\subsubsection{Distributionally Robust Causal Abstraction Learning (\textsc{DiRoCA})}
In this section, we provide a detailed analytical presentation of the optimization procedures of \textsc{DiRoCA} starting from the Linear models both for the Gaussian and empirical settings.

\paragraph{Gaussian case.} Recall that in the Gaussian case \textsc{DiRoCA} solves the following constrained min-max optimization problem:
\begin{align}
    \min_{\lintau \in \mathbb{R}^{h \times \ell}} \max_{\substack{\mu^\ell, \Sigma^\ell \\ \mu^h, \Sigma^h}} 
    \mathbb{E}_{\iota \sim q} \Big[ \mathcal{W}_2^2\big(\mathcal{N}(\lintau \mixbase_\iota \mu^\ell, \lintau \mixbase_\iota \Sigma^\ell \mixbase_\iota^\top \lintau^\top),~ \mathcal{N}(\mixabst_{\omega(\iota)} \mu^h, \mixabst_{\omega(\iota)} \Sigma^h \mixabst_{\omega(\iota)}^\top)\big) \Big]
\end{align}
supported by the environmental uncertainty constraints:
\begin{align}
\mathcal{W}^2_2(\mathcal{N}(\mu^\ell, \Sigma^\ell),~ \mathcal{N}(\widehat{\mu^\ell}, \widehat{\Sigma^\ell})) \leq \radbase^2, \quad
\mathcal{W}^2_2(\mathcal{N}(\mu^h, \Sigma^h),~ \mathcal{N}(\widehat{\mu^h}, \widehat{\Sigma^h})) \leq \radabst^2
\end{align}
where \(\widehat{\mu^\ell}, \widehat{\Sigma^\ell}, \widehat{\mu^h}, \widehat{\Sigma^h}\) are empirical estimates from observed data. Since we work with Markovian SCMs, the exogenous variables are jointly independent, which implies that their covariance matrices are diagonal. This structural property simplifies the form of the Wasserstein constraints, allowing them to be expressed as:
\begin{align}
\|\mu^\ell - \widehat{\mu^\ell}\|_2^2 + \|{\Sigma^\ell}^{1/2} - \widehat{\Sigma^{\ell}}^{1/2}\|_2^2 \leq \radbase^2
\quad
\text{and}
\quad
\|\mu^h - \widehat{\mu^h}\|_2^2 + \|{\Sigma^h}^{1/2} - \widehat{\Sigma^h}^{1/2}\|_2^2 \leq \radabst^{2}
\end{align}
Expanding the Gelbrich formula for the Gaussian 2-Wasserstein distance, the objective decomposes as:
\begin{align}\label{eq:exact_F}
    F(\lintau, \mu^\ell, \Sigma^\ell, \mu^h, \Sigma^h) = \underbrace{\mathbb{E}_{\iota \sim q} \Big[
    \|\lintau \mixbase_\iota \mu^\ell - \mixabst_{\omega(\iota)} \mu^h\|_2^2 + \operatorname{Tr}(\lintau \mixbase_\iota \Sigma^\ell \mixbase_\iota^\top \lintau^\top) + \operatorname{Tr}(\mixabst_{\omega(\iota)} \Sigma^h \mixabst_{\omega(\iota)}^\top)}_{F_s (\text{smooth term})}\Big] \\ 
    \underbrace{-\mathbb{E}_{\iota \sim q} \Big[ 2\operatorname{Tr}\big( (\mixabst_{\omega(\iota)} \Sigma^h \mixabst_{\omega(\iota)}^\top)^{1/2} (\lintau \mixbase_\iota \Sigma^\ell \mixbase_\iota^\top \lintau^\top) (\mixabst_{\omega(\iota)} \Sigma^h \mixabst_{\omega(\iota)}^\top)^{1/2} \big)^{1/2}}_{F_n (\text{non-smooth term})}
    \Big]
\end{align}
We rewrite the objective as the sum of a \emph{smooth} component \(F_{\text{s}}\) capturing the quadratic terms and a \emph{non-smooth} component \(F_{\text{n}}\) capturing the last trace term. Thus the objective becomes:
\begin{align}
\min_{\lintau} \max_{\mu^\ell, \Sigma^\ell, \mu^h, \Sigma^h} \quad F_{\text{s}}(\lintau, \mu^\ell, \Sigma^\ell, \mu^h, \Sigma^h) + F_{\text{n}}(\lintau, \Sigma^\ell, \Sigma^h)
\end{align}
Let \(S_\ell = \lintau \mixbase_\iota \Sigma^\ell \mixbase_\iota^\top \lintau^\top\) and \(S_h=\mixabst_{\omega(\iota)} \Sigma^h \mixabst_{\omega(\iota)}^\top\). Then the non-smooth term simplifies as:
\begin{align}
    \operatorname{Tr}\left(\left(S_h^{1/2} S_\ell S_h^{1/2}\right)^{1/2}\right) = \|S_\ell^{1/2} S_h^{1/2}\|_*
\end{align}
To handle the non-smoothness of the nuclear norm in optimization, we use the variational characterization from \citet{NIPS2004_e0688d13}. For any matrix \(X = UV^\top\),
\begin{align}
\|X\|_* = \min_{\substack{U,V:\\ X = UV^\top}} \|U\|_{\operatorname{F}} \|V\|_{\operatorname{F}}
\end{align}
In our case:
\begin{align}
    \|S_\ell^{1/2} S_h^{1/2}\|_* \leq \|S_\ell^{1/2}\|_{\operatorname{F}} \|S_h^{1/2}\|_{\operatorname{F}}
\end{align}
Applying this inequality, we upper bound the non-smooth term, allowing the optimization to proceed efficiently using standard gradient and proximal-gradient methods. Specifically, we maximize a surrogate lower bound \(\tilde{F} \leq F\):
\begin{align}\label{approxi_f}
 \tilde{F} \coloneq F_{\text{s}}(\lintau, \mu^\ell, \Sigma^\ell, \mu^h, \Sigma^h) + F^*_{\text{n}}(\lintau, \Sigma^\ell, \Sigma^h)
\end{align}
where \(F^*_{\text{n}}(\lintau, \Sigma^\ell, \Sigma^h) = - \|(\lintau \mixbase_\iota \Sigma^\ell \mixbase_\iota^\top \lintau^\top)^{1/2}\|_{\operatorname{F}} \|(\mixabst_{\omega(\iota)} \Sigma^h \mixabst_{\omega(\iota)}^\top)^{1/2}\|_{\operatorname{F}}\).

The optimization algorithm alternates between updating \(\lintau\) via gradient descent on \eqref{eq:exact_F}, and updating the moments \((\mu^\ell, \Sigma^\ell), (\mu^h, \Sigma^h)\) via proximal-gradient ascent on \eqref{approxi_f} (see Algorithm~\ref{alg:elliptical_algo}). Figures~\ref{fig:marginals_slc} and~\ref{fig:marginals_lilucas} provide a visual illustration, showing the initial empirical distribution ($\mathcal{N}(\widehat{\mu}, \widehat{\Sigma})$) and the final worst-case distribution ($\mathcal{N}(\mu^\star, \Sigma^\star)$) learned by the adversary for our two experimental datasets.

\textbf{Note 1.} The proximal operator of the Frobenius norm of a matrix $A\in R^{m\times n}$ is given as
\begin{align}
\mathrm{prox}_{\lambda \|\cdot\|_F}(A) =
\begin{cases}
\left(I - \frac{\lambda}{\|A\|_F} \right) A, & \text{if } \|A\|_F > \lambda \\
0, & \text{otherwise}
\end{cases}.
\end{align}
Here $A$ denotes the square root of the transformed covariance matrices (e.g.,$(\lintau L_\iota \Sigma^\ell L_\iota^\top \lintau^\top)^{1/2}$ and $(H_{\omega(\iota)} \Sigma^h H_{\omega(\iota)}^\top)^{1/2}$).

\textbf{Note 2.} The projection operation $\text{proj}_{\mathcal{W}_2 \leq \radabst}(\cdot,\cdot,\cdot,\cdot)$ enforces the Wasserstein constraints:
\begin{align}
\mathcal{W}^2_2\big(\mathcal{N}(\mu, \Sigma),~ \mathcal{N}(\hat{\mu}, \hat{\Sigma})\big) \leq \varepsilon^2
\quad \text{via projection onto a Gelbrich ball.}
\end{align}
This is done iteratively using the update:
\begin{align}
(\mu, \Sigma) \leftarrow \hat{\mu} + \alpha (\mu - \hat{\mu}),~ \hat{\Sigma}^{1/2} + \alpha(\Sigma^{1/2} - \hat{\Sigma}^{1/2})
\end{align}
with a scaling factor \(\alpha = \epsilon / \mathcal{W}_2((\mu, \Sigma), (\hat{\mu}, \hat{\Sigma}))\).

\begin{figure}[ht]
    \centering

    \begin{subfigure}[b]{\linewidth}
        \centering
        \includegraphics[width=\linewidth]{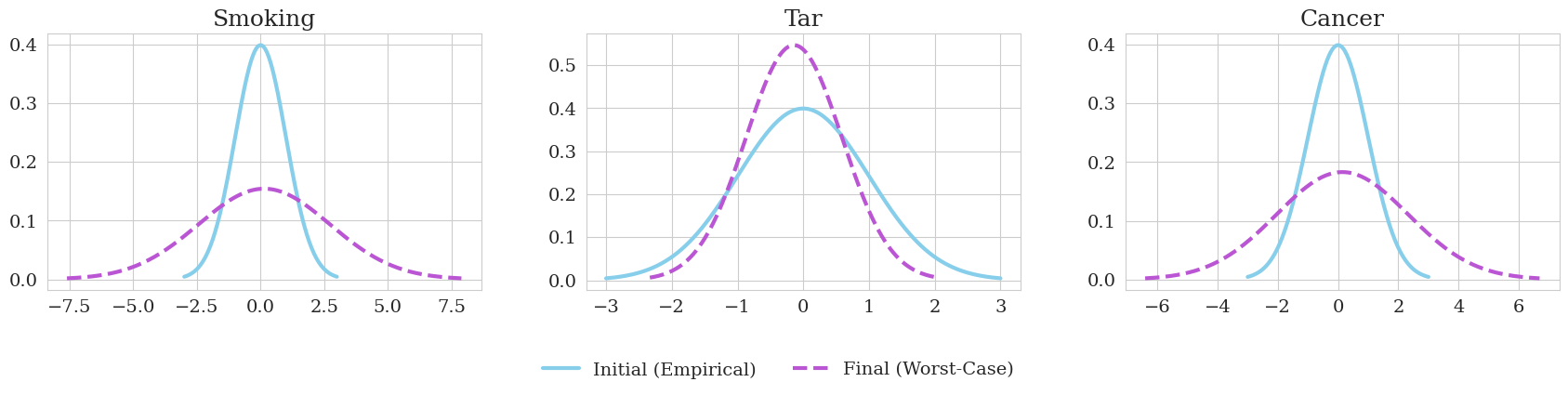}
        \caption{Low-level SCM ($\scmbase$)}
        \label{fig:marginals_slc}
    \end{subfigure}

    \vspace{0.5em}  % optional vertical spacing between the two

    \begin{subfigure}[b]{\linewidth}
        \centering
        \includegraphics[width=.67\linewidth]{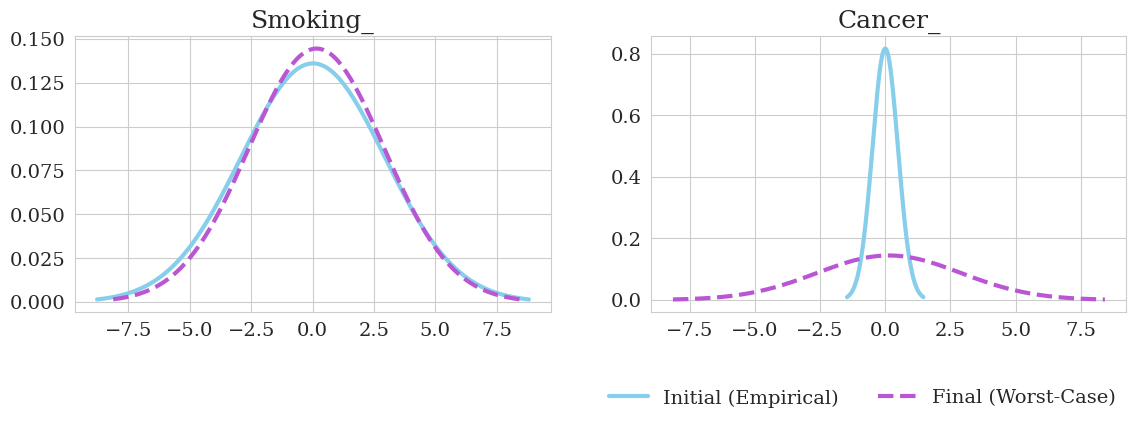}
        \caption{High-level SCM ($\scmabst$)}
        \label{fig:marginals_lilucas}
    \end{subfigure}
    \caption{Marginal noise distributions for each variable in the low-level (top) and high-level (bottom) SCMs of the \texttt{SLC} dataset. In each subplot, the \textcolor{SkyBlue}{solid blue} curve represents the initial empirical noise distribution, and the \textcolor{violet}{dashed purple} curve shows the corresponding worst-case distribution obtained via the distributionally robust optimization.}
    \label{fig:marginal-stacked-slc}
\end{figure}

\begin{figure}[ht]
    \centering

    \begin{subfigure}[b]{\linewidth}
        \centering
        \includegraphics[width=\linewidth]{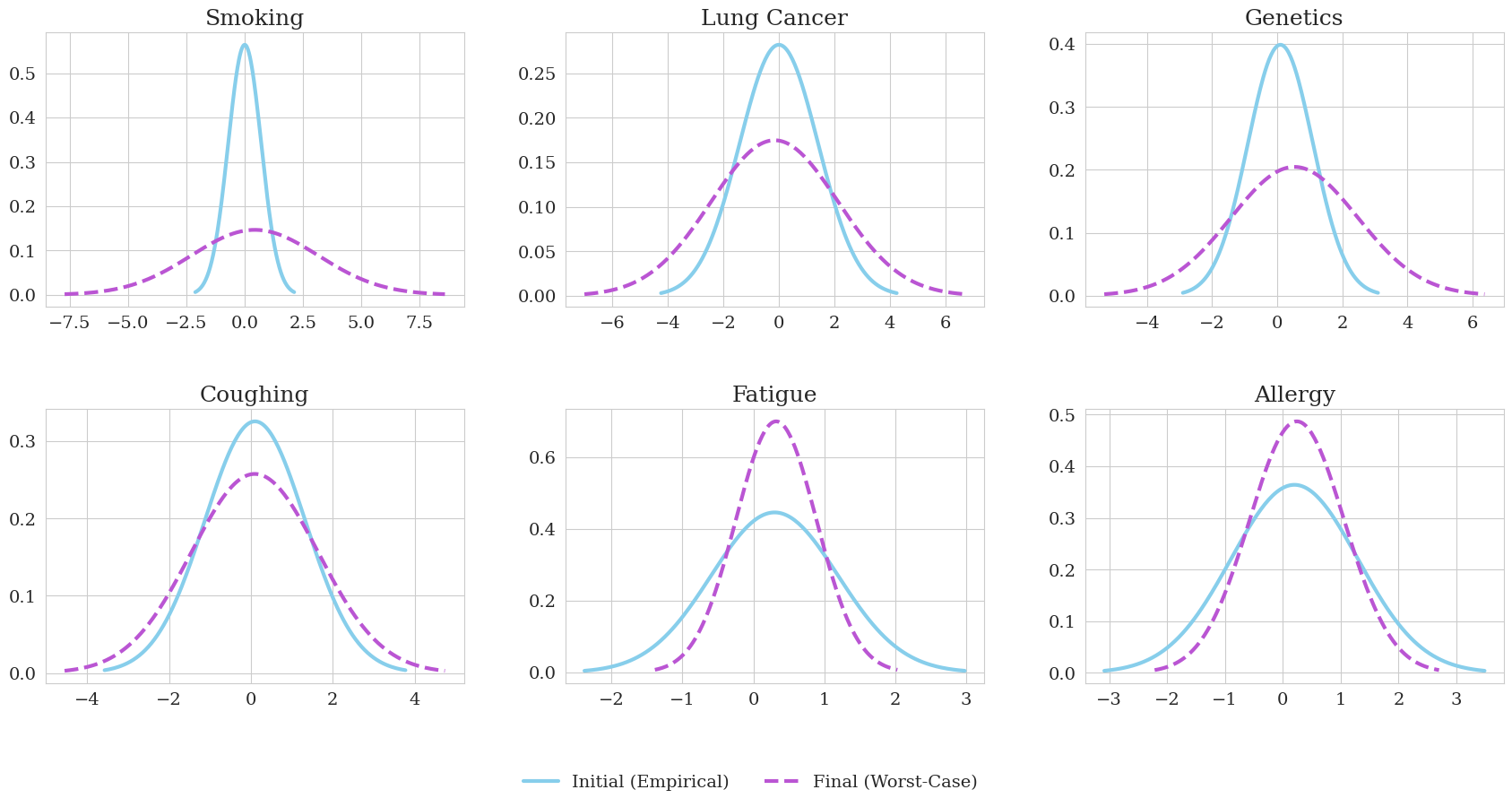}
        \caption{Low-level SCM ($\scmbase$)}
        \label{fig:marginal-low}
    \end{subfigure}

    \vspace{0.5em}  % optional vertical spacing between the two

    \begin{subfigure}[b]{\linewidth}
        \centering
        \includegraphics[width=\linewidth]{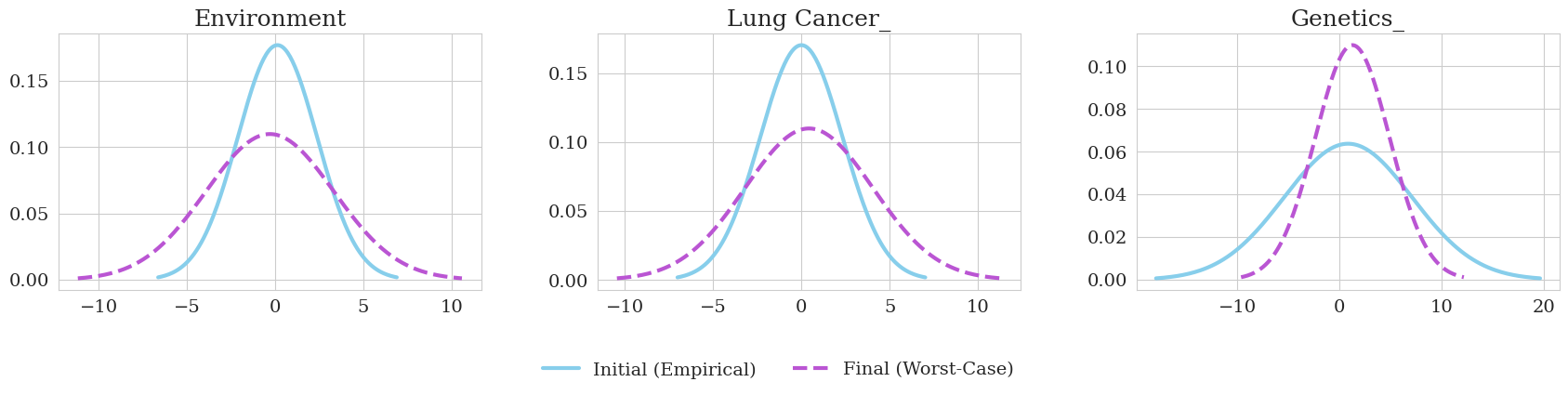}
        \caption{High-level SCM ($\scmabst$)}
        \label{fig:marginal-high}
    \end{subfigure}
    \caption{Marginal noise distributions for each variable in the low-level (top) and high-level (bottom) SCMs of the \texttt{LiLUCAS} dataset. In each subplot, the \textcolor{SkyBlue}{solid blue} curve represents the initial empirical noise distribution, and the \textcolor{violet}{dashed purple} curve shows the corresponding worst-case distribution obtained via the distributionally robust optimization.}
    \label{fig:marginal-stacked-luc}
\end{figure}

\paragraph{Empirical case.} When there are no structural assumptions about the distributional form of the environment, the empirical formulation provides a fully data-driven robust alternative, particularly useful when the low- and high-level environments are available only through finite samples. In this case, we introduce perturbation matrices \(\Theta_d \in \mathbb{R}^{N \times d}\) over the empirical exogenous samples and define $\rho^d(\Theta_d) = \frac{1}{N} \sum_{i=1}^{N} \delta_{\widehat{u}_i + \theta_i}$. The objective then measures the squared Frobenius discrepancy between the perturbed distributions of the two SCMs:
\begin{align}
F(\lintau, \Theta_\ell, \Theta_h) := \mathbb{E}_{\iota \sim q} \left[ 
\left\| 
\lintau \, \mixbase_{\iota}(\operatorname{U}^\ell + \Theta_\ell) - 
\mixabst_{\omega(\iota)}(\operatorname{U}^h + \Theta_h) 
\right\|_{\mathrm{F}}^2 
\right],
\end{align}
where $\operatorname{U}^\ell, \operatorname{U}^h$ are the empirical exogenous samples for $\scmbase$ and $\scmabst$, respectively. The perturbations capture adversarial shifts around these nominal empirical distributions. A proposed reparameterization reduces the Wasserstein ambiguity sets to constrained Frobenius norm balls \citep{kuhn2019wassersteindistributionallyrobustoptimization}: 
\begin{align}
\|\Theta_\ell\|_F \leq \radbase \sqrt{N_\ell}, \quad \|\Theta_h\|_F \leq \radabst \sqrt{N_h}
\end{align}
We solve this via alternating projected gradient descent. The optimization alternates between updating the abstraction map $\lintau$ with gradient descent and the adversarial perturbations of the samples through projected gradient ascent. Specifically,
\begin{itemize}
\item \textbf{Minimization step (updating $\lintau$):}  
Fix perturbations $(\Theta_\ell, \Theta_h)$ applied to the samples and perform gradient descent to update the abstraction map $\lintau$ by minimizing the empirical loss.
\item \textbf{Maximization step (updating perturbations $\Theta_\ell, \Theta_h$):}  
Fix $\lintau$ and perform adversarial updates to the perturbations:
\begin{itemize}
    \item Perturbations $\Theta_\ell$ are added to $U^\ell$ and $\Theta_h$ to $U^h$.
    \item Gradient ascent steps are performed to maximize the abstraction error.
    \item After each ascent step, perturbations are projected onto Frobenius norm balls to ensure they lie within a fixed robustness radius ($\radbase$ for $\Theta$, $\radabst$ for $\Theta_h$), enforcing constraints derived from the Wasserstein DRO ambiguity sets.
\end{itemize}
\end{itemize}

\paragraph{General Case.}
For general SCMs where mechanisms can be nonlinear, we rely on the $(D, U)$ decomposition detailed in Appendix~\ref{sec:anm_decomposition}. Unlike the linear case where mechanism matrices can be explicitly multiplied, here we operate on the abduced residuals directly.

We define the empirical marginal environments as $\widehat{\rho^d}= \frac{1}{N}\sum_{i=1}^{N} \delta_{\widehat{u}^d_i}$ for $d\in\{\ell,h\}$, and the joint environment as $\widehat{\exprob}= \widehat{\exprobbase} \otimes \widehat{\exprobabst}$. By the finite-dimensional reduction of \citet{kuhn2019wassersteindistributionallyrobustoptimization}, any distribution in a 2-Wasserstein ball around an empirical measure can be represented as a perturbed empirical distribution. We thus perturb the noise samples as $U^d \mapsto U^d + \Theta_d$, where $\Theta_d \in \mathbb{R}^{N \times d}$. The ambiguity radii $\varepsilon_\ell,\varepsilon_h$ correspond to Frobenius budgets $r_d = \varepsilon_d \sqrt{N}$, such that the condition $\|\Theta_d\|_F \le r_d$ is analytically equivalent to restricting the joint environment to $\bB_{\radprod,2}(\widehat{\exprob})$.

To formulate the robust objective, let $\mathbf{D}_\ell^{(\iota)}$, $\mathbf{D}_h^{(\omega(\iota))}$ be the pre-computed deterministic component matrices for intervention $\iota$ and the shared observational residuals are $\mathbf{U}^\ell$ and $\mathbf{U}^h$. We define the \emph{nominal misalignment} for an abstraction $\lintau$ as:
\begin{align}
    Z_{\lintau}^{\iota}(\mathbf{0}) \coloneq \lintau(\mathbf{D}_\ell^{(\iota)} + \mathbf{U}^\ell)^\top - (\mathbf{D}_h^{(\omega(\iota))} + \mathbf{U}^h).
\end{align}
Introducing perturbations $\mathbf{\Theta} \coloneq (\Theta_\ell, \Theta_h)$, the \emph{perturbed misalignment} shifts to:
\begin{align}
    Z_{\lintau}^{\iota}(\mathbf{\Theta}) \coloneq Z_{\lintau}^{\iota}(\mathbf{0}) + (\lintau\Theta_\ell^\top - \Theta_h).
\end{align}
The \textsc{DiRoCA} objective minimizes the expected perturbed misalignment under the worst-case observational shift within the ambiguity set:
\begin{align}\label{eq:minmax_general_app}
    \min_{\lintau} \sup_{\substack{\|\Theta_\ell\|_F \le r_\ell,~ \|\Theta_h\|_F \le r_h}}
    \mathbb{E}_{\iota\sim q}\!\left[
    \|Z_{\lintau}^{\iota}(\mathbf{\Theta})\|_F^2
    \right].
\end{align}
This formulation highlights that the adversary attempts to align the perturbation shift $(\lintau\Theta_\ell^\top - \Theta_h)$ with the nominal direction $Z_{\lintau}^{\iota}(\mathbf{0})$ to maximize the error. We optimize this problem via Alternating Projected Gradient Descent-Ascent, where the perturbations $\Theta$ are updated via projected gradient ascent (projecting onto the Frobenius balls at each step) and the abstraction $\lintau$ is updated via gradient descent, as detailed in Algorithm~\ref{alg:general_diroca}.
\subsubsection{Barycentric Abstraction Learning (\textsc{Bary}\texorpdfstring{$_{\tauomega}$}{[]}).}
% \subsubsection{Barycentric Abstraction Learning (\textsc{Bary}$_{\tauomega}$).}
We also consider an abstraction learning method based on \emph{barycentric aggregation} of interventional distributions, inspired by \citet{felekis2024causal}. 

\paragraph{Gaussian case.} In the case of linear models when the environments are Gaussian distributions, the key idea is to aggregate the different interventional distributions at both the low- and high-level SCMs into representative barycentric means and covariances, project the low-level distribution into the high-level space, and then minimize the Wasserstein distance between these two distributions.

Formally, given mixing matrices $\{\mixbase_\iota\}_{\iota \in \mathcal{I}^\ell}$ and $\{\mixabst_{\eta}\}_{\eta \in \mathcal{I}^h}$ corresponding to the low- and high-level SCMs under interventions, we define the \emph{barycentric mean} at each level according to Eq.~\ref{eq:barymean}:
\begin{align}
\mu^\ell_{\mathrm{bary}} = \frac{1}{|\mathcal{I}^\ell|} \sum_{\iota \in \mathcal{I}^\ell} \mixbase_\iota \mu_U^\ell,
\quad
\mu^h_{\mathrm{bary}} = \frac{1}{|\mathcal{I}^h|} \sum_{\eta \in \mathcal{I}^h} \mixabst_{\eta} \mu_U^h.
\end{align}
The \emph{barycentric covariance} at the low level, $\Sigma^\ell_{\mathrm{bary}}$, is obtained as the Wasserstein barycenter of the set $\{\mixbase_\iota \Sigma_U^\ell \mixbase_\iota^\top\}_{\iota \in \mathcal{I}^\ell}$, solving the fixed-point equation from Eq.~\ref{eq:barysigma}:
\begin{align}
\Sigma^\ell_{\mathrm{bary}} = \frac{1}{|\mathcal{I}^\ell|}\sum_{\iota \in \mathcal{I}^\ell} \lambda_\iota \left( \Sigma^\ell_{\mathrm{bary}}{}^{1/2} \left( \mixbase_\iota \Sigma_U^\ell \mixbase_\iota^\top \right) \Sigma^\ell_{\mathrm{bary}}{}^{1/2} \right)^{1/2},
\end{align}
The barycentric covariance $\Sigma^h_{\mathrm{bary}}$ at the high level is computed analogously over the set $\{\mixabst_{\eta} \Sigma_U^h \mixabst_{\eta}^\top\}_{\eta \in \mathcal{I}^h}$. To map from the low-level to the high-level space, we first project the low-level barycentric distribution onto the high-level dimension. Specifically, letting $V \in \mathbb{R}^{\ell \times h}$ denote the projection matrix, obtained via PCA, we define the projected mean and covariance:
\begin{align}
\mu^\ell_{\mathrm{proj}} = V^\top \mu^\ell_{\mathrm{bary}},
\quad
\Sigma^\ell_{\mathrm{proj}} = V^\top \Sigma^\ell_{\mathrm{bary}} V.
\end{align}
Consequently, by defining the abstraction map as the optimal transport Monge map between the distributions \(\mathcal{N}(\mu^\ell_{\mathrm{proj}}, \Sigma^\ell_{\mathrm{proj}})\) and \(\mathcal{N}(\mu^h_{\mathrm{bary}}, \Sigma^h_{\mathrm{bary}})\), from Eq.~\ref{eq:mongegauss}, since both distributions are Gaussian, this admits the following closed-form expression:
\begin{align}
T(x) = \mu^h_{\mathrm{bary}} + A(x - \mu^\ell_{\mathrm{proj}}),
\quad \text{where} \quad A = \Sigma^h_{\mathrm{bary}}{}^{1/2} \left( \Sigma^\ell_{\mathrm{proj}} \right)^{-1/2}.
\end{align}
Finally, since we initially projected the low-level distribution, the full transformation matrix is:
\begin{align}
\lintau = A V^\top,
\end{align} 
In both cases, the resulting abstraction approximately aligns the barycentric summaries of the low- and high-level interventional distributions. 

\paragraph{Empirical case.} In the linear setting where only empirical samples of the environments are available and no structural assumptions are made about their underlying distributions, we simply take the arithmetic mean across all the interventions of the mixing matrices $\mixbase_{bary}$ and $\mixabst_{bary}$ and propagate the exogenous datasets $\operatorname{U}^\ell$ and $\operatorname{U}^h$ through them and solve with gradient descent the following problem:
\begin{align}
\min_{\lintau \in \bR^{h\times \ell}}
\left\| 
\lintau \, \mixbase_{bary}(\operatorname{U}^\ell) - 
\mixabst_{bary}(\operatorname{U}^h) 
\right\|_{\mathrm{F}}^2 
\end{align}
While \textsc{Bary}$_{\tauomega}$ is computationally efficient, it does not explicitly optimize robustness to distributional shifts, unlike \textsc{DiRoCA}.

\paragraph{General Case.}
In the general setting (including nonlinear ANMs), we implement \textsc{Bary}$_{\tauomega}$ using the $(D, U)$ decomposition. Instead of averaging linear mixing matrices, we compute the \emph{barycentric deterministic component} by taking the arithmetic mean of the pre-computed deterministic matrices $\mathbf{D}^{(\iota)}$ across all training interventions. This yields an "average mechanism" $\overline{\mathbf{D}}^\ell = \frac{1}{|\mathcal{I}^\ell|} \sum_{\iota} \mathbf{D}_\ell^{(\iota)}$ and $\overline{\mathbf{D}}^h = \frac{1}{|\mathcal{I}^h|} \sum_{\eta} \mathbf{D}_h^{(\eta)}$. We then form a representative barycentric dataset by combining these centroids with the empirical residuals: $\mathbf{X}^\ell_{\text{bary}} = \overline{\mathbf{D}}^\ell + \mathbf{U}^\ell$ and $\mathbf{X}^h_{\text{bary}} = \overline{\mathbf{D}}^h + \mathbf{U}^h$. The abstraction $\lintau$ is learned by minimizing the Frobenius error between these two representative datasets, effectively aligning the "average" causal behavior of the low-level system to that of the high-level system, while ignoring specific interventional variations.

\begin{algorithm}[H]
\caption{General \textsc{Bary}$_{\tau\omega}$ $~~~~~~~~~~~~~~~~~~~~~~~~~~~~~~~~~~~~~~~~~~~~~~~~~~~~~~~~~~~~~~~~~~~~~~~~~~~~~~~~~~~~~~~~~~~~~~~~~~~~~~~~~~~~~~~~~~~~~~~~~~~~~~~[d \in \{\ell, h\}$]}\label{alg:general_baryca}
\begin{algorithmic}[1]
\REQUIRE $\omega, \mathcal{I}^d, \scmdag_{\scmsimple^d}, \{\mathbf{X}_\iota^d\}_{\iota \in \mathcal{I}^d}$
\REQUIRE Hyperparameters: $\eta_\tau$ (learning rate)

\STATE \textit{// Step 1: Abduction and deterministic computation}
\STATE $\mathbf{U}^d \leftarrow \text{Abduct}(\mathbf{X}^d, \mathcal{M}^d)$ \AlgComment{\#Abduct observational noise}

\STATE \textit{// Step 2: Compute barycentric deterministic components}
\STATE $\overline{\mathbf{D}}^\ell \leftarrow \frac{1}{|\mathcal{I}^\ell|}\sum_{\iota \in \mathcal{I}^\ell}\mathbf{D}_\ell^{(\iota)}$
\STATE $\overline{\mathbf{D}}^h \leftarrow \frac{1}{|\mathcal{I}^h|}\sum_{\eta \in \mathcal{I}^h}\mathbf{D}_h^{(\eta)}$
\AlgComment{\#Average mechanism}

\STATE \textit{// Step 3: Optimization}
\STATE \textbf{Initialize:} $\lintau^{(0)}$
\REPEAT
    \STATE $\mathcal{L}(\lintau) \coloneq
    \left\| \lintau (\overline{\mathbf{D}}^\ell + \mathbf{U}^\ell)^\top
      - (\overline{\mathbf{D}}^h + \mathbf{U}^h) \right\|_F^2$
    \STATE $\lintau^{(t+1)} \leftarrow \lintau^{(t)} - \eta_\tau \nabla_{\lintau}\mathcal{L}(\lintau^{(t)})$
\UNTIL{convergence}

\textbf{Return:} $\lintau^\star$
\end{algorithmic}
\end{algorithm}

\begin{algorithm}[H]
\caption{Gaussian \textsc{DiRoCA} for linear models}\label{alg:elliptical_algo}
\begin{algorithmic}[1]
\REQUIRE $\cS^\ell, \cS^h, \intervsetbase, \intervsetabst, \omega:\intervsetbase \to \intervsetabst$, samples from $\prob_{\scmbase_\iota}(\enbase), \prob_{\scmabst_{\omega(\iota)}}(\enabst)$
\STATE $\widehat{\exprobbase}\sim \cN(\widehat{\mul}, \widehat{\sigl}) \leftarrow (\mixfbase_{\#}(\prob_{\scmbase}))^{-1}$
\AlgComment{\#Abduction for $\scmbase$}
\STATE $\widehat{\exprobabst}\sim \cN(\widehat{\muh}, \widehat{\sigh}) \leftarrow (\mixfabst_{\#}(\prob_{\scmabst}))^{-1}$
\AlgComment{\#Abduction for $\scmabst$}
\STATE $\radprod \leftarrow$ from Theorem 1
\AlgComment{\#Compute robustness radius}
\STATE $\mathbb{B}_{\radprod,2}(\widehat{\exprob}) \leftarrow \mathbb{B}_{\radbase,2}(\widehat{\exprobbase}) \times \mathbb{B}_{\radabst,2}(\widehat{\exprobabst})$
\STATE \textbf{Initialize:} $\lintau^{(0)} \in \mathbb{R}^{h\times \ell}$,
$\exprob^{(0)}=\cN(\mu^{\ell(0)},\Sigma^{\ell(0)})\otimes \cN(\mu^{h(0)},\Sigma^{h(0)})$,
$\eta_{\lintau},\eta_{\exprob}$, $k_{\min},k_{\max}$
\REPEAT
    \FOR{$k_{\min}$ steps}
        \STATE $\lintau^{(t+1)} \leftarrow \lintau^{(t)} - \eta_{\lintau}\,\nabla_{\lintau}F(\lintau^{(t)},\rho^{(t)})$
        \AlgComment{\#Update abstraction $\lintau$}
    \ENDFOR
    \FOR{$k_{\max}$ steps}
        \STATE $\mu^{\ell(t+1)} \leftarrow \mu^{\ell(t)} + \eta_{\exprob}\,\nabla_{\mu^{\ell}} F(\mu^{\ell},\lintau^{(t+1)})$
        \STATE $\mu^{h(t+1)} \leftarrow \mu^{h(t)} + \eta_{\exprob}\,\nabla_{\mu^h}F(\mu^h,\lintau^{(t+1)})$
        \STATE $\Sigma^{\ell(t+\frac{1}{2})} \leftarrow \Sigma^{\ell(t)} + \eta_{\rho}\,\nabla_{\Sigma^\ell}\tilde{F}(\lintau^{(t+1)},\rho^{(t)})$
        \STATE $\Sigma^{\ell(t+1)} \leftarrow \mathrm{prox}_{\lambda_\ell}\!\left(\Sigma^{\ell(t+\frac{1}{2})}\right)$
        \AlgComment{\#Proximal step}
        \STATE $\Sigma^{h(t+\frac{1}{2})} \leftarrow \Sigma^{h(t)} + \eta_{\rho}\,\nabla_{\Sigma^h}\tilde{F}(\lintau^{(t+1)},\rho^{(t)})$
        \STATE $\Sigma^{h(t+1)} \leftarrow \mathrm{prox}_{\lambda_h}\!\left(\Sigma^{h(t+\frac{1}{2})}\right)$
        \AlgComment{\#Proximal step}
        \STATE $(\mu^{\ell(t+1)},\Sigma^{\ell(t+1)}) \leftarrow \text{proj}_{\mathcal{W}_2\le \radbase}(\mu^{\ell(t+1)},\Sigma^{\ell(t+1)},\widehat{\mu^\ell},\widehat{\Sigma^\ell})$
        \AlgComment{\#Project to Gelbrich ball}
        \STATE $(\mu^{h(t+1)},\Sigma^{h(t+1)}) \leftarrow \text{proj}_{\mathcal{W}_2\le \radabst}(\mu^{h(t+1)},\Sigma^{h(t+1)},\widehat{\mu^h},\widehat{\Sigma^h})$
        \AlgComment{\#Project to Gelbrich ball}
        \STATE $\exprob^{(t+1)} \leftarrow \cN(\mu^{\ell(t+1)},\Sigma^{\ell(t+1)})\otimes \cN(\mu^{h(t+1)},\Sigma^{h(t+1)})$ \AlgComment{\#Update joint environment $\exprob$}
        
    \ENDFOR
\UNTIL{convergence}

\textbf{Return:} $\lintau^\star \in \mathbb{R}^{h\times \ell}$, $\exprob^\star=\rho^{\star\ell}\otimes \rho^{\star h}$
\end{algorithmic}
\end{algorithm}

\newpage
\begin{algorithm}[H]
\caption{Empirical \textsc{DiRoCA} for linear models}\label{alg:empirical_algo}
\begin{algorithmic}[1]
\REQUIRE $\cS^\ell,\cS^h,\intervsetbase,\intervsetabst,\omega:\intervsetbase\to \intervsetabst$, samples from $\prob_{\scmbase_\iota}(\enbase),\prob_{\scmabst_{\omega(\iota)}}(\enabst)$

\STATE $\widehat{\exprobbase}=\frac{1}{N_\ell}\sum_{i=1}^{N_\ell}\delta_{\widehat{u^\ell}_i}\leftarrow \mixbase^{-1}\enbase$
\AlgComment{\#Abduction for $\scmbase$}

\STATE $\widehat{\exprobabst}=\frac{1}{N_h}\sum_{i=1}^{N_h}\delta_{\widehat{u^h}_i}\leftarrow \mixabst^{-1}\enabst$
\AlgComment{\#Abduction for $\scmabst$}

\STATE $\widehat{\exprob}\leftarrow \widehat{\exprobbase}\otimes \widehat{\exprobabst}$
\AlgComment{\#Joint environment}

\STATE $\radprod \leftarrow$ from Theorem~\ref{thm:ca_product_concentration_emp} \AlgComment{\#Robustness radius}

\STATE \textbf{Initialize:} $\lintau^{(0)},\Theta_\ell^{(0)},\Theta_h^{(0)}$
\STATE $\rho^{\ell(0)}=\frac{1}{N_\ell}\sum_{i=1}^{N_\ell}\delta_{\widehat{u}^\ell_i+\theta^{\ell(0)}_i}$
\STATE $\rho^{h(0)}=\frac{1}{N_h}\sum_{i=1}^{N_h}\delta_{\widehat{u}^h_i+\theta^{h(0)}_i}$
\STATE $\exprob^{(0)}\leftarrow \rho^{\ell(0)}\otimes \rho^{h(0)}$

\REPEAT
    \FOR{$k_{\min}$ steps}
        \STATE $\lintau^{(t+1)} \leftarrow \lintau^{(t)} - \eta\,\nabla_{\lintau}F(\lintau^{(t)},\Theta_\ell^{(t)},\Theta_h^{(t)})$
        \AlgComment{\#Update abstraction $\lintau$}
    \ENDFOR

    \FOR{$k_{\max}$ steps}
        \STATE $\Theta_\ell^{\text{temp}} \leftarrow \Theta_\ell^{(t)} + \eta\,\nabla_{\Theta_\ell}F(\lintau^{(t+1)},\Theta_\ell^{(t)},\Theta_h^{(t)})$
        \STATE $\Theta_\ell^{(t+1)} \leftarrow \text{proj}_{\|\cdot\|_F \le \radbase\sqrt{N_\ell}}(\Theta_\ell^{\text{temp}})$
        \AlgComment{\#Project onto Frobenius norm ball}

        \STATE $\Theta_h^{\text{temp}} \leftarrow \Theta_h^{(t)} + \eta\,\nabla_{\Theta_h}F(\lintau^{(t+1)},\Theta_\ell^{(t)},\Theta_h^{(t)})$
        \STATE $\Theta_h^{(t+1)} \leftarrow \text{proj}_{\|\cdot\|_F \le \radabst\sqrt{N_h}}(\Theta_h^{\text{temp}})$
        \AlgComment{\#Project onto Frobenius norm ball}

        \STATE $\rho^{\ell(t+1)} \leftarrow \frac{1}{N_\ell}\sum_{i=1}^{N_\ell}\delta_{\widehat{u}^\ell_i+\theta^{\ell(t+1)}_i}$
        \STATE $\rho^{h(t+1)} \leftarrow \frac{1}{N_h}\sum_{i=1}^{N_h}\delta_{\widehat{u}^h_i+\theta^{h(t+1)}_i}$
        \STATE $\exprob^{(t+1)} \leftarrow \rho^{\ell(t+1)}\otimes \rho^{h(t+1)}$ \AlgComment{\#Update joint environment $\exprob$}
    \ENDFOR
\UNTIL{convergence}

\textbf{Return:} $\lintau^\star\in \bR^{h\times \ell}$, $\exprob^\star$
\end{algorithmic}
\end{algorithm}

\begin{algorithm}[H]
\caption{General \textsc{DiRoCA} $~~~~~~~~~~~~~~~~~~~~~~~~~~~~~~~~~~~~~~~~~~~~~~~~~~~~~~~~~~~~~~~~~~~~~~~~~~~~~~~~~~~~~~~~~~~~~~~~~~~~~~~~~~~~~~~~~~~~~~~~~~~~~~~[d \in \{\ell, h\}$]}\label{alg:general_diroca}
\begin{algorithmic}[1]

\REQUIRE $\omega, \mathcal{I}^d, \scmdag_{\scmsimple^d},
\{\mathbf{X}_\iota^d \in \mathbb{R}^{N \times d}\}_{\iota \in \mathcal{I}^d}, \varepsilon_d$ from Thm.~\ref{thm:ca_product_concentration_emp}.

\REQUIRE Hyperparameters: $\eta_\tau,\eta_\theta$ (learning rates), $N$ (batch size), $k_{\min},k_{\max}$ (inner steps)

\STATE \textit{// Step 1: Abduction and initialization}
\STATE $\mathbf{U}^d \leftarrow \text{Abduct}(\mathbf{X}^d, \mathcal{M}^d)$
\AlgComment{\#Abduct observational noise}

\STATE $\widehat{\rho^d} \leftarrow \frac{1}{N}\sum_{i=1}^{N}\delta_{u^d_i}$
\AlgComment{\#Define empirical marginals}

\STATE $\widehat{\exprob} \leftarrow \widehat{\rho^\ell}\otimes \widehat{\rho^h}$
\AlgComment{\#Joint environment}

\STATE \textit{// Step 2: Define constraints}
\STATE $r_d \leftarrow \varepsilon_d\sqrt{N}$
\AlgComment{\#Frobenius budgets}

\STATE \textit{// Step 3: Adversarial optimization}
\STATE \textbf{Initialize:} $\lintau^{(0)}$, $\Theta_\ell^{(0)} \leftarrow \mathbf{0}$, $\Theta_h^{(0)} \leftarrow \mathbf{0}$
\STATE $\rho^{d(0)} \leftarrow \frac{1}{N}\sum_{i=1}^{N}\delta_{u^d_i+\theta^{d(0)}_i}$
\STATE $\exprob^{(0)} \leftarrow \rho^{\ell(0)}\otimes \rho^{h(0)}$

\REPEAT

  \STATE \textit{// Minimization step (update abstraction)}
  \FOR{$k_{\min}$ steps}
    \STATE $\mathcal{J}(\lintau,\mathbf{\Theta}) \coloneq
    \mathbb{E}_{\iota}\!\left[\left\|
      \lintau(\mathbf{D}_\ell^{(\iota)}+\mathbf{U}^\ell+\Theta_\ell)^\top
      -(\mathbf{D}_h^{(\omega(\iota))}+\mathbf{U}^h+\Theta_h)
    \right\|_F^2\right]$
    \STATE $\lintau^{(t+1)} \leftarrow \lintau^{(t)} - \eta_\tau \nabla_{\lintau}\mathcal{J}(\lintau^{(t)},\mathbf{\Theta}^{(t)})$
  \ENDFOR

  \STATE \textit{// Maximization step (update joint environment)}
  \FOR{$k_{\max}$ steps}
    \STATE $\Theta_\ell^{\text{temp}} \leftarrow \Theta_\ell^{(t)} + \eta_\theta \nabla_{\Theta_\ell}\mathcal{J}(\lintau^{(t+1)},\mathbf{\Theta}^{(t)})$
    \STATE $\Theta_h^{\text{temp}} \leftarrow \Theta_h^{(t)} + \eta_\theta \nabla_{\Theta_h}\mathcal{J}(\lintau^{(t+1)},\mathbf{\Theta}^{(t)})$

    \STATE $\Theta_\ell^{(t+1)} \leftarrow \text{Proj}_{\|\cdot\|_F\le r_\ell}(\Theta_\ell^{\text{temp}})$
    \STATE $\Theta_h^{(t+1)} \leftarrow \text{Proj}_{\|\cdot\|_F\le r_h}(\Theta_h^{\text{temp}})$
    \AlgComment{\#Project onto ambiguity set}

    \STATE $\rho^{\ell(t+1)} \leftarrow \frac{1}{N}\sum_{i=1}^{N}\delta_{u^\ell_i+\theta^{\ell(t+1)}_i}$
    \STATE $\rho^{h(t+1)} \leftarrow \frac{1}{N}\sum_{i=1}^{N}\delta_{u^h_i+\theta^{h(t+1)}_i}$
    \AlgComment{\#Update marginals}

    \STATE $\exprob^{(t+1)} \leftarrow \rho^{\ell(t+1)}\otimes \rho^{h(t+1)}$
    \AlgComment{\#Update joint environment}
  \ENDFOR

\UNTIL{convergence} \\

\textbf{Return:} $\lintau^\star \in \bR^{h\times \ell}$, $\exprob^\star$

\end{algorithmic}
\end{algorithm}
\end{document}